\def\eqref#1{equation~\ref{#1}}
\def\1{\bm{1}}
\def\rs{{\textnormal{s}}}
\def\rx{{\textnormal{x}}}
\def\ry{{\textnormal{y}}}
\def\rvu{{\mathbf{i}}}
\def\rvu{{\mathbf{u}}}
\def\rvv{{\mathbf{v}}}
\def\rvx{{\mathbf{x}}}
\def\rvy{{\mathbf{y}}}
\def\ervx{{\textnormal{x}}}
\def\ervy{{\textnormal{y}}}
\def\vmu{{\bm{\mu}}}
\def\vg{{\bm{g}}}
\def\vp{{\bm{p}}}
\def\vt{{\bm{t}}}
\def\vu{{\bm{u}}}
\def\vv{{\bm{v}}}
\def\vw{{\bm{w}}}
\def\vx{{\bm{x}}}
\def\mA{{\bm{A}}}
\def\mH{{\bm{H}}}
\def\mI{{\bm{I}}}
\def\mP{{\bm{P}}}
\def\mLambda{{\bm{\Lambda}}}
\DeclareMathAlphabet{\mathsfit}{\encodingdefault}{\sfdefault}{m}{sl}
\SetMathAlphabet{\mathsfit}{bold}{\encodingdefault}{\sfdefault}{bx}{n}
\def\sI{{\mathbb{I}}}
\newcommand{\E}{\mathbb{E}}
\newcommand{\R}{\mathbb{R}}
\newcommand{\Var}{\mathrm{Var}}
\newcommand{\Cov}{\mathrm{Cov}}
\newcommand{\normltwo}{L^2}
\newtheorem{definition}{Definition}
\newtheorem{lemma}{Lemma}
\newtheorem{theorem}{Theorem}
\newtheorem{cor}{Corollary}[theorem]
\providecommand{\customgenericname}{}
\newcommand{\newcustomtheorem}[2]{%
	\newenvironment{#1}[1]
	{%
		\renewcommand\customgenericname{#2}%
		\renewcommand\theinnercustomgeneric{##1}%
		\innercustomgeneric
	}
	{\endinnercustomgeneric}
}
\newcommand{\gi}{{\vg}_i(\w)}
\newcommand{\gj}{{\vg}_j(\w)}
\newcommand{\kk}{{\kappa}}
\newcommand{\x}{{\vx}}
\newcommand{\p}{{\vp}}
\newcommand{\w}{{\vw}}
\newcommand{\g}{{\vg}(\w)}
\newcommand{\hg}{\hat{{\vg}}(\w)}
\newcommand{\hgi}{\hat{{\vg}}_i(\w)}
\newcommand{\tg}{\tilde{{\vg}}(\w)}
\newcommand{\bv}{{\vv}}
\newcommand{\bu}{{\vu}}
\newcommand{\EE}{\E}
\newcommand{\sumn}{\sum_{i=1}^n}
\newcommand{\sumnn}{\sum_{j=1}^n}
\newcommand{\bmu}{{\vmu}}
\newcommand{\nablaw}{{\nabla_\w}}
\newcommand{\ndex}{\{1,\dots,n\}}
\newcommand{\parrow}{\,{\buildrel P \over \rightarrow}\,}
\title{Directional Analysis of Stochastic Gradient Descent via von Mises-Fisher Distributions in Deep Learning}
\author{Cheolhyoung Lee 
\\
Department of Mathematical Sciences, KAIST\\
Center for Superintelligence, SNU\\
\texttt{bloodwass@kaist.ac.kr}
\And
Kyunghyun Cho\\
New York University \\
Facebook AI Research \\ 
CIFAR Azrieli Global Scholar \\
\texttt{kyunghyun.cho@nyu.edu} \\
\And
Wanmo Kang\\
Department of Mathematical Sciences, KAIST\\
Center for Superintelligence, SNU\\ 
\texttt{wanmo.kang@kaist.edu}\\
}
\begin{document}

\maketitle

\begin{abstract}
Although stochastic gradient descent (SGD) is a driving force behind the recent success of deep learning, our understanding of its dynamics in a high-dimensional parameter space is limited. In recent years, some researchers have used the stochasticity of minibatch gradients, or the signal-to-noise ratio, to better characterize the learning dynamics of SGD. Inspired by these work, we here analyze SGD from a geometrical perspective by inspecting the stochasticity of the norms and directions of minibatch gradients. We propose a model of the directional concentration for minibatch gradients through von Mises-Fisher distribution and show that the directional uniformity of minibatch gradients increases over the course of SGD. We empirically verify our result using deep convolutional networks and observe a higher correlation between the gradient stochasticity and the proposed directional uniformity than that against the gradient norm stochasticity, suggesting that the directional statistics of minibatch gradients is a major factor behind SGD. 
\end{abstract}

\section{Introduction}

Stochastic gradient descent (SGD) has been a driving force behind the recent success of deep learning. Despite a series of work on improving SGD by incorporating the second-order information of the objective function~\citep{roux2008topmoumoute,martens2010deep,Dauphin,martens2015optimizing,desjardins2015natural}, SGD is still the most widely used optimization algorithm for training a deep neural network. The learning dynamics of SGD, however, has not been well characterized beyond that it converges to an extremal point~\citep{bottou-98x} due to the non-convexity and high-dimensionality of a usual objective function used in deep learning.

Gradient stochasticity, or the signal-to-noise ratio (SNR) of the stochastic gradient, has been proposed as a tool for analyzing the learning dynamics of SGD. \citet{Tishby} identified two phases in SGD based on this. In the first phase, ``drift phase'', the gradient mean is much higher than its standard deviation, during which optimization progresses rapidly. This drift phase is followed by the ``diffusion phase'', where SGD behaves similarly to Gaussian noise with very small means. Similar observations were made by \citet{Li} and \citet{Chee} who have also divided the learning dynamics of SGD into two phases.
 
\citet{Tishby} have proposed that such phase transition is related to information compression. Unlike them, we notice that there are two aspects to the gradient stochasticity. One is the $\normltwo$ norm of the minibatch gradient (the norm stochasticity), and the other is the directional balance of minibatch gradients (the directional stochasticity). SGD converges or terminates when either the norm of the minibatch gradient vanishes to zeros, or when the angles of the minibatch gradients are uniformly distributed and their non-zero norms are close to each other. That is, the gradient stochasticity, or the SNR of the stochastic gradient, is driven by both of these aspects, and it is necessary for us to investigate not only the holistic SNR but also the SNR of the minibatch gradient norm and that of the minibatch gradient angles.

In this paper, we use a von Mises-Fisher (vMF hereafter) distribution, which is often used in directional statistics~\citep{Mardia}, and its concentration parameter $\kappa$ to characterize the directional balance of minibatch gradients and understand the learning dynamics of SGD from the perspective of directional statistics of minibatch gradients. We prove that SGD increases the directional balance of minibatch gradients. We empirically verify this with deep convolutional networks with various techniques, including batch normalization~\citep{BatchNorm} and residual connections~\citep{He}, on MNIST and CIFAR-10~\citep{krizhevsky2009learning}. Our empirical investigation further reveals that the proposed directional stochasticity is a major drive behind the gradient stochasticity compared to the norm stochasticity, suggesting the importance of understanding the directional statistics of the stochastic gradient.     

\paragraph{Contribution}

We analyze directional stochasticity of the minibatch gradients via angles as well as the concentration parameter of the vMF distribution. Especially, we theoretically show that the directional uniformity of the minibatch gradients modeled by the vMF distribution increases as training progresses, and verify this by experiments. In doing so, we introduce gradient norm stochasticity as the ratio of the standard deviation of the minibatch gradients to their expectation and theoretically and empirically show that this gradient norm stochasticity decreases as the batch size increases. 

\paragraph{Related work}

Most studies about SGD dynamics have been based on two-phase behavior \citep{Tishby, Li, Chee}. \citet{Li} investigated this behavior by considering a shallow neural network with residual connections and assuming the standard normal input distribution. They showed that SGD-based learning under these setups has two phases; search and convergence phases. \citet{Tishby} on the other hand investigated a deep neural network with $\tanh$ activation functions, and showed that SGD-based learning has drift and diffusion phases. They have also proposed that such SNR transition (drift + diffusion) is related to the information transition divided into empirical error minimization and representation compression phases. However, Saxe et al. (2018) have reported that the information transition is not generally associated with the SNR transition with ReLU \citep{ReLU} activation functions. \citet{Chee} instead looked at the inner product between successive minibatch gradients and presented transient and stationary phases. 

Unlike our work here, the experimental verification of the previous work conducted under limited settings -- the shallow network \citep{Li}, the specific activation function \citep{Tishby}, and only MNIST dataset \citep{Tishby, Chee} -- that conform well with their theoretical assumptions. Moreover, their work does not offer empirical result about the effect of the latest techniques including both batch normalization \citep{BatchNorm} layers and residual connections \citep{He}.     

\section{Preliminaries}

\paragraph{Norms and Angles} 
Unless explicitly stated, a norm refers to $\normltwo$ norm. $\|\cdot\|$ and $\left<\cdot,\cdot\right>$ thus correspond to $\normltwo$ norm and the Euclidean inner product on $\R^d$, respectively. We use $\rx_n\Rightarrow \rx$ to indicate that ``a random variable $\rx_n$ converges to $\rx$ in distribution.'' Similarly, $\rx_n\parrow\rx$ means convergence in probability. An angle $\theta$ between $d$-dimensional vectors $\bu$ and $\bv$ is defined by 
$\theta =  \frac{180}{\pi}\cos^{-1}\left(\frac{\left<\bu,\bv\right>}{\|\bu\|\|\bv\|}\right).$

\paragraph{Loss functions} 
A loss function of a neural network is written as
$f(\w)=\frac{1}{n}\sum_{i=1}^n f_i(\w)$, where $\w\in\R^d$ is a trainable parameter. $f_i$ is ``a per-example loss function'' computed on the $i$-th data point. We use $\sI$ and $m$ to denote a minibatch index set and its batch size, respectively. Further, we call $f_{\sI}(\w)=\frac{1}{m}\sum_{i\in \sI}f_i(\w)$
``a minibatch loss function given $\sI$''. In {\bf Section \ref{sec:gns}}, we use $\vg_i(\w)$ and $\hat\vg(\w)$ to denote $-\nablaw f_i(\w)$ and $-\nablaw f_\sI(\w)$, respectively. In {\bf Section \ref{sec:vmf}}, the index $i$ is used for the corresponding minibatch index set $\sI_i$. For example, the negative gradient of $f_{\sI_i}(\w)$ is written as $\hat\vg_i(\w)$.   During optimization, we denote a parameter $\w$ at the $i$-th iteration in the $t$-th epoch as $\w_t^i$, and $\w_0^0$ is an initial parameter. We use $n_b$ to refer to the number of minibatches in a single epoch.  
  
\begin{figure}[t]
 	\centering
 	\includegraphics[width=0.3\linewidth]{./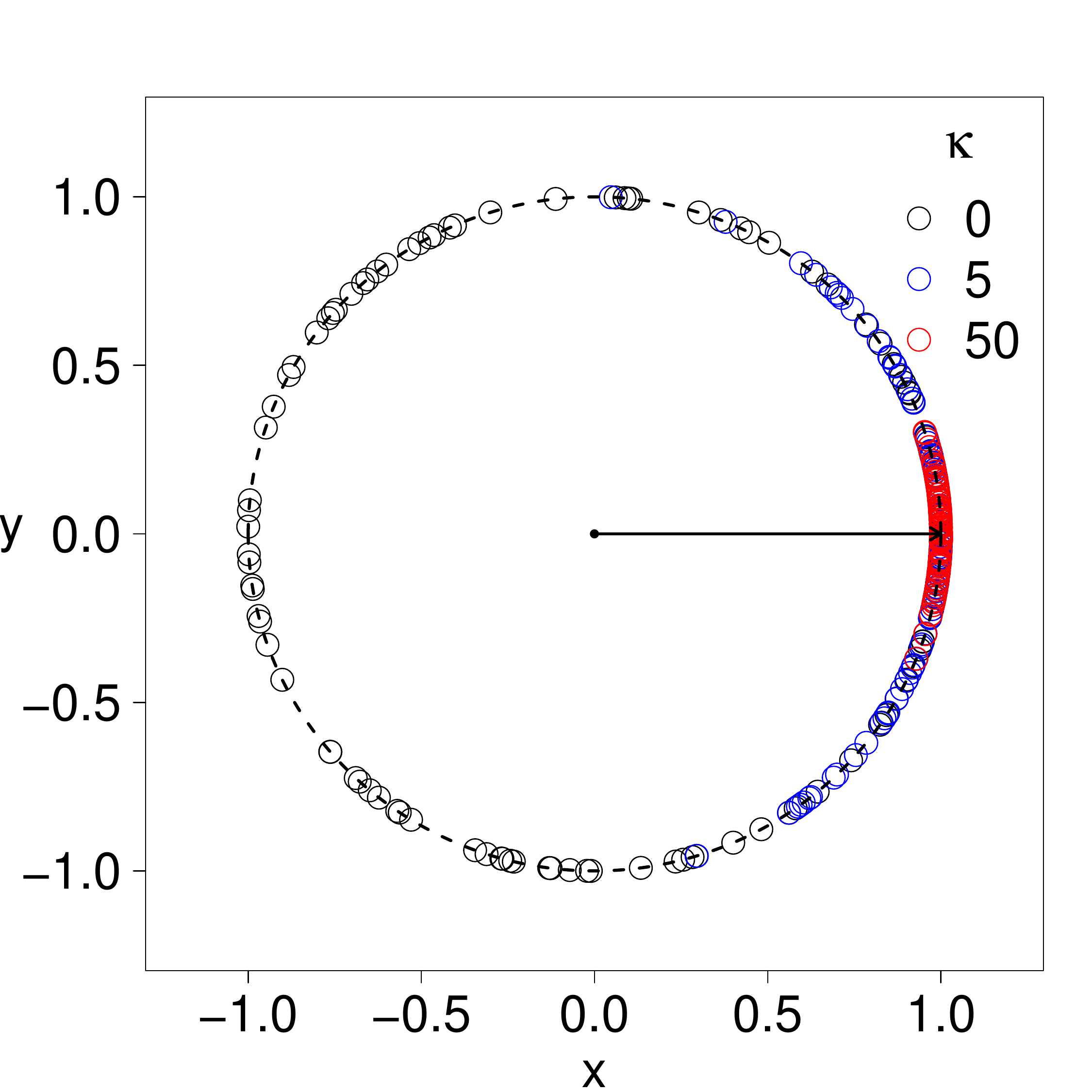}
 	\caption{Characteristics of the vMF distribution in a $2$-dimensional space. 100 random samples are drawn from $\text{vMF}(\bmu,\kk)$ where $\bmu =(1,0)^\top$ and $\kk=\{0, 5, 50\}$.}\label{subfig:VMF_wrtk}
\end{figure}  

\paragraph{von Mises-Fisher Distribution}  
We use the von Mises-Fisher (vMF) distribution to model the directions of vectors. The definition of the vMF distribution is as follows:
\begin{definition}{\textbf{(von Mises-Fisher Distribution, \citet{Banerjee})}}\label{def:VMF}
	The pdf of the {\rm{vMF}$(\bmu,\kappa)$} is given by
	\[
	f_d(\x;\bmu ,\kappa)=C_d(\kappa)\exp(\kappa\bmu ^\top \x)
	\]
	on the hypersphere $S^{d-1}\subset \R^d$. Here, the concentration parameter $\kappa$ determines how the samples from this distribution are concentrated on the mean direction $\bmu $ and $C_d(\kappa)$ is constant determined by $d$ and $\kappa$.  
\end{definition}

If $\kk$ is zero, then it is a uniform distribution on the unit hypersphere, and as $\kk\rightarrow\infty$, it becomes a point mass on the unit hypersphere (Figure \ref{subfig:VMF_wrtk}). 
The maximum likelihood estimates for $\bmu$ and $\kk$ are 
$\hat{\bmu}= \frac{\sum_{i=1}^{n} \x _i}{\|\sum_{i=1}^{n} \x _i\|}$
and
${\hat{\kk}} \approx \frac{\bar{r}(d-\bar{r}^2)}{1-\bar{r}^2}$ 
where ${\x}_i$'s are random samples from the {\rm vMF} distribution and $\bar{r}=\frac{\|\sum_{i=1}^n\x_i\|}{n}$. The formula for $\hat{\kk}$ is approximate since the exact computation is intractable~\citep{Banerjee}.

\section{Theoretical Motivation}

\subsection{Analysis of the Gradient Norm Stochasticity}\label{sec:gns}

It is a usual practice for SGD to use a minibatch gradient $\hg = -\nablaw f_{\sI}(\w)$ instead of a full batch gradient $\g = -\nablaw f(\w)$. The minibatch index set $\sI$ is drawn from {\ndex} randomly. 
$\hg$ satisfies $\E[\hg]=\g$ and  $ \Cov(\hg, \hg) \approx \frac{1}{mn}\sum_{i=1}^n\gi\gi^\top$
for $n\gg m$ where $n$ is the number of full data points and $\gi=-\nablaw f_i(\w)$ \citep{Hoffer}.      
As the batch size $m$ increases, the randomness of $\hg$ decreases. Hence $\EE\|\hg\|$ tends to $\|\g\|$, and $\Var(\|\hg\|)$, which is the variance of the norm of the minibatch gradient, vanishes. The convergence rate analysis is as the following: 
\begin{theorem}\label{thm:stats_mgrad}
	Let $\hg$ be a minibatch gradient induced from the minibatch index set $\sI$ of batch size $m$ from $\ndex$ and suppose $\gamma=\max_{i,j\in\ndex}|\left<\gi,\gj\right>|$. Then
	$$0 \leq \E\|\hg\|-\|\g\|\leq\frac{2(n-m)}{m(n-1)}\times \frac{\gamma}{\E\|\hg\| + \|\g\|} \leq \frac{(n-m)\gamma}{m(n-1)\|\g\|}$$
	and
	$$\Var(\|\hg\|)\leq \frac{2(n-m)}{m(n-1)}\gamma~.$$
	Hence,
	\begin{equation}\label{eq:GNS_UB}
	\frac{\sqrt{\Var(\|\hg\|)}}{\E\|\hg\|} \le \sqrt{\frac{2(n-m)}{m(n-1)}\times \frac{\gamma}{\|\g\|^2}}~.
	\end{equation}
\end{theorem}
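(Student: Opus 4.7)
The plan is to reduce all three assertions to a single master second-moment bound of the form $\E\|\hg\|^2-\|\g\|^2\le\frac{2(n-m)\gamma}{m(n-1)}$. To obtain it, I expand $\|\hg\|^2=m^{-2}\sum_{i,j\in\sI}\langle\gi,\gj\rangle$ and take expectation over the uniform size-$m$ subset $\sI\subset\{1,\ldots,n\}$, using the elementary without-replacement inclusion probabilities $\PP(i\in\sI)=m/n$ and $\PP(\{i,j\}\subset\sI)=m(m-1)/[n(n-1)]$ for $i\ne j$. Writing $A=\sum_{k=1}^n\|\vg_k(\w)\|^2$ and $B=\sum_{k\ne l}\langle\vg_k(\w),\vg_l(\w)\rangle$, this produces a closed form for $\E\|\hg\|^2$ linear in $A$ and $B$; subtracting $\|\g\|^2=n^{-2}(A+B)$ and collecting terms yields the clean identity
\[
\E\|\hg\|^2 - \|\g\|^2 \;=\; \frac{n-m}{m(n-1)}\Bigl(\tfrac{1}{n}A-\|\g\|^2\Bigr).
\]
The definition $\gamma=\max_{i,j}|\langle\gi,\gj\rangle|$ forces $\tfrac{1}{n}A\le\gamma$ and (via $\|\g\|^2=n^{-2}\sum_{k,l}\langle\vg_k,\vg_l\rangle$) also $\|\g\|^2\le\gamma$, so the parenthesized quantity is bounded by $2\gamma$ in absolute value, supplying the master inequality.

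With this in hand, the three claims follow from two uses of Jensen's inequality. Applying Jensen to the convex map $\vx\mapsto\|\vx\|$ together with $\E[\hg]=\g$ gives $\E\|\hg\|\ge\|\g\|$, the leftmost inequality of the first chain; squaring it gives $(\E\|\hg\|)^2\le\E\|\hg\|^2$. The difference-of-squares factorization
\[
(\E\|\hg\|-\|\g\|)(\E\|\hg\|+\|\g\|) \;=\; (\E\|\hg\|)^2 - \|\g\|^2 \;\le\; \E\|\hg\|^2 - \|\g\|^2
\]
then supplies the middle inequality, and $\E\|\hg\|+\|\g\|\ge 2\|\g\|$ gives the rightmost. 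For the variance, the same Jensen step yields $\Var(\|\hg\|)=\E\|\hg\|^2-(\E\|\hg\|)^2\le\E\|\hg\|^2-\|\g\|^2$, so the master bound transfers directly. Finally, \eqref{eq:GNS_UB} is immediate: take the square root of the variance bound and divide by $\E\|\hg\|\ge\|\g\|$.

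The main technical obstacle is the bookkeeping in the first step --- computing $\E\|\hg\|^2$ exactly under without-replacement sampling and simplifying so that the finite-population correction factor $\tfrac{n-m}{m(n-1)}$ emerges cleanly; everything downstream is essentially two routine Jensen applications and a difference-of-squares manipulation. As a sanity check, the limit $m\to n$ forces the correction factor to vanish, recovering $\hg=\g$ deterministically with $\E\|\hg\|=\|\g\|$ and $\Var(\|\hg\|)=0$, in agreement with the full-batch case.
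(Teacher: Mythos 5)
Your proof is correct and follows essentially the same route as the paper's: compute $\E\|\hg\|^2$ under without-replacement sampling, bound $\E\|\hg\|^2-\|\g\|^2$ by $\tfrac{2(n-m)}{m(n-1)}\gamma$, then finish with two applications of Jensen's inequality and a difference-of-squares factorization. Your only (minor, and slightly cleaner) deviation is to first extract the exact identity $\E\|\hg\|^2-\|\g\|^2=\tfrac{n-m}{m(n-1)}\bigl(\tfrac{1}{n}A-\|\g\|^2\bigr)$ before bounding --- which also shows the nonnegativity of that difference directly and would in fact yield $\gamma$ in place of $2\gamma$ --- whereas the paper bounds the diagonal and full-sum terms separately.
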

\begin{proof}
	See \textbf{Supplemental \ref{sec:normSNR}}.
\end{proof}

\begin{figure}[t]
	\begin{subfigure}{0.45\linewidth}
		\centering
		\includegraphics[width=\linewidth]{./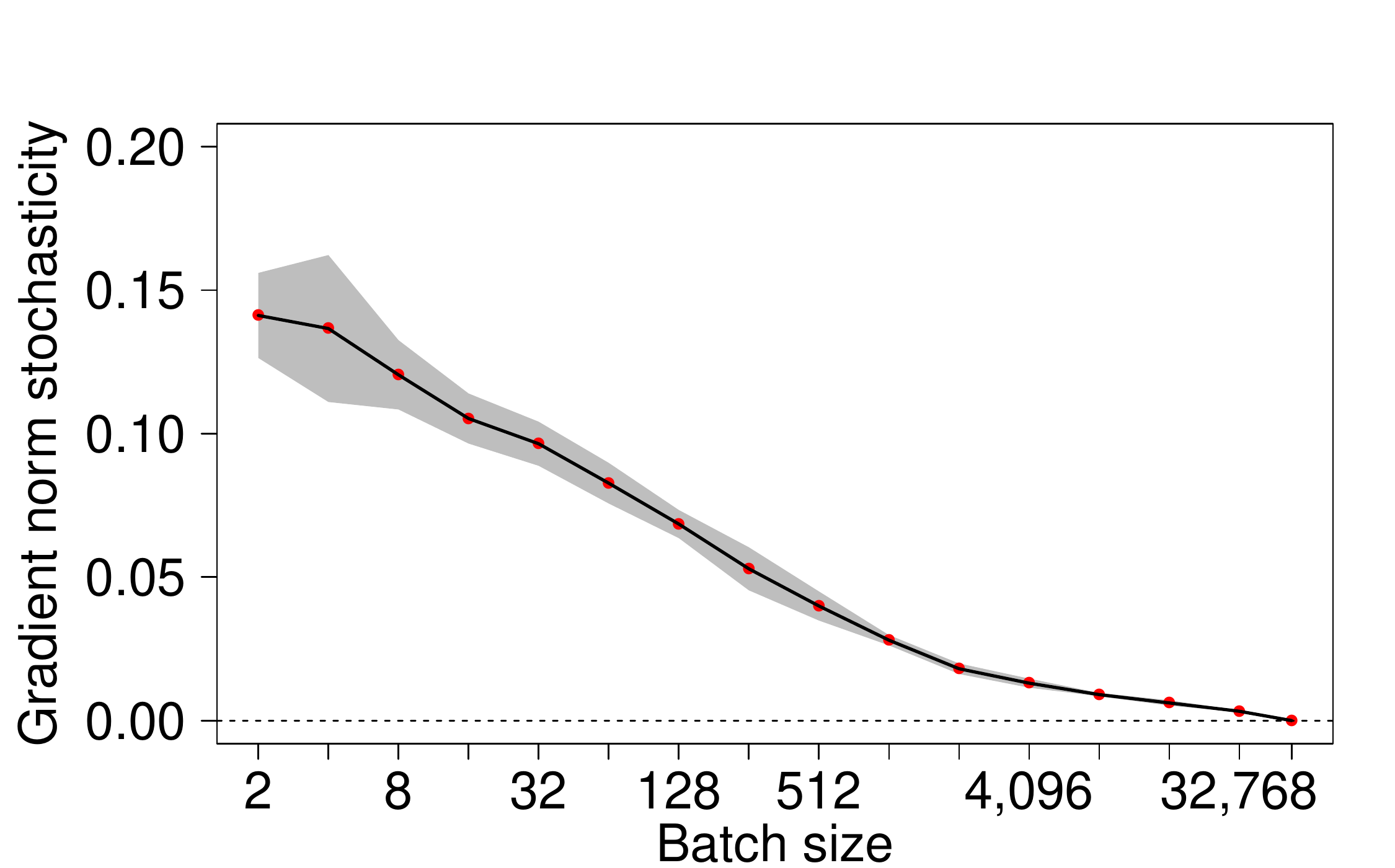}	
		\caption{$\sqrt{\Var(\|\hg\|)}/\E\|\hg\|$}\label{subfig:gnorm2_stats}
	\end{subfigure}
	\begin{subfigure}{0.50\linewidth}
		\centering
		\includegraphics[width=\linewidth]{./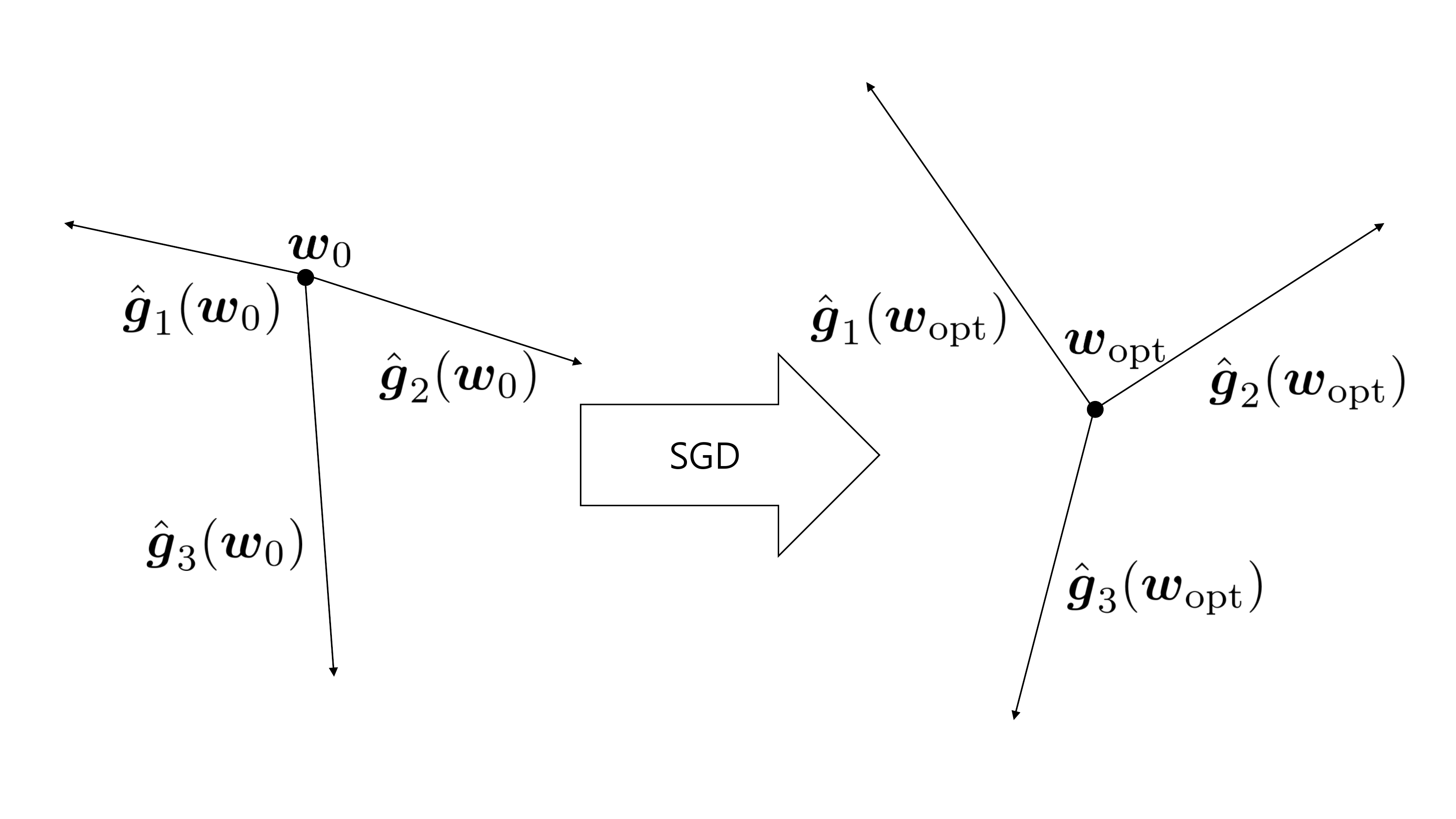}
		\caption{Directional balance in SGD iterations}\label{subfig:conj}
	\end{subfigure}
	\caption{The directions of minibatch gradients become more important than their lengths as the batch size increases. \subref{subfig:gnorm2_stats} The gradient norm stochasticity of $\hg$ with respect to various batch sizes at $5$ random points $\w$ with mean (black line) and mean$\pm$std.(shaded area) in a log-linear scale; \subref{subfig:conj} If the gradient norm stochasticity is sufficiently low, then the directions of $\hgi$'s need to be balanced to satisfy $\sum_{i=1}^{{3}} \hgi\approx 0$.}\label{fig:motivation}
\end{figure}

According to \textbf{Theorem \ref{thm:stats_mgrad}}, a large batch size $m$ reduces the variance of $\|\hg\|$ centered at $\EE\|\hg\|$ with convergence rate $O(1/m)$. We empirically verify this by estimating the gradient norm stochasticity at  random points while varying the minibatch size, using a fully-connected neural network (\textbf{FNN}) with MNIST, as shown in 
Figure \ref{subfig:gnorm2_stats} (see \textbf{Supplemental \ref{sec:exp}} for more details.) 

This theorem however only demonstrate that the gradient norm stochasticity is (l.h.s. of (\ref{eq:GNS_UB})) is low at random initial points.
It may blow up after SGD updates, since the upper bound (r.h.s. of (\ref{eq:GNS_UB})) is inversely proportional to $\|\g\|$. This implies that the learning dynamics and convergence of SGD, measured in terms of the vanishing gradient, i.e., $\sum_{i=1}^{{n_b}} \hgi\approx 0$, is not necessarily explained by the vanishing norms of minibatch gradients, but rather by the balance of the directions of $\hgi$'s, which motivates our investigation of the directional statistics of minibatch gradients. See Figure~\ref{subfig:conj} as an illustration.

\subsection{Uniformity measurement via analysis of angles}

In order to investigate the directions of minibatch gradients and how they balance, we start from an angle between two vectors. First, we analyze an asymptotic behavior of angles between uniformly random unit vectors in a high-dimensional space.
\begin{theorem}
\label{thm:asymtotic_angle}
	Suppose that $\rvu$ and $\rvv$ are mutually independent $d$-dimensional uniformly random unit vectors. Then, \\ 
	$$\sqrt{d} \times \left(\frac{180}{\pi}\cos^{-1}\left<\rvu,\rvv\right>-90\right) \Rightarrow \mathcal{N}\left(0,\Big{(}\frac{180}{\pi}\Big{)}^2\right)$$ 
	as $d\rightarrow\infty$.	
\end{theorem}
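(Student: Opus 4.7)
The plan is to first collapse the joint distribution of $(\rvu,\rvv)$ onto a single scalar random variable using rotational invariance, then prove a CLT for that scalar, and finally pass through the smooth transformation $x \mapsto \frac{180}{\pi}\cos^{-1}(x) - 90$ via the delta method. Concretely, since the uniform distribution on $S^{d-1}$ is invariant under orthogonal transformations and $\rvu,\rvv$ are independent, the distribution of $\langle \rvu, \rvv\rangle$ is the same as the distribution of the first coordinate $\ru_1$ of a single uniform unit vector $\rvu$. So the theorem reduces to showing that
\[
\sqrt{d}\left(\frac{180}{\pi}\cos^{-1}(\ru_1) - 90\right) \Rightarrow \mathcal{N}\left(0, \left(\tfrac{180}{\pi}\right)^2\right).
\]

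Next, I would establish the scalar CLT $\sqrt{d}\,\ru_1 \Rightarrow \mathcal{N}(0,1)$. The cleanest route is the Gaussian representation: write $\rvu = \rvx/\|\rvx\|$ with $\rvx \sim \mathcal{N}(0, I_d)$, so that $\sqrt{d}\,\ru_1 = \rx_1 \cdot \sqrt{d}/\|\rvx\|$. By the strong law of large numbers applied to $\|\rvx\|^2/d = \tfrac{1}{d}\sum_{j=1}^d \rx_j^2$, the factor $\sqrt{d}/\|\rvx\|$ converges to $1$ almost surely, and $\rx_1 \sim \mathcal{N}(0,1)$; Slutsky's theorem then gives $\sqrt{d}\,\ru_1 \Rightarrow \mathcal{N}(0,1)$. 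In particular, $\ru_1 \parrow 0$.

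Now I apply the delta method to the function $g(x) = \tfrac{180}{\pi}\cos^{-1}(x) - 90$. Observing that $\cos^{-1}(x) = \tfrac{\pi}{2} - \sin^{-1}(x)$, we have $g(0) = 0$ and $g'(0) = -\tfrac{180}{\pi}$, so $g$ is smooth at $0$ with a nonzero derivative. Since $\sqrt{d}(\ru_1 - 0) \Rightarrow \mathcal{N}(0,1)$, the delta method yields
\[
\sqrt{d}\bigl(g(\ru_1) - g(0)\bigr) \Rightarrow g'(0)\cdot \mathcal{N}(0,1) = \mathcal{N}\!\left(0, \left(\tfrac{180}{\pi}\right)^2\right),
\]
which is exactly the claim (the sign of $g'(0)$ is absorbed by the symmetry of the Gaussian limit).

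There is no real obstacle here beyond being careful with the elementary manipulations: the only step that requires any thought is the derivation of $\sqrt{d}\,\ru_1 \Rightarrow \mathcal{N}(0,1)$, which I would carry out via the Gaussian representation and Slutsky's theorem rather than trying to expand the exact Beta-type density of $\ru_1$. Everything else (rotational reduction and delta method) is routine, so the proof should be short.
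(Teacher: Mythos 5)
Your proposal is correct and follows essentially the same route as the paper: represent uniform unit vectors as normalized Gaussians, use the strong law plus Slutsky's theorem to get $\sqrt{d}\left<\rvu,\rvv\right>\Rightarrow\mathcal{N}(0,1)$, and then apply the delta method to $\frac{180}{\pi}\cos^{-1}(\cdot)$ at $0$. The one genuine (if minor) difference is your preliminary reduction by rotational invariance to the first coordinate $\ru_1$ of a single uniform unit vector: this makes the key scalar limit slightly more elementary, since $\sqrt{d}\,\ru_1=\rx_1\cdot\sqrt{d}/\|\rvx\|$ with $\rx_1$ \emph{exactly} standard normal, so no central limit theorem is needed, whereas the paper keeps both vectors and invokes the CLT for $\frac{1}{\sqrt{d}}\sum_{i=1}^d\ervx_i\ervy_i$. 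Both arguments are valid and of comparable length; yours trades one classical limit theorem for a symmetry observation, which is a clean simplification but not a structurally different proof.
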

\begin{proof}
	See \textbf{Supplemental \ref{sec:angle}}.   
\end{proof}

\begin{figure}[t]
	\begin{subfigure}{0.33\linewidth} 
		\centering
		\includegraphics[width=\linewidth]{./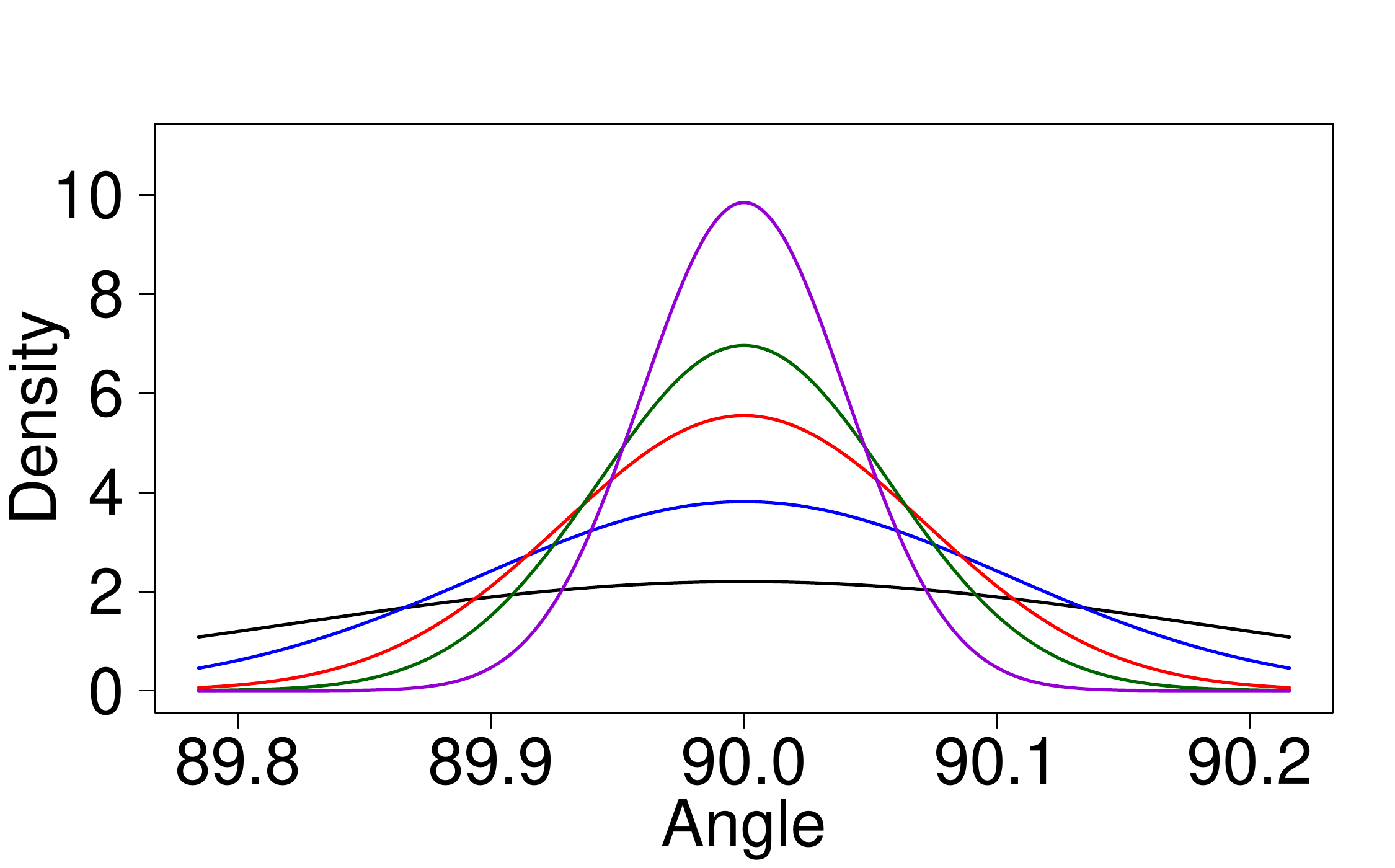}
		\caption{Asymptotic densities}\label{subfig:angle_twourv}
	\end{subfigure}
	\begin{subfigure}{0.33\linewidth}
		\centering
		\includegraphics[width=\linewidth]{./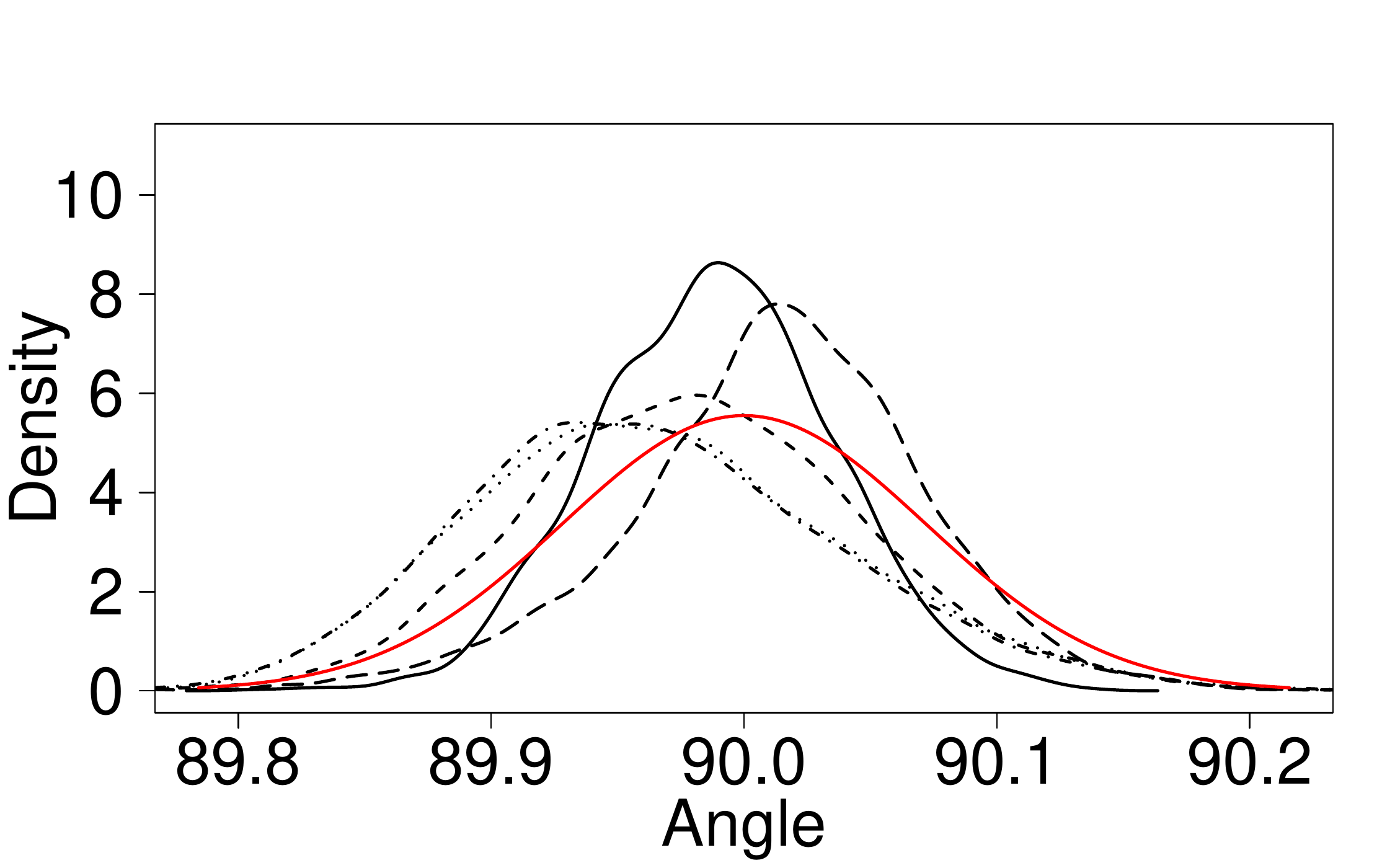}
		\caption{Densities at initial epochs}\label{subfig:anglehist_initial}
	\end{subfigure}
	\begin{subfigure}{0.33\linewidth}
		\centering
		\includegraphics[width=\linewidth]{./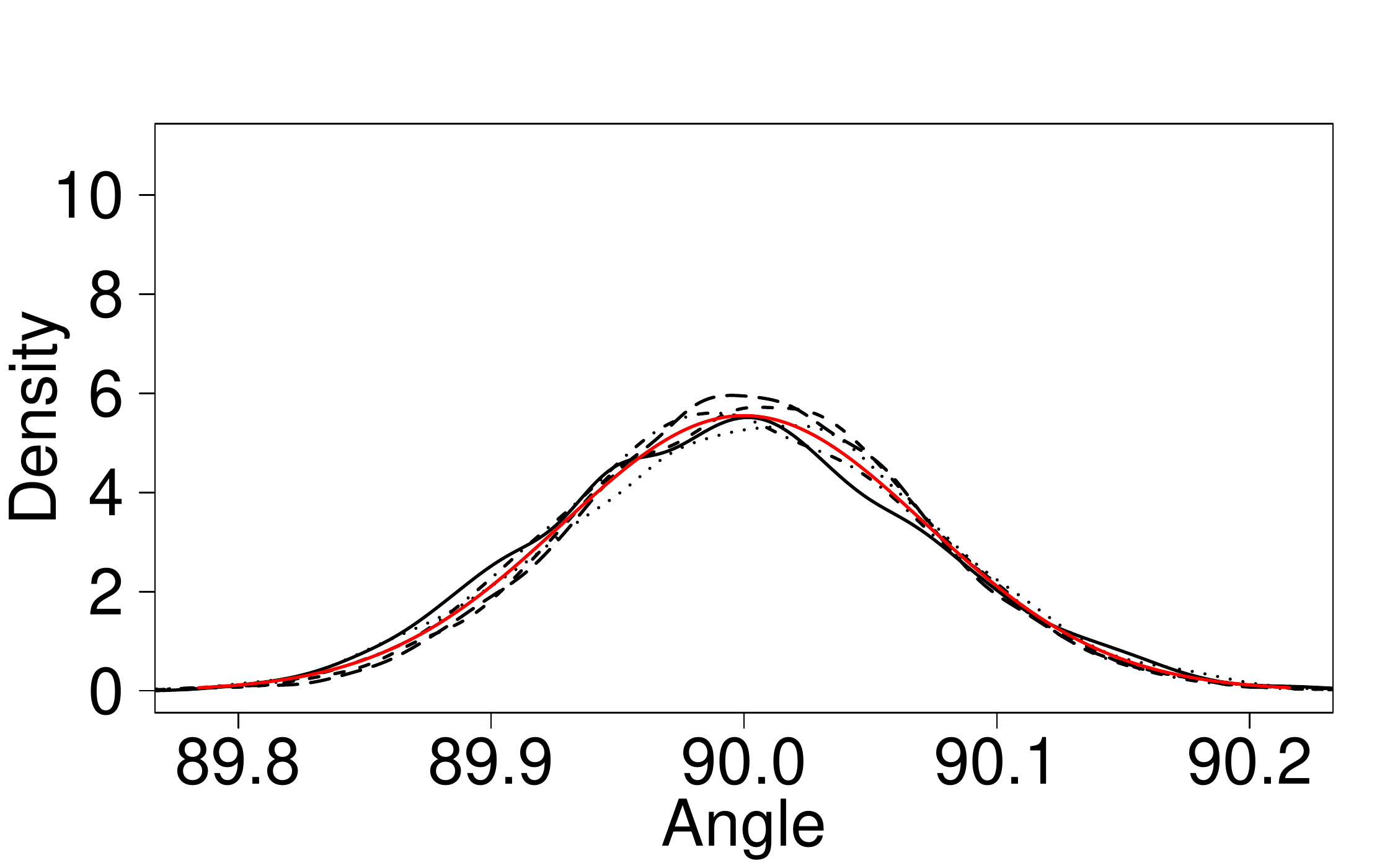}
		\caption{Densities after training}\label{subfig:anglehist_final}
	\end{subfigure}
		\begin{subfigure}{0.33\linewidth}
		\centering
		\includegraphics[trim=0cm 3.5cm 0cm 2cm, clip,width=\linewidth]{./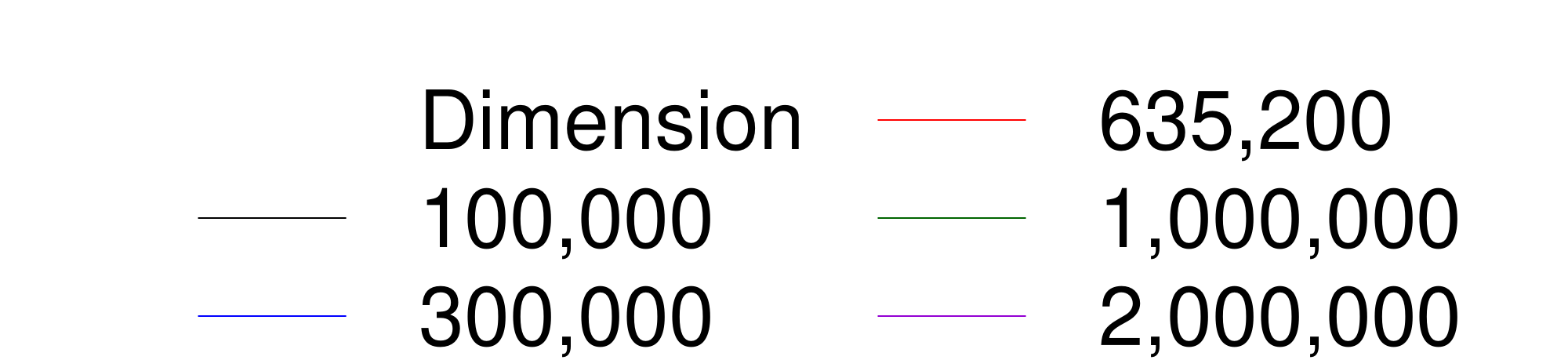}
	\end{subfigure}
	\begin{subfigure}{0.66\linewidth}
		\centering
		\includegraphics[trim=0cm 3.5cm 0cm 2cm, clip,width=0.5\linewidth]{./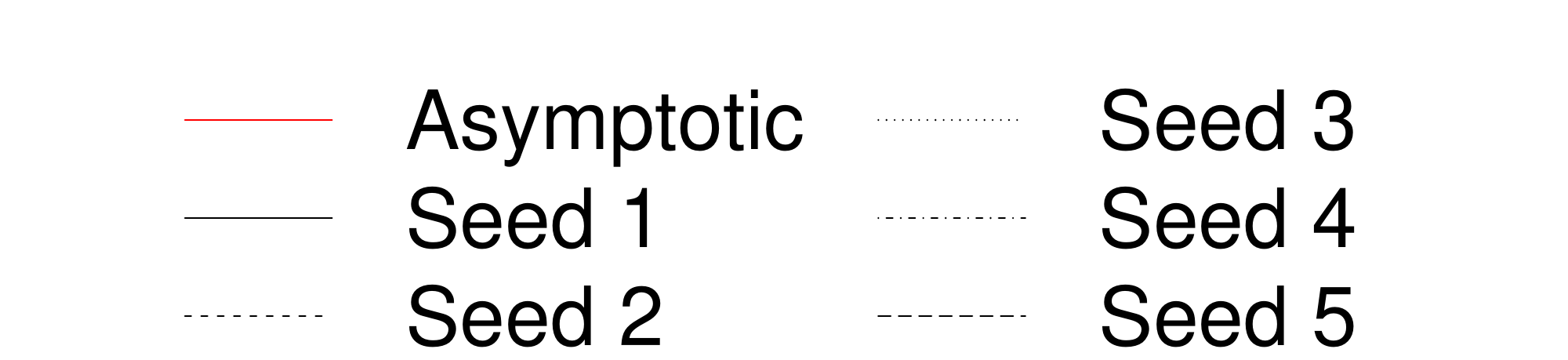}
	\end{subfigure}
	\caption{\subref{subfig:angle_twourv} Asymptotic angle densities (\ref{eq:asymtotic_dist}) of $\theta(\rvu,\rvv)=\frac{180}{\pi}\cos^{-1}\left<\rvu,\rvv\right>$ where $\rvu$ and $\rvv$ are independent uniformly random unit vectors for each large dimension $d$. As $d\rightarrow\infty$, $\theta(\rvu,\rvv)$ tends to less scattered from 90 (in degree). (b--c) We apply SGD on \textbf{FNN} for MNIST classification with the batch size $64$ and the fixed learning rate $0.01$ starting from five randomly initialized parameters. We draw a density plot $\theta(\vu, \frac{\hat{\vg}_j(\w))}{\|\hat{\vg}_j(\w))\|})$ for $3,000$ minibatch gradients (black) at $\w=\w_0^0$ \subref{subfig:anglehist_initial} and $\w=\w_{\textrm{final}}^0$, with training accuracy of $>99.9\%$, \subref{subfig:anglehist_final} when $\vu$ is given. After SGD iterations, the density of $\theta(\vu, \hat{\vg}_j(\w))$ converges to an asymptotic density (red). The dimension of \textbf{FNN} is 635,200.}\label{fig:anglehist}
\end{figure}

According to \textbf{Theorem \ref{thm:asymtotic_angle}}, the angle between two independent uniformly random unit vectors is normally distributed and becomes increasingly more concentrated as $d$ grows (Figure \ref{subfig:angle_twourv}). If SGD iterations indeed drive the directions of minibatch gradients to be uniform, then, at least, the distribution of angles between minibatch gradients and a given uniformly sampled unit vector follows asymptotically 
\begin{equation}
\label{eq:asymtotic_dist}
\mathcal{N}\Big{(}90,\Big{(}\frac{180}{\pi\sqrt{d}}\Big{)}^2\Big{)}.
\end{equation} 

Figures \ref{subfig:anglehist_initial} and \ref{subfig:anglehist_final} show that the distribution of the angles between minibatch gradients and a given uniformly sampled unit vector converges to an asymptotic distribution (\ref{eq:asymtotic_dist}) after SGD iterations. Although we could measure the uniformity of minibatch gradients how the angle distribution between minibatch gradients is close to (\ref{eq:asymtotic_dist}), it is not as trivial to compare the distributions as to compare numerical values. This necessitates another way to measure the uniformity of minibatch gradients.

\subsection{Uniformity measurement via vMF distribution}\label{sec:vmf}

To model the uniformity of minibatch gradients, we propose to use the vMF distribution in \textbf{Definition \ref{def:VMF}}. The concentration parameter $\kk$ measures how uniformly the directions of unit vectors are distributed. By \textbf{Theorem \ref{thm:stats_mgrad}}, with a large batch size, the norm of minibatch gradient is nearly deterministic, and $\hat{\bmu}$ is almost parallel to the direction of full batch gradient. In other words, $\kk$ measures the concentration of the minibatch gradients directions around the full batch gradient.

The following \textbf{Lemma \ref{lem:kappa_l2}} introduces the relationship between the norm of averaged unit vectors and $\hat{\kk}$, the approximate estimator of $\kk$.
\begin{lemma}\label{lem:kappa_l2}
	The approximated estimator of $\kappa$ induced from the $d$-dimensional unit vectors $\{\x_1,\x_2, \cdots,\x_{n_b}\}$, 
	$$\hat{\kappa} = \frac{\bar{r}(d-\bar{r}^2)}{1-\bar{r}^2},$$
	is a strictly increasing function on $[0,1]$, where $\bar{r} =\frac{\|\sum_{i=1}^{n_b}\x_i\|}{{n_b}}$. If we consider $\hat{\kk}=h(u)$ as a function of $u=\|\sum_{i=1}^{n_b} \x_i\|$, then $h(\cdot)$ is Lipschitz continuous on $[0, {n_b}(1-\epsilon)]$ for any $\epsilon>0$. Moreover, $h(\cdot)$ and $h'(\cdot)$ are strictly increasing and increasing on $[0, {n_b})$, respectively.
\end{lemma}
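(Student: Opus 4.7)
The plan is to view $\hat{\kk}$ as a smooth function of $\bar{r}$ on $[0,1)$, verify the monotonicity and convexity claims by direct differentiation, and then transfer everything to $h(u)=\hat{\kk}(u/n_b)$ via the linear reparameterization $\bar{r}=u/n_b$.

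First, I would apply the quotient rule to $\hat{\kk}(\bar{r})=(\bar{r}d-\bar{r}^3)/(1-\bar{r}^2)$ and simplify, obtaining
$$\hat{\kk}'(\bar{r})=\frac{\bar{r}^4+(d-3)\bar{r}^2+d}{(1-\bar{r}^2)^2}.$$
Treating the numerator as a quadratic in $\bar{r}^2$, its discriminant equals $(d-3)^2-4d=(d-1)(d-9)$; this is either negative (when $2\le d<9$) or produces two negative roots in $\bar{r}^2$ (when $d\ge 9$, since the vertex lies at $-(d-3)/2<0$), so the numerator stays strictly positive on $[0,1]$ whenever $d\ge 2$. This establishes that $\hat{\kk}$ is strictly increasing on $[0,1)$, and because $\hat{\kk}(\bar{r})\to+\infty$ as $\bar{r}\to 1^-$, the strict monotonicity extends to $[0,1]$ in the extended-valued sense intended by the statement.

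Second, since $h(u)=\hat{\kk}(u/n_b)$, the chain rule gives $h'(u)=\hat{\kk}'(u/n_b)/n_b>0$ on $[0,n_b)$, so $h$ is strictly increasing there. On the compact subinterval $[0,n_b(1-\eps)]$, the argument $u/n_b$ stays in $[0,1-\eps]$, where $(1-\bar{r}^2)^2\ge(2\eps-\eps^2)^2>0$, so $h'$ is continuous and bounded; the mean value theorem then yields a Lipschitz constant $L=\sup_{\bar{r}\in[0,1-\eps]}|\hat{\kk}'(\bar{r})|/n_b<\infty$.

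Third, for the monotonicity of $h'$, I would differentiate $\hat{\kk}'$ once more. Writing $N(r)=r^4+(d-3)r^2+d$ and $D(r)=(1-r^2)^2$, a direct expansion shows that $N'(r)D(r)-N(r)D'(r)$ admits the common factor $2r(1-r^2)$, and the surviving bracket simplifies precisely to $(d-1)(r^2+3)$. This gives
$$\hat{\kk}''(\bar{r})=\frac{2\bar{r}(d-1)(\bar{r}^2+3)}{(1-\bar{r}^2)^3},$$
which is strictly positive on $(0,1)$ and zero at $\bar{r}=0$. Hence $h''(u)=\hat{\kk}''(u/n_b)/n_b^2\ge 0$ on $[0,n_b)$ (strictly positive for $u>0$), so $h'$ is increasing on $[0,n_b)$.

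The main obstacle is the algebraic simplification in the second derivative: verifying that the expansion of $N'D-ND'$ collapses cleanly to $2r(1-r^2)(d-1)(r^2+3)$ requires careful bookkeeping of the $r^4$, $r^2$, and constant terms, but once this cancellation is identified the positivity of $\hat{\kk}'$ and $\hat{\kk}''$, and in turn of $h'$ and $h''$, follows immediately from $d\ge 2$.
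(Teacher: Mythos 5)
Your proof is correct and takes essentially the same route as the paper's: differentiate $\hat{\kk}$ with respect to $\bar{r}$, check positivity of the first and second derivatives, transfer to $h$ via the linear substitution $\bar{r}=u/n_b$, and obtain Lipschitz continuity from the boundedness of the continuous derivative on the compact interval $[0,n_b(1-\epsilon)]$. In fact your numerators $\bar{r}^4+(d-3)\bar{r}^2+d$ and $2\bar{r}(d-1)(\bar{r}^2+3)$ are the correct ones --- the paper's printed expressions for $d\hat{\kappa}/d\bar{r}$ and $d^2\hat{\kappa}/d\bar{r}^2$ contain algebraic slips (e.g.\ its $d=2$ numerator should be $\bar{r}^4-\bar{r}^2+2$, not $\bar{r}^4-3\bar{r}^2+4$), though its sign conclusions remain valid.
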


\begin{proof}
	See \textbf{Supplemental \ref{subsec:k_lem1}}.
\end{proof}

\begin{figure}[t]
	\begin{subfigure}{0.35\linewidth}
		\centering
		\includegraphics[trim= 15.5cm 4cm 0cm 2cm, clip, width =\linewidth]{./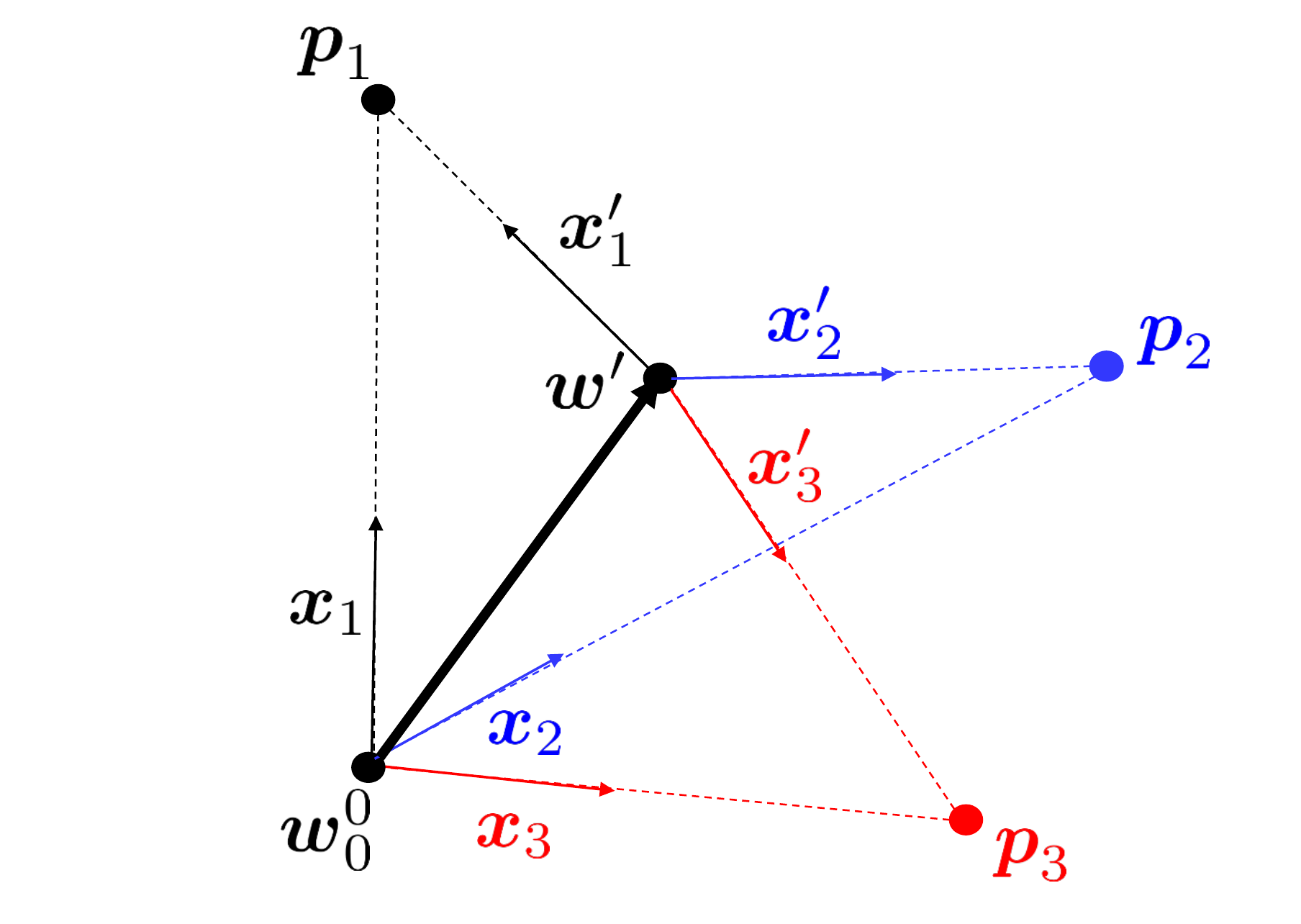}
		\caption{From $\w_0^0$ to $\w'$ by $\epsilon\sum_i\x_i$}\label{fig:kappadec_shift}
	\end{subfigure}
	\begin{subfigure}{0.65\linewidth}
		\centering
		\includegraphics[trim=0cm 4cm 0cm 2cm, clip,width=\linewidth]{./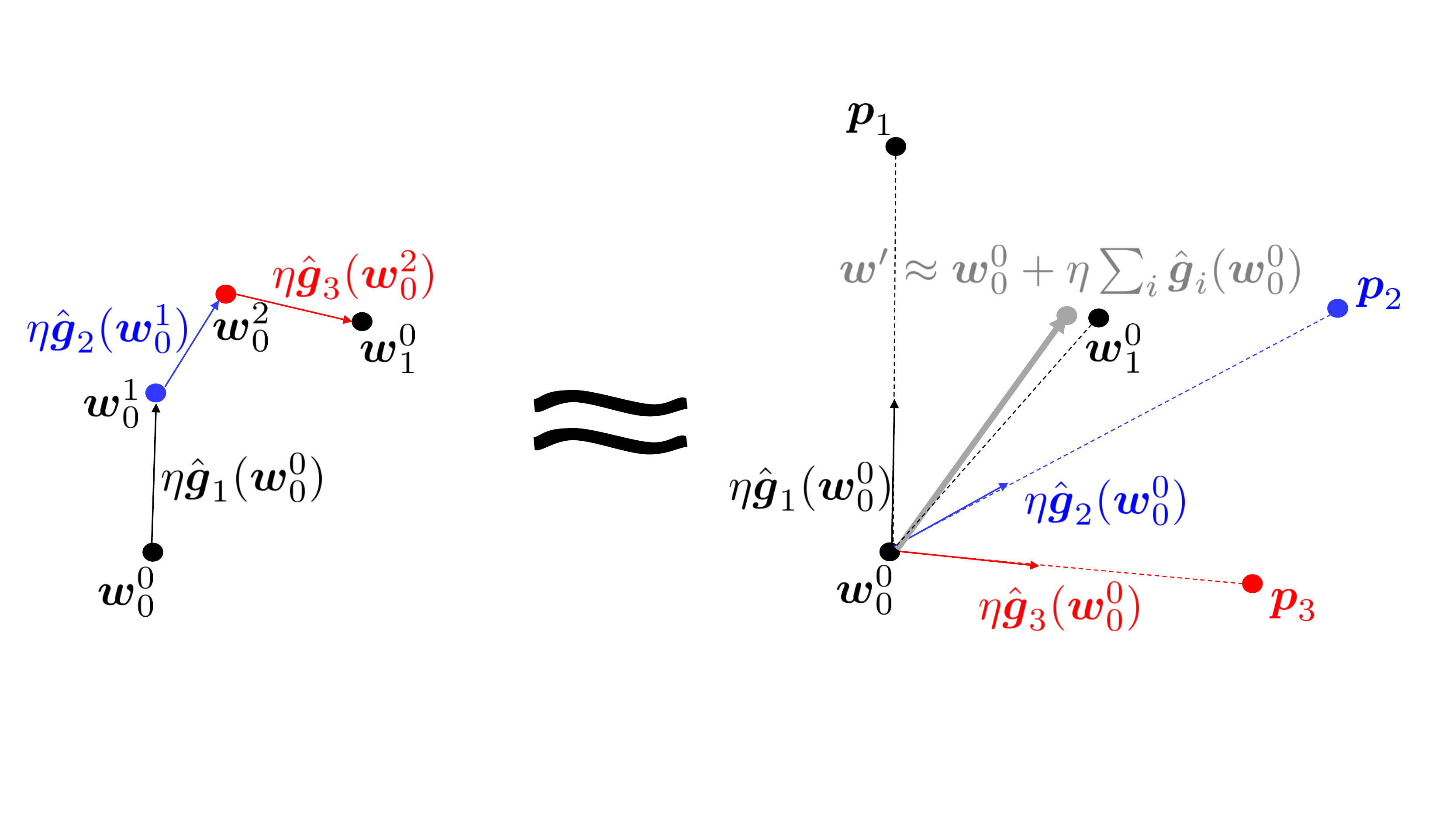}
		\caption{From $\w_0^0$ to $\w_1^0$ by SGD iterations}\label{fig:sgd_kappa}
	\end{subfigure}
	\caption{\subref{fig:kappadec_shift} If the point is slightly moved from $\w_0^0$ to $\w'$ by $\epsilon\sum_i\x_i$ where $\x_i=(\p_i-\w_0^0)/\|\p_i-\w_0^0\|$ and $\x'_i=(\p_i-\w')/\|\p_i-\w'\|$, then $\|\sum_i\x'_i\|<\|\sum_i\x_i\|$ which is equivalent to $\hat\kk(\w')<\hat\kk(\w_0^0)$; \subref{fig:sgd_kappa} If $\hat{\vg}_i(\w_0^0)$'s are sufficiently parallel to $(\p_i-\w_0^0)$'s for each $i$, then $\w'\approx\w_0^0+\eta\sum_i\hat\vg_i(\w_0^0)$. When $\w'$ and $\w_1^0$ are sufficiently close to each other, we also have $\hat\kk(\w_1^0)<\hat\kk(\w_0^0)$. 
	}\label{fig:kappadec}  
\end{figure}

Consider 
\[
\hat{\kk}(\w)=h\left(\left\|\sum_{i=1}^{n_b}\frac{\p_i-\w}{\|\p_i-\w\|}\right\|\right),
\]
which is measured from the directions from the current location $\w$ to the fixed points $\p_i$'s, where $h{(\cdot)}$ is a function defined in \textbf{Lemma \ref{lem:kappa_l2}}. Since $h(\cdot)$ is an increasing function, we may focus only on  $\|\sum_{i=1}^{n_b}\frac{\p_i-\w}{\|\p_i-\w\|}\|$ to see how $\hat{\kk}$ behaves with respect to its argument. \textbf{Lemma \ref{lem:direction_kappa_dec}} implies that the estimated directional concentration $\hat{\kk}$ decreases if we move away from $\w_0^0$ to  $\w'=\w_0^0+\epsilon\sum_i\frac{\p_i-\w_0^0}{\|\p_i-\w_0^0\|}$ with a small $\epsilon$  (Figure \ref{fig:kappadec_shift}).
In other words, $\hat{\kk}(\w')<\hat{\kk}(\w_0^0)$.

\begin{lemma}\label{lem:direction_kappa_dec}
	Let $\p_1, \p_2, \cdots, \p_{n_b}$ be $d$-dimensional vectors. If all $\p_i$'s are not on a single ray from the current location $\w$, then there exists positive number $\eta$ such that $$\left\|\sum_{j=1}^{n_b}\frac{\p_j-\w-\epsilon\sum_{i=1}^{n_b} \frac{\p_i-\w}{\left\|\p_i-\w\right\|}}{\left\|\p_j-\w-\epsilon\sum_{i=1}^{n_b} \frac{\p_i-\w}{\left\|\p_i-\w\right\|}\right\|}\right\| 
	< \left\|\sum_{i=1}^{n_b}\frac{\p_i-\w}{\|\p_i-\w\|}\right\|$$
	for all $\epsilon\in (0,\eta]$.
\end{lemma}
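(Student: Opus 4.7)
My plan is to parametrize the left-hand side by $\epsilon$, writing $F(\epsilon) := \|\sum_{j=1}^{n_b} \vy_j(\epsilon)\|^2$ with $\vy_j(\epsilon) := (\p_j - \w - \epsilon \vs)/\|\p_j - \w - \epsilon \vs\|$ and $\vs := \sum_{i=1}^{n_b}(\p_i-\w)/\|\p_i-\w\|$, and to show that $F'(0) < 0$. Since $\vy_j(0) = (\p_j-\w)/\|\p_j-\w\|$ gives $F(0) = \|\vs\|^2$, continuity of $F$ would then yield $F(\epsilon) < F(0)$ on some interval $(0,\eta]$, and taking square roots recovers the claim.

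To carry out the derivative computation, I would introduce the shorthand $r_j := \|\p_j - \w\|$ and $\x_j := (\p_j - \w)/r_j$, so that the numerator is $r_j \x_j - \epsilon \vs$ and $\|r_j\x_j - \epsilon\vs\|^2 = r_j^2 - 2\epsilon r_j\langle \x_j, \vs\rangle + \epsilon^2\|\vs\|^2$. A first-order Taylor expansion of the normalized vector gives
\[
\vy_j(\epsilon) = \x_j - \frac{\epsilon}{r_j}\bigl(\vs - \langle \x_j, \vs\rangle \x_j\bigr) + O(\epsilon^2),
\]
where the bracketed term is the projection of $\vs$ onto the orthogonal complement of $\x_j$. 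Summing over $j$, noting that $\sum_j \vy_j(0) = \vs$, and taking the inner product with $\vs$ yields
\[
F'(0) = -2 \sum_{j=1}^{n_b}\frac{1}{r_j}\bigl(\|\vs\|^2 - \langle \x_j, \vs\rangle^2\bigr),
\]
and each summand is nonnegative by Cauchy--Schwarz (using $\|\x_j\|=1$), vanishing iff $\x_j$ is parallel to $\vs$.

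The delicate step, which I expect to be the main obstacle, is arguing strict negativity of $F'(0)$ from the stated hypothesis. Under ``not all $\p_i$'s lie on a single ray from $\w$,'' the unit directions $\x_i$ are not all identical; in the non-degenerate case $\vs \neq 0$, one must deduce that at least one $\x_j$ is not a scalar multiple of $\vs$, making the corresponding summand strictly positive and forcing $F'(0) < 0$. The borderline scenario that requires care is when $\vs \neq 0$ yet every $\x_i \in \{\pm \vs/\|\vs\|\}$ (opposite rays on the same line): here the first-order term vanishes and one would need either to read ``single ray'' as the stronger ``single line,'' or to examine the second-order expansion of $F$ to extract the strict decrease. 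Once $F'(0) < 0$ is established, smoothness of $F$ near $\epsilon = 0$ delivers the desired $\eta > 0$ and completes the argument.
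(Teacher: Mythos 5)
Your argument is precisely the paper's own proof: after translating $\w$ to the origin, the paper likewise sets $f(\epsilon)$ to be the squared norm of the sum of the renormalized vectors $\p_j-\epsilon\vs$, differentiates the normalized sum at $\epsilon=0$, and arrives at the identical expression $f'(0)=-2\sum_{j}\frac{1}{\|\p_j\|}\bigl(\|\vs\|^2-\left<\vs,\x_j\right>^2\bigr)\le 0$, with equality iff every $\x_j$ is parallel to $\vs$. The borderline case you flag---all $\x_j$ collinear with $\vs$ but not all equal, or $\vs=\vzero$---is exactly where the paper's proof is also silent (it simply identifies the Cauchy--Schwarz equality condition with ``all on a single ray''), so your hesitation points at a small gap in the published argument rather than a defect in your own reasoning.
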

\begin{proof}
	See \textbf{Supplemental \ref{subsec:k_lem2}}.
\end{proof}

We make the connection between the observation above and SGD by first viewing $\p_i$'s as \textbf{local minibatch solutions}. 

\begin{definition}
\label{def:per_minibatch_sol}
	For a minibatch index set $\sI_i$, $\p_i(\w)={\arg\min}_{\w'\in N(\w;r_i)} f_{\sI_i}(\w')$ is a \textbf{local minibatch solution} of $\sI_i$ at $\w$, where $N(\w;r_i)$ is a neighborhood of radius $r_i$ at $\w$. Here, $r_i$ is determined by $\w$ and $\sI_i$ for $\p_i(\w)$ to exist uniquely.
\end{definition}

Under this definition, $\p_i(\w)$ is local minimum of a minibatch loss function $f_{\sI_i}$ near $\w$. Then we reasonably expect that the direction of $\hgi=-\nablaw f_{\sI_i}(\w)$ is similar to that of $\p_i(\w)-\w$. 

Each epoch of SGD with a learning rate $\eta$ computes a series of $\w_{t}^j = \w_t^0+\eta\sum_{i=1}^{j} \hat{\vg}_{i}(\w_t^{i-1})$ for all $j\in\{1,\dots,n_b\}$. If $\hat\vg_i(\cdot)$'s are Lipschitz continuous for all $i\in\{1,\dots,n_b\}$, then we have $\|\hat{\vg}_i(\w_t^{i-1})\|\approx\|\hat{\vg}_i(\w_t^{0})\|$ for a small $\eta$. Moreover, \textbf{Theorem \ref{thm:stats_mgrad}} implies $\|\hat{\vg}_i(\w_t^{0})\|\approx\tau$ for all $i\in\{1,\dots,n_b\}$ with a large batch size or at the early stage of SGD iterations. Combining these approximations, $\|\hat{\vg}_i(\w_t^{i-1})\|\approx\tau$ for all $i\in\{1,\dots,n_b\}$. 


For example, suppose that $t=0$, $n_b=3$ and $\tau=1$, and assume that $\p_i(\w_0^0)=\p_i(\w_1^0)=\p_i$ for all $i=1,2,3$. Then, 
\[
\hat{\kk}(\w_0^0)=h\left(\left\|\sum_{i=1}^3 \frac{\p_i-\w_0^0}{\|\p_i-\w_0^0\|}\right\|\right), \]
and
\[
\hat{\kk}(\w_1^0)\approx h\left(\left\|\sum_{j=1}^3 \frac{\p_j-\w_0^0-\eta\sum_{i=1}^3\frac{\hat{\vg}_i(\w_0^{i-1})}{\|\hat{\vg}_i(\w_0^{i-1})\|}}{\left\|\p_j-\w_0^0-\eta\sum_{i=1}^3\frac{\hat{\vg}_i(\w_0^{i-1})}{\|\hat{\vg}_i(\w_0^{i-1})\|}\right\|}\right\|\right).
\]
If 
$\eta\sum_i\frac{\p_i-\w_0^0}{\|\p_i-\w_0^0\|}
\approx \eta\sum_i\frac{\hat{\vg}_i(\w_0^{i-1})}{\|\hat{\vg}_i(\w_0^{i-1})\|}$, then 
$$
\sum_{j=1}^3 
\frac{\p_j-\w_0^0-\eta\sum_{i=1}^3
    \frac{\hat{\vg}_i(\w_0^{i-1})}
    {\|\hat{\vg}_i(\w_0^{i-1})\|}
}{
    \left\|\p_j-\w_0^0-\eta\sum_{i=1}^3
    \frac{
        \hat{\vg}_i(\w_0^{i-1})
    }{
        \|\hat{\vg}_i(\w_0^{i-1})\|
    }
    \right\|
} 
\approx
\sum_{j=1}^{3}
\frac{
    \p_j-\w_0^0-\eta\sum_{i=1}^{3} 
    \frac{
        \p_i-\w_0^0
    }{
        \|\p_i-\w_0^0\|
    }
}{
    \left\|
        \p_j-\w_0^0-\eta\sum_{i=1}^{3} 
        \frac{
            \p_i-\w_0^0
        }{
            \|\p_i-\w_0^0\|
        }
    \right\|
}.
$$ 
Hence, we have $\hat{\kk}(\w_1^0)<\hat{\kk}(\w_0^0)$ by \textbf{Lemma \ref{lem:direction_kappa_dec}}. A trivial case satisfying this condition would be for each pair of $\p_i-\w_0^0$ and $\hat{\vg}_i(\w_0^{i-1})$ to be approximately parallel, as illustrated in Figure \ref{fig:sgd_kappa}. 

\begin{theorem}
\label{thm:sgd_kappa}
	Let $\p_1(\w_t^0), \p_2(\w_t^0), \cdots, \p_{n_b}(\w_t^0)$ be $d$-dimensional vectors, and all $\p_i(\w_t^0)$'s are not on a single ray from the current location $\w_t^0$. If 
	\begin{equation}\label{eq:norm_bdd_con}
		\left\|\sum_{i=1}^{n_b}\frac{\p_i(\w_t^0)-\w_t^0}{\|\p_i(\w_t^0)-\w_t^0\|}-\sum_{i=1}^{n_b}\frac{\hat{\vg}_i(\w_t^{i-1})}{\|\hat{\vg}_i(\w_t^{i-1})\|}\right\|\leq \xi
	\end{equation}
	for a sufficiently small $\xi>0$, then there exists positive number $\eta$ such that
	\begin{equation}\label{eq:kappadec_withgrad}
		\left\|\sum_{j=1}^{n_b}\frac{\p_j(\w_t^0)-\w_t^0-\epsilon\sum_{i=1}^{n_b} \frac{\hat{\vg}_i(\w_t^{i-1})}{\|\hat{\vg}_i(\w_t^{i-1})\|}}{\left\|\p_j(\w_t^0)-\w_t^0-\epsilon\sum_{i=1}^{n_b} \frac{\hat{\vg}_i(\w_t^{i-1})}{\|\hat{\vg}_i(\w_t^{i-1})\|}\right\|}\right\| < \left\|\sum_{i=1}^{n_b}\frac{\p_i(\w_t^0)-\w_t^0}{\|\p_i(\w_t^0)-\w_t^0\|}\right\|
	\end{equation}
	for all $\epsilon\in (0,\eta]$.
\end{theorem}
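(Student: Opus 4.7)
The plan is to view the inequality (\ref{eq:kappadec_withgrad}) as a perturbation of the conclusion of \textbf{Lemma \ref{lem:direction_kappa_dec}}. Set
\[
F(\w) := \sum_{j=1}^{n_b}\frac{\p_j(\w_t^0)-\w}{\|\p_j(\w_t^0)-\w\|},\qquad \vs := F(\w_t^0),\qquad \vg := \sum_{i=1}^{n_b}\frac{\hat{\vg}_i(\w_t^{i-1})}{\|\hat{\vg}_i(\w_t^{i-1})\|},
\]
so that hypothesis (\ref{eq:norm_bdd_con}) reads $\|\vs-\vg\|\leq \xi$ and the desired conclusion (\ref{eq:kappadec_withgrad}) is $\|F(\w_t^0+\epsilon\vg)\|<\|\vs\|$ for every $\epsilon$ in some interval $(0,\eta]$. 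The non-collinearity hypothesis forces $\p_i(\w_t^0)\neq \w_t^0$ for every $i$, so each summand of $F$, and hence $F$ itself, is $C^\infty$ on a neighborhood of $\w_t^0$; denote the Jacobian there by $J:=DF(\w_t^0)$.

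The main step is to Taylor-expand
\[
\phi(\epsilon) := \|F(\w_t^0+\epsilon\vg)\|^2 = \|\vs\|^2 + 2\epsilon\,\langle \vs, J\vg\rangle + O(\epsilon^2)
\]
and to show the first-order coefficient is strictly negative as soon as $\xi$ is small. Applying \textbf{Lemma \ref{lem:direction_kappa_dec}} at $\w_t^0$ (which rests on the very same non-collinearity assumption) yields $\|F(\w_t^0+\epsilon\vs)\|^2<\|\vs\|^2$ for all small $\epsilon>0$; the analogous Taylor expansion along $\vs$ then forces $2\langle \vs,J\vs\rangle\leq 0$, and a closer inspection of the proof of \textbf{Lemma \ref{lem:direction_kappa_dec}} upgrades this to a \emph{strict} inequality $-c:=2\langle \vs,J\vs\rangle<0$. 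Splitting
\[
2\langle \vs,J\vg\rangle \;=\; 2\langle \vs,J\vs\rangle + 2\langle \vs, J(\vg-\vs)\rangle \;\leq\; -c + 2\,\|\vs\|\,\|J\|_{\mathrm{op}}\,\xi,
\]
the right-hand side remains strictly negative whenever $\xi < c/(2\|\vs\|\|J\|_{\mathrm{op}})$, which is the precise meaning of ``sufficiently small $\xi$''. Consequently $\phi'(0)<0$, and a standard continuity argument delivers $\eta>0$ with $\phi(\epsilon)<\phi(0)=\|\vs\|^2$ on $(0,\eta]$, which is exactly (\ref{eq:kappadec_withgrad}).

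The hard part is the bridge from the qualitative monotonicity supplied by \textbf{Lemma \ref{lem:direction_kappa_dec}} to the \emph{strict} first-order decrease $\langle \vs,J\vs\rangle<0$ used above. The lemma as stated only yields $\langle \vs,J\vs\rangle\leq 0$; if equality held, the decrease along $\vs$ would be second-order in $\epsilon$, and then the linear $\xi$-perturbation $2\langle \vs,J(\vg-\vs)\rangle$ could dominate it for small $\epsilon$, wrecking the $\epsilon$-uniform smallness condition that the theorem imposes on $\xi$. Extracting the strict sign therefore requires revisiting the geometric argument in \textbf{Lemma \ref{lem:direction_kappa_dec}} to identify $J\vs$ explicitly and confirm that the non-collinearity of the $\p_i(\w_t^0)$ from $\w_t^0$ prevents $\langle \vs, J\vs\rangle$ from vanishing. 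Once this strict rate is in hand, the remaining ingredients -- smoothness of each unit-vector field near $\w_t^0$, an operator-norm bound on $J$, and the Taylor control of the $O(\epsilon^2)$ remainder -- are routine.
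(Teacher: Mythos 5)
Your proposal is correct and follows essentially the same route as the paper: differentiate the squared norm at $\epsilon=0$, split the first-order coefficient into the unperturbed term supplied by Lemma \ref{lem:direction_kappa_dec} plus a cross term that is bounded linearly in $\xi$, and conclude the derivative is strictly negative once $\xi$ is small enough. The strict inequality you flag as the hard part is already explicit in the paper's proof of Lemma \ref{lem:direction_kappa_dec}, which computes $f'(0)=-2\sum_j\bigl(\|\vu\|^2-\langle\vu,\x_j\rangle^2\bigr)/\|\p_j\|$ and shows it vanishes only in the collinear case; the paper then bounds the cross term by $4n_b^2\xi/\min_j\|\p_j\|$ via Cauchy--Schwarz rather than your operator-norm bound, but the argument is the same.
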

\begin{proof}
	See \textbf{Supplemental \ref{subsec:k_thm}}.
\end{proof}

This \textbf{Theorem \ref{thm:sgd_kappa}} asserts that $\hat{\kk}(\cdot)$ decreases even with some perturbation along the averaged direction $\sum_i\frac{\p_i(\w)-\w}{\|\p_i(\w)-\w\|}$. With additional assumptions on each minibatch loss functions, we have a sufficient condition for ($\ref{eq:norm_bdd_con}$), summarized in \textbf{Corollary \ref{cor:sgd_kappadec}}.

\begin{cor}{\label{cor:sgd_kappadec}}
		Let $\p_i$ be the local minibatch solution of each $f_{\sI_i}$. Suppose a region $\mathcal{R}$ satisfying:
	$$\text{For all }\w,\w'\in \mathcal{R},\quad \p_i(\w)=\p_i(\w')=\p_i$$
	for all $i=1,\cdots,n_b$. Further, assume that Hessian matrices of $f_{\sI_i}$'s are positive definite, well-conditioned, and bounded in the sense of matrix $\normltwo$-norm on $\mathcal{R}$. If SGD moves from $\w_t^0$ to $\w_{t+1}^0$ on $\mathcal{R}$ with a large batch size and a small learning rate, then $\hat{\kk}(\w_t^0)>\hat{\kk}(\w_{t+1}^0)$. Moreover, we can estimate $\hat{\kk}(\w_t^0)$ and $\hat{\kk}(\w_{t+1}^0)$ by minibatch gradients at $\w_t^0$ and $\w_{t+1}^0$, respectively. 	 
\end{cor}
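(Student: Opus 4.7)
The plan is to deduce Corollary \ref{cor:sgd_kappadec} from Theorem \ref{thm:sgd_kappa} by verifying hypothesis (\ref{eq:norm_bdd_con}) under the stated assumptions, and then to apply the Lipschitz continuity of $h$ from Lemma \ref{lem:kappa_l2} for the estimation claim. The central intuition is that near each local minibatch solution $\p_i$, the negative minibatch gradient $\hat{\vg}_i(\w)$ points approximately toward $\p_i-\w$, so summing normalized gradients approximates summing the geometric directions that define $\hat{\kk}(\w)$.

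First, I would Taylor-expand each $\nabla f_{\sI_i}$ around $\p_i$. Since $\nabla f_{\sI_i}(\p_i)=0$ on $\mathcal{R}$, we obtain $-\nabla f_{\sI_i}(\w_t^0)=H_i(\w_t^0)(\p_i-\w_t^0)$ for some intermediate Hessian $H_i(\w_t^0)\succ 0$. The well-conditioning assumption bounds the condition number of $H_i$, so by a Kantorovich-type argument the angle between $H_iv$ and $v$ is controlled, giving
\[
\left\|\frac{\hat{\vg}_i(\w_t^0)}{\|\hat{\vg}_i(\w_t^0)\|}-\frac{\p_i-\w_t^0}{\|\p_i-\w_t^0\|}\right\|\leq\delta_1,
\]
with $\delta_1$ small whenever the condition number is close to one. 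Second, the bounded-Hessian assumption makes each $\hat{\vg}_i$ Lipschitz continuous, and Theorem \ref{thm:stats_mgrad} together with a large batch size forces $\|\hat{\vg}_i(\w_t^0)\|\approx\tau$ uniformly in $i$; with a small learning rate $\eta$, the intermediate iterates $\w_t^{i-1}$ stay $O(\eta)$-close to $\w_t^0$, so
\[
\left\|\frac{\hat{\vg}_i(\w_t^{i-1})}{\|\hat{\vg}_i(\w_t^{i-1})\|}-\frac{\hat{\vg}_i(\w_t^0)}{\|\hat{\vg}_i(\w_t^0)\|}\right\|\leq\delta_2=O(\eta).
\]
Summing triangle inequalities over $i=1,\ldots,n_b$ yields $\xi\leq n_b(\delta_1+\delta_2)$ in (\ref{eq:norm_bdd_con}), which can be made as small as desired by tightening the conditioning bound and shrinking $\eta$.

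Once (\ref{eq:norm_bdd_con}) holds, Theorem \ref{thm:sgd_kappa} gives (\ref{eq:kappadec_withgrad}) for all $\epsilon\in(0,\eta_0]$ with some $\eta_0>0$. Matching the SGD update $\w_{t+1}^0=\w_t^0+\eta\sum_i\hat{\vg}_i(\w_t^{i-1})$ with the theorem's perturbation via the near-constant norm approximation $\eta\sum_i\hat{\vg}_i(\w_t^{i-1})\approx\eta\tau\sum_i\hat{\vg}_i(\w_t^{i-1})/\|\hat{\vg}_i(\w_t^{i-1})\|$ and choosing $\epsilon=\eta\tau\leq\eta_0$ yields
\[
\left\|\sum_{j=1}^{n_b}\frac{\p_j-\w_{t+1}^0}{\|\p_j-\w_{t+1}^0\|}\right\|<\left\|\sum_{i=1}^{n_b}\frac{\p_i-\w_t^0}{\|\p_i-\w_t^0\|}\right\|,
\]
and the strict monotonicity of $h$ from Lemma \ref{lem:kappa_l2} delivers $\hat{\kk}(\w_{t+1}^0)<\hat{\kk}(\w_t^0)$. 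The estimation claim then follows by reusing the alignment bound with the Lipschitz property of $h$ on $[0,n_b(1-\epsilon)]$: evaluating $h(\|\sum_i\hat{\vg}_i(\w)/\|\hat{\vg}_i(\w)\|\|)$ at both $\w=\w_t^0$ and $\w=\w_{t+1}^0$ approximates $\hat{\kk}$ at those points.

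The main obstacle is turning the qualitative phrase ``sufficiently small $\xi$'' in Theorem \ref{thm:sgd_kappa} into a quantitative compatibility condition among the condition-number bound, the learning rate, the batch size, and the geometry of $\{\p_i\}$: the threshold implicit in (\ref{eq:norm_bdd_con}) depends on how far the configuration $\|\sum_i(\p_i-\w_t^0)/\|\p_i-\w_t^0\|\|$ sits from degenerate (collinear) arrangements, and one must verify that the joint regime of small $\eta$, large batch, and tight conditioning beats this threshold uniformly along the SGD trajectory inside $\mathcal{R}$. A secondary subtlety is that the second-order Taylor remainder requires $\w_t^0$ to be close enough to each $\p_i$, which is automatic if $\mathcal{R}$ is a small neighborhood of $\{\p_i\}$ but otherwise must be absorbed into the conditioning constant.
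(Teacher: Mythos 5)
Your proposal is correct and follows essentially the same route as the paper's own proof: verify hypothesis (\ref{eq:norm_bdd_con}) by combining (i) the well-conditioned-Hessian alignment of $\hat{\vg}_i(\w)$ with $\p_i-\w$ (the paper's Lemma C.1, proved by the same condition-number bound you invoke), (ii) the closeness of the intermediate iterates $\w_t^{i-1}$ to $\w_t^0$ under a small learning rate (the paper's Lemma C.2), and (iii) the near-constant gradient norms from Theorem \ref{thm:stats_mgrad} to rewrite the SGD step as $\epsilon=\tau\eta$ times a sum of unit vectors, then apply Theorem \ref{thm:sgd_kappa} and the monotonicity and Lipschitz continuity of $h$ from Lemma \ref{lem:kappa_l2}. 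The obstacles you flag (the qualitative threshold on $\xi$ and the Taylor remainder) are likewise left at the same informal level in the paper.
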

\begin{proof}
	See \textbf{Supplemental \ref{sec:k_cor}}.
\end{proof}

Without the corollary above, 
we need to solve $\p_i(\w_t^0)=\arg\min_{\w \in N(\w_t^0;r)} f_{\sI_i}(\w)$ for all $i\in\{1,\dots,n_s\}$, where $n_s$ is the number of samples to estimate $\kk$, in order to compute $\hat{\kk}(\w_t^0)$. \textbf{Corollary \ref{cor:sgd_kappadec}} however implies that we can compute $\hat{\kk}(\w_t^0)$ by using $\frac{\hat{\vg}_i(\w_t^0)}{\|\hat{\vg}_i(\w_t^0)\|}$ instead of $\frac{\p_i(\w_t^0)-\w_t^0}{\|\p_i(\w_t^0)-\w_t^0\|}$, significantly reducing computational overhead.

\paragraph{In Practice}

Although the number of all possible minibatches in each epoch is $n_b = {n \choose m}$, it is often the case to use $n_b' \approx n/m$ minibatches at each epoch in practice to go from $\w_t^0$ to $\w_{t+1}^0$. Assuming that these $n'_b$ minibatches were selected uniformly at random, the average of the $n'_b$ normalized minibatch gradients is the maximum likelihood estimate of $\bmu$, just like the average of all $n_b$ normalized minibatch gradients. Thus, we expect with a large $n'_b$,
\[
\left\|\sum_{i=1}^{{n \choose m}}\frac{\p_i(\w_t^0)-\w_t^0}{\|\p_i(\w_t^0)-\w_t^0\|}-\sum_{i=1}^{n'_b}\frac{\hat{\vg}_i(\w_t^{i-1})}{\|\hat{\vg}_i(\w_t^{i-1})\|}\right\| \leq \xi,
\]
and that SGD in practice also satisfies $\hat{\kk}(\w_t^0)>\hat{\kk}(\w_{t+1}^0)$.
   
\section{Experiments}

\subsection{Setup}

In order to empirically verify our theory on directional statistics of minibatch gradients, we train various types of deep neural networks using SGD and monitor the following metrics for analyzing the learning dynamics of SGD:
\begin{itemize}
\itemsep 0em
    \item Training loss
    \item Validation loss
    \item Gradient stochasticity (GS)$\uparrow$ $\|\E\nablaw f_{\sI_i}(\w)\|/\sqrt{\textrm{tr}(\Cov(\nablaw f_{\sI_i}(\w),\nablaw f_{\sI_i}(\w))} \downarrow$
    \item Gradient norm stochasticity (GNS) $\uparrow$ 
    $\E\|\nablaw f_{\sI_i}(\w)\|/\sqrt{\Var(\|\nablaw f_{\sI_i}(\w)\|)} \downarrow$    
    \item Directional Uniformity$\uparrow$ $\kk\downarrow$
\end{itemize}
The latter three quantities are statistically estimated using $n_s=3,000$ minibatches. We use $\hat{\kappa}$ to denote the $\kappa$ estimate. 

We train the following types of deep neural networks (\textbf{Supplemental \ref{sec:exp}}):
\begin{itemize}
\itemsep 0em
    \item {\bf FNN}: a fully connected network with a single hidden layer
    \item {\bf DFNN}: a fully connected network with three hidden layers
    \item {\bf CNN}: a convolutional network with 14 layers~\citep{CNN}
\end{itemize}
In the case of the CNN, we also evaluate its variant with skip connections ({\bf +Res})~\citep{He}. As it was shown recently by \citet{Santurkar} that batch normalization~\citep{BatchNorm} improves the smoothness of a loss function in terms of its Hessian, we also test adding batch normalization to each layer right before the ReLU~\citep{ReLU} nonlinearity ({\bf +BN}). We use MNIST for the FNN, DFNN and their variants, while CIFAR-10~\citep{krizhevsky2009learning} for the CNN and its variants.
\begin{figure}[t]
\small
    \begin{subfigure}{0.245\linewidth}
		\centering
		\includegraphics[width=\linewidth]{./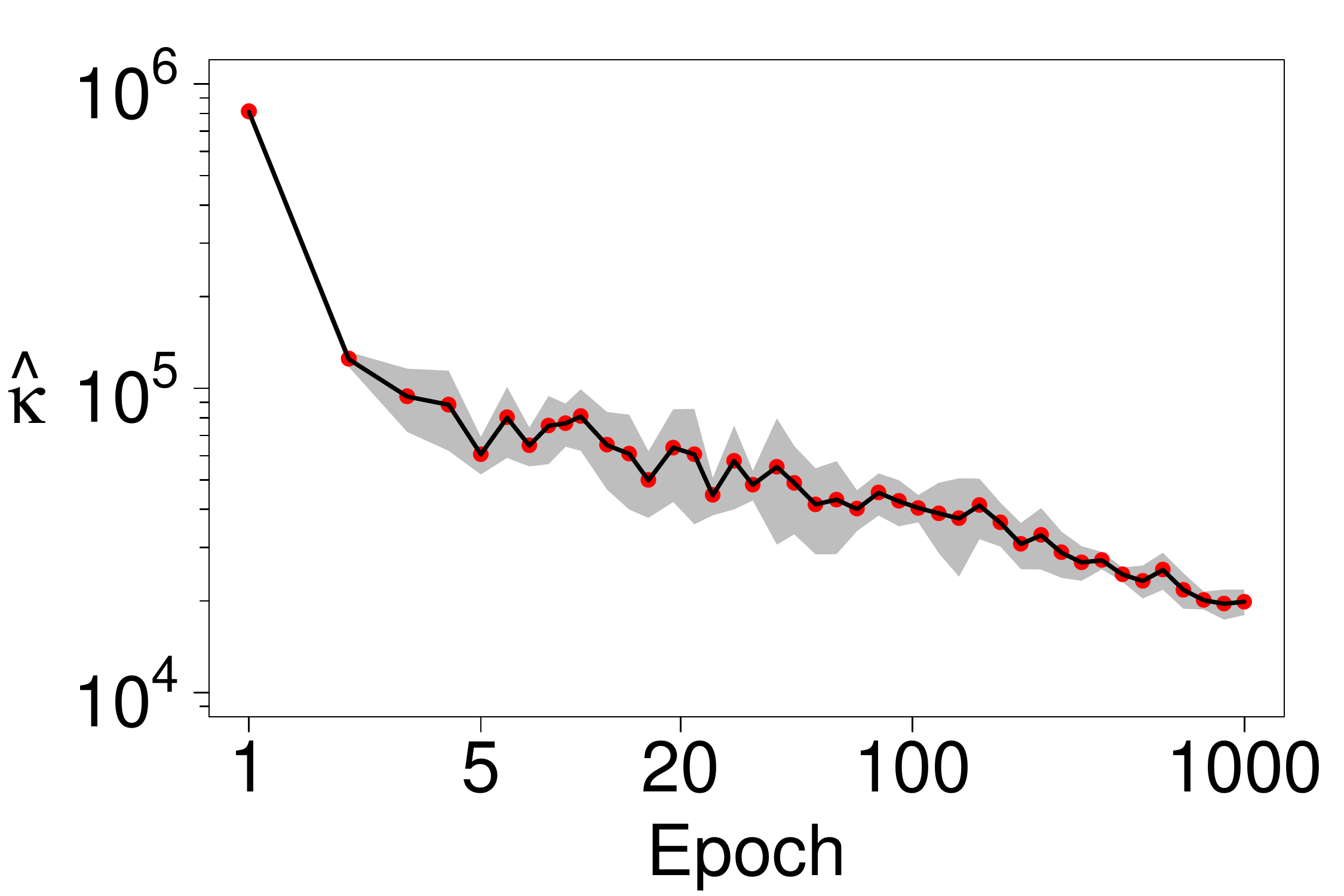}
		\caption{Batch size: 64}\label{subfig:bs64_MNIST_FNN}
	\end{subfigure}
	\begin{subfigure}{0.245\linewidth}
		\centering
		\includegraphics[width=\linewidth]{./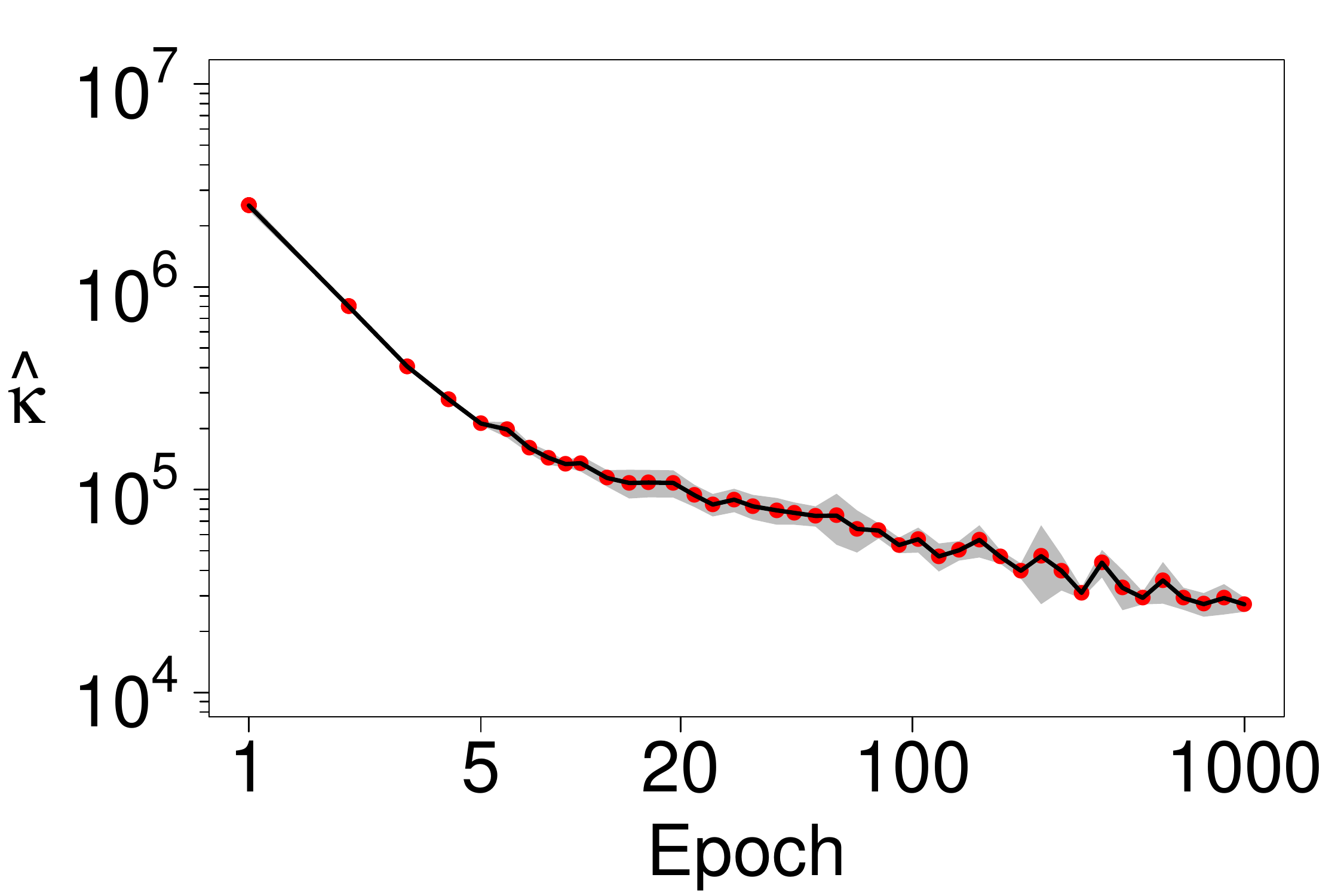}
		\caption{Batch size: 256}\label{subfig:bs256_MNIST_FNN}
	\end{subfigure}
	\begin{subfigure}{0.245\linewidth}
		\centering
		\includegraphics[width=\linewidth]{./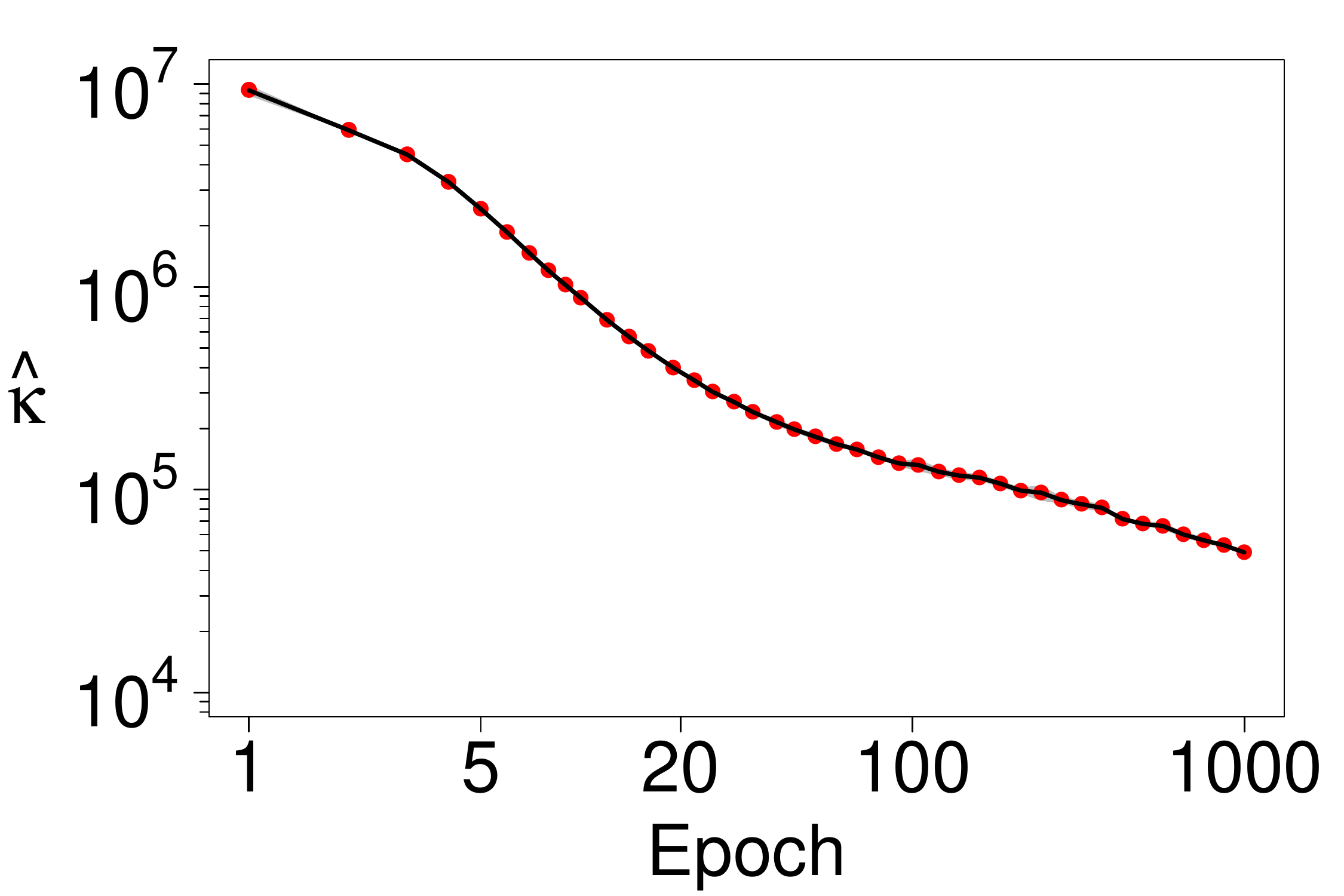}
		\caption{Batch size: 1,024}\label{subfig:bs1024_MNIST_FNN}
	\end{subfigure}
	\begin{subfigure}{0.245\linewidth}
		\centering
		\includegraphics[width=\linewidth]{./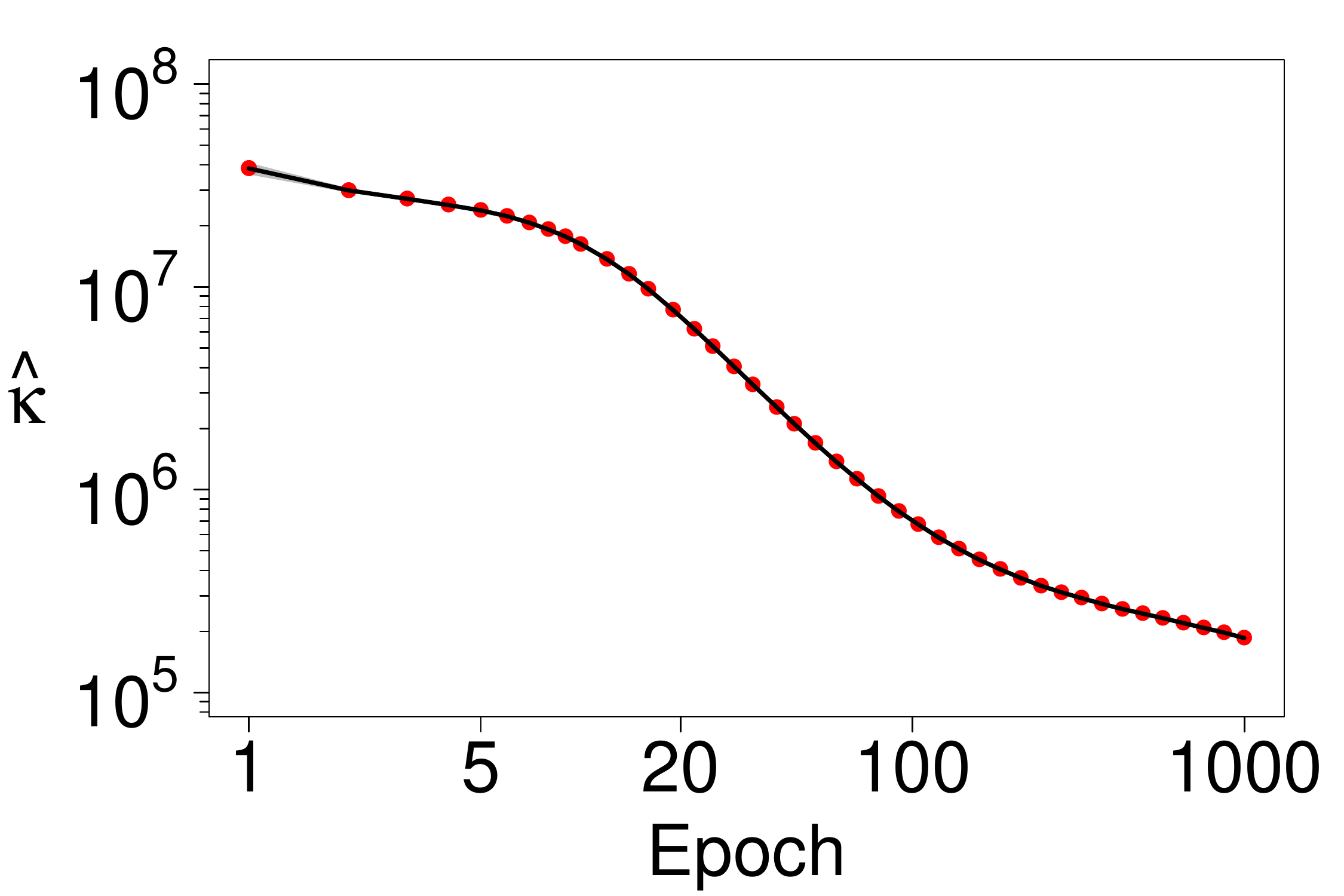}
		\caption{Batch size: 4,096}\label{subfig:bs4096_MNIST_FNN}
	\end{subfigure}
	\caption{We show the average $\hat\kk$ (black curve) $\pm$ std. (shaded area), as the function of the number of training epochs (in log-log scale) across various batch sizes in MNIST classifications using {\bf FNN} with fixed learning rate $0.01$ and $5$ random initializations. Although $\hat\kappa$ with the large batch size decreases more smoothly rather than the small batch size, we observe that $\hat\kappa$ still decreases well with minibatches of size 64. 
	We did not match the ranges of the y-axes across the plots to emphasize the trend of monotonic decrease.
	}
	\label{fig:kappa_batchsize}
\end{figure}

Our theory suggests a sufficiently large batch size for verification. We empirically analyze how large a batch size is needed in Figure \ref{fig:kappa_batchsize}. From these plots, $\hat\kappa$ decreases monotonically regardless of the minibatch size, but the variance over multiple training runs is much smaller with a larger minibatch size. We thus decide to use a practical size of $64$. With this fixed minibatch size, we use a fixed learning rate of $0.01$, which allows us to achieve the training accuracy of $>99.9\%$ for every training run in our experiments. We repeat each setup five times starting from different random initial parameters and report both the mean and standard deviation.

\subsection{Directional Uniformity Increases} 

\paragraph{FNN and DFNN}

We first observe that $\hat\kappa$ decreases over training regardless of the network's depth in Figure~\ref{fig:kfinal}~(a,b). We however also notice that $\hat\kappa$ decrease monotonically with the {\bf FNN}, but less so with its deeper variant ({\bf DFNN}). We conjecture this is due to the less-smooth loss landscape of a deep neural network. This difference between {\bf FNN} and {\bf DFNN} however almost entirely vanishes when batch normalization ({\bf +BN}) is applied (Figure~\ref{fig:kfinal}~(e,f)). This was expected as batch normalization is known to make the loss function behave better, and our theory assumes a smooth objective function.

\paragraph{CNN} 

The {\bf CNN} is substantially deeper than either {\bf FNN} or {\bf DFNN} and is trained on a substantially more difficult problem of CIFAR-10. In other words, the assumptions underlying our theory may not hold as well. Nevertheless, as shown in Figure~\ref{fig:kfinal}~(c), $\hat\kappa$ eventually drops below its initial point, although this trend is not monotonic and $\hat\kappa$ fluctuates significantly over training. The addition of batch normalization ({\bf +BN}) helps with the fluctuation but $\hat\kappa$ does not monotonically decrease (Figure~\ref{fig:kfinal}~(g)). On the other hand, we observe the monotonic decrease of $\hat\kappa$ when skip connections ({\bf +Res}) are introduced (Figure~\ref{fig:kfinal}~(c) vs. Figure~\ref{fig:kfinal}~(d)) albeit still with some level of fluctuation especially in the early stage of learning. When both batch normalization and skip connections are used ({\bf +Res+BN}), the behaviour of $\hat\kappa$ matches with our prediction without much fluctuation. 

\paragraph{Effect of {\bf +BN} and {\bf +Res}}

Based on our observations that the uniformity of minibatch gradients increases monotonically, when a deep neural network is equipped with residual connection ({\bf +Res}) and trained with batch normalization ({\bf +BN}), we conjecture that the loss function induced from these two techniques better satisfies the assumptions underlying our theoretical analysis, such as its well-behavedness. This conjecture is supported by for instance  \citet{Santurkar}, who demonstrated batch normalization guarantees the boundedness of Hessian, and \citet{Orhan}, who showed residual connections eliminate some singularities of Hessian.

\paragraph{$\hat\kappa$ near the end of training}

The minimum average $\hat\kappa$ of {\bf DFNN+BN}, which has $1,920,000$ parameters, is $71,009.20$, that of {\bf FNN+BN}, which has $636,800$ parameters, is $23,059.16$, and that of {\bf CNN+BN+Res}, which has  $207,152$ parameters, is $20,320.43$. These average $\hat\kappa$ are within a constant multiple of estimated $\kappa$ using 3,000 samples from the vMF distribution with true $\kappa=0$ ($35,075.99$ with $1,920,000$ dimensions, $11,621.63$ with $636,800$ dimensions, and $3, 781.04$ with $207,152$ dimensions.) This implies that 
we cannot say that the underlying directional distribution of minibatch gradients in all these cases at the end of training is not close to uniform~\citep{cutting2017tests}. For more detailed analysis, see {\bf Supplementary~\ref{sec:kappa_hat}}.

\begin{figure}[t]
\small
\begin{subfigure}{0.245\linewidth}
		\centering
		\includegraphics[width=\linewidth]{./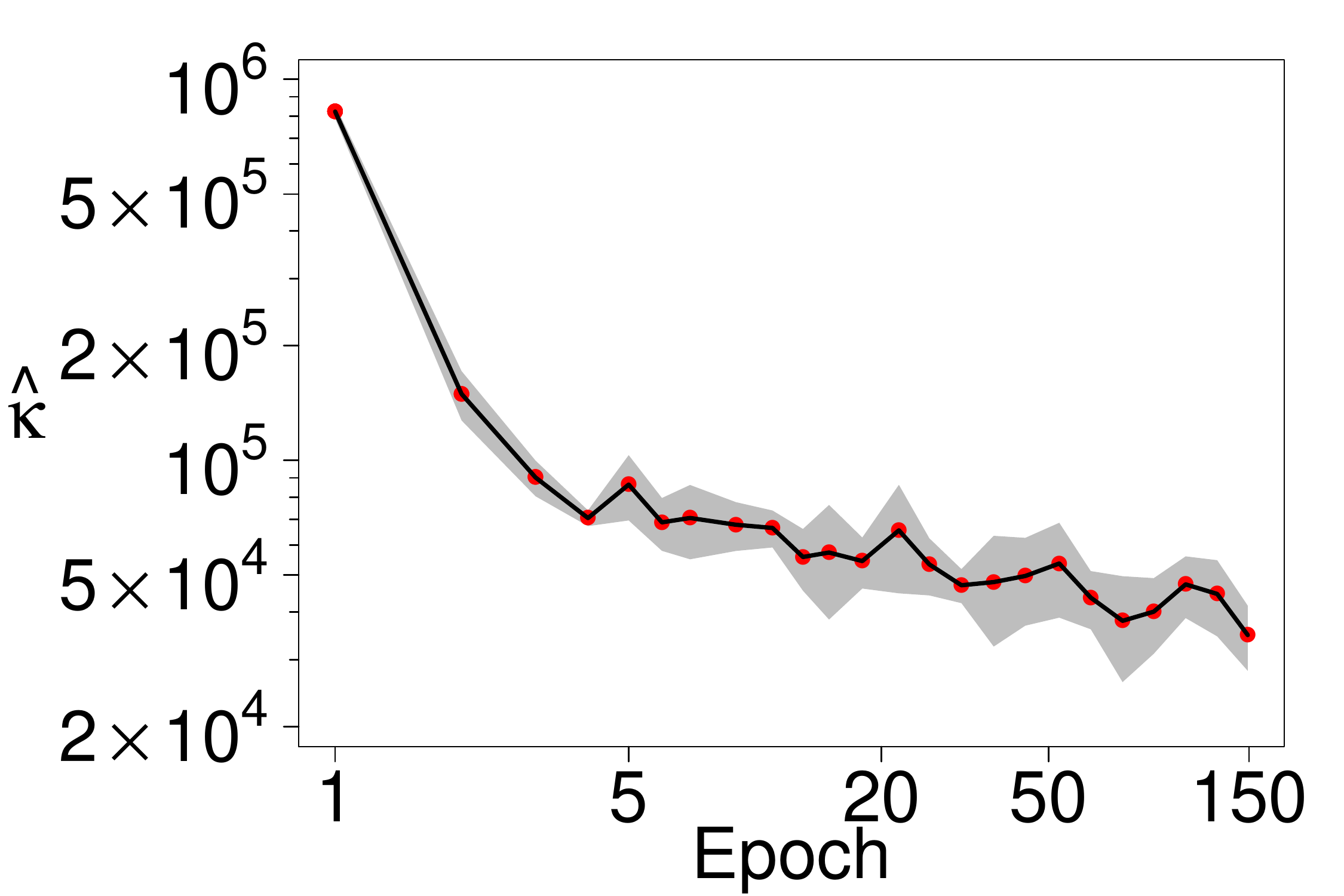}
		\caption{\textbf{FNN} \\
			$~~~~~~~$MNIST\\      
			$~~~~~~~98.18\pm0.04\%$}\label{subfig:kfinal_MNIST_FNN}
	\end{subfigure}
	\begin{subfigure}{0.245\linewidth}
		\centering
		\includegraphics[width=\linewidth]{./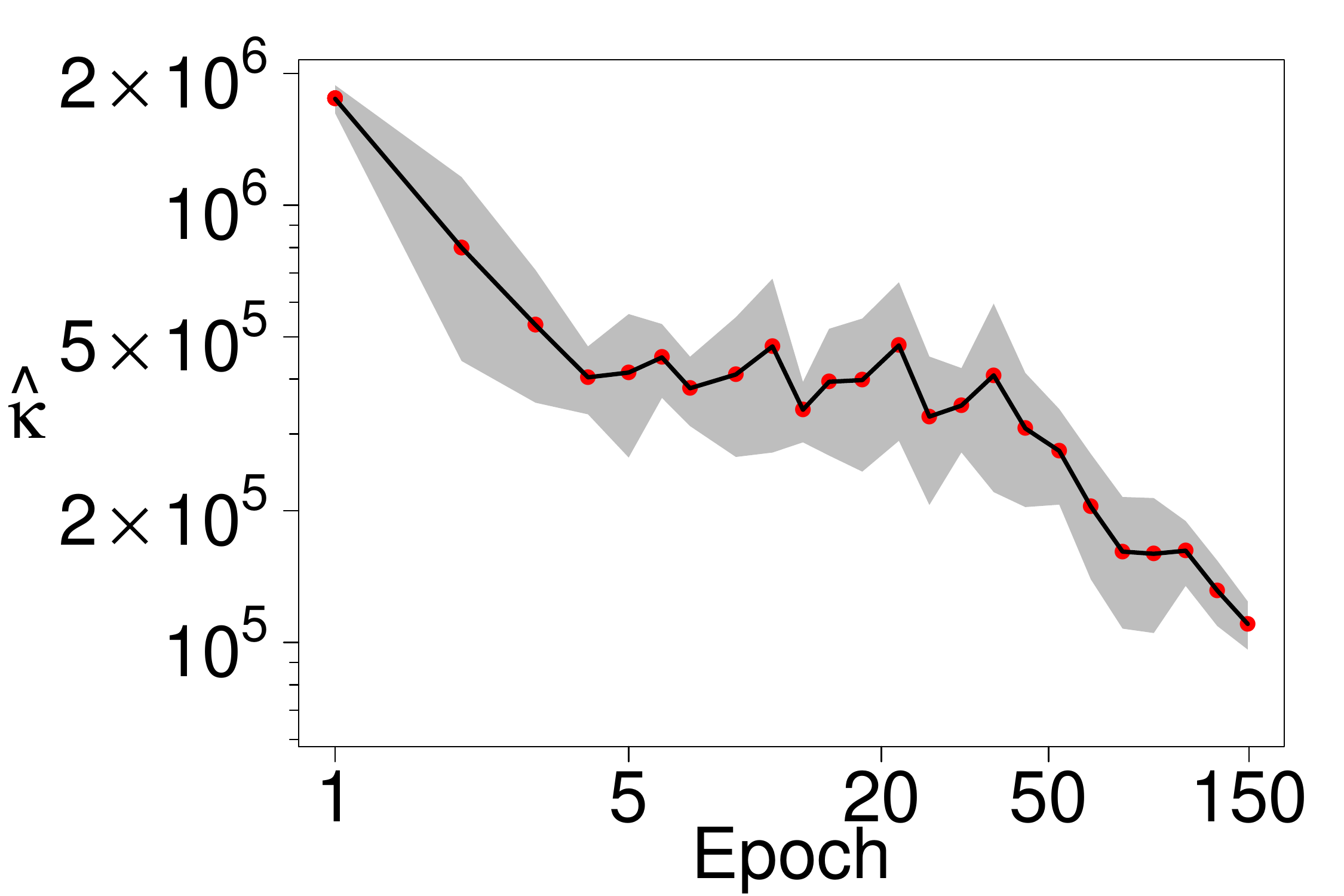}
		\caption{\textbf{DFNN} \\
			$~~~~~~~$MNIST\\ $~~~~~~~98.30\pm0.05\%$}\label{subfig:kfinal_MNIST_DFNN}
	\end{subfigure}
	\begin{subfigure}{0.245\linewidth}
		\centering
		\includegraphics[width=\linewidth]{./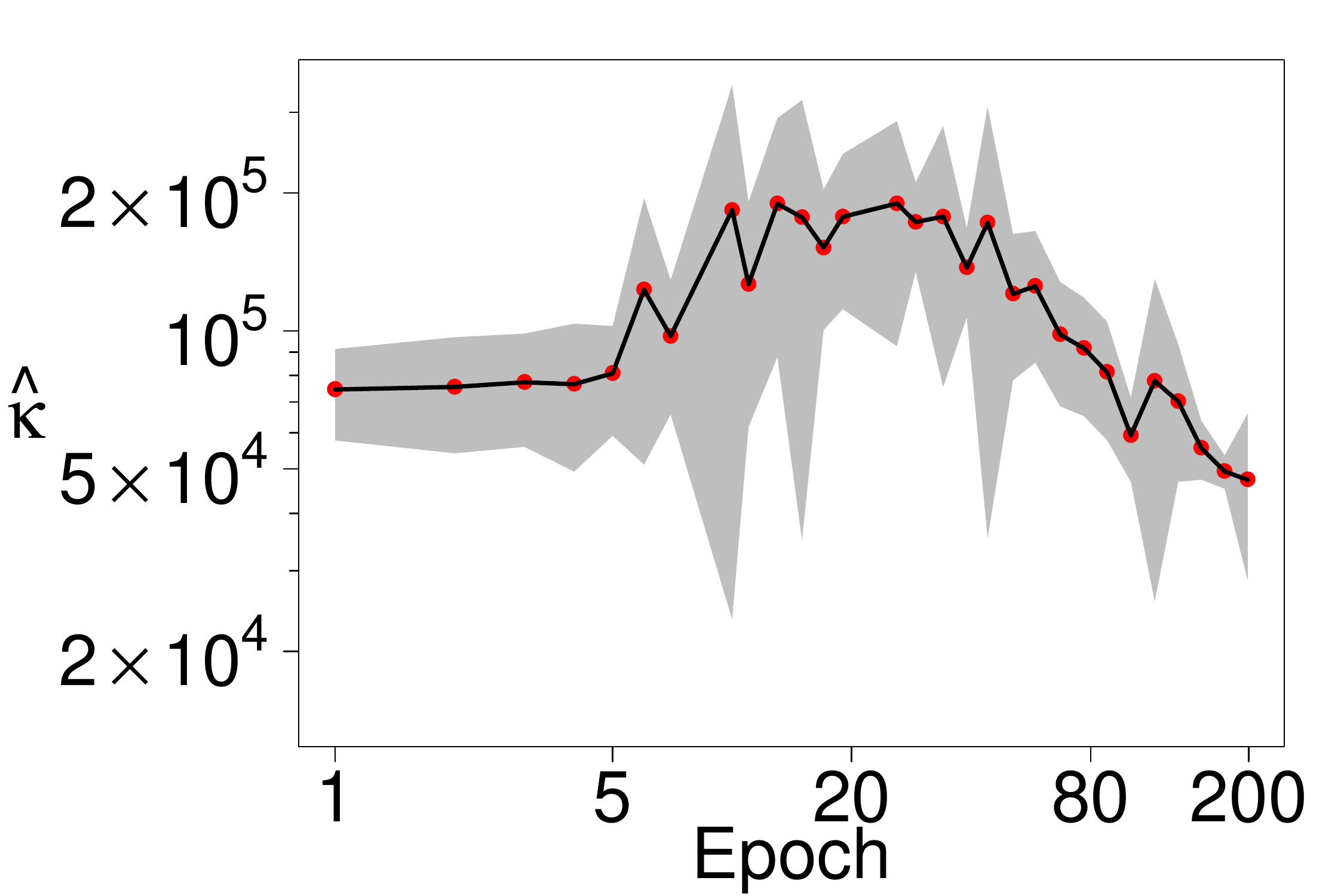}
		\caption{\textbf{CNN}\\
			$~~~~~~~$CIFAR-10\\ $~~~~~~~53.78\pm3.22\%$}\label{subfig:kfinal_C10_CNN}
	\end{subfigure}
	\begin{subfigure}{0.245\linewidth}
		\centering
		\includegraphics[width=\linewidth]{./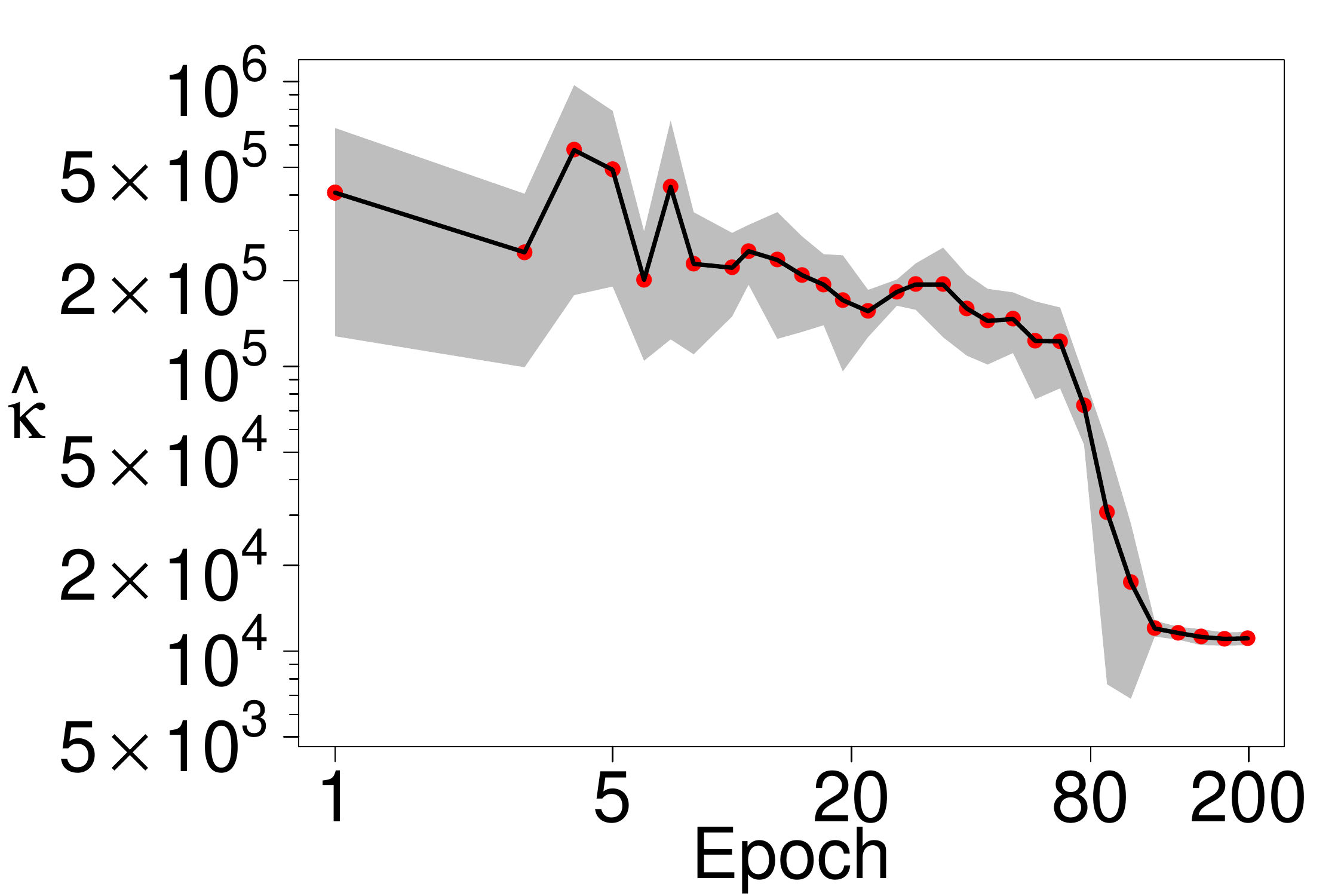}
		\caption{\textbf{CNN+Res}\\
			 $~~~~~~~$CIFAR-10\\
			 $~~~~~~~63.39\pm1.68\%$}\label{subfig:kfinal_C10_ResNet}
	\end{subfigure}
	
	\begin{subfigure}{0.245\linewidth}
		\centering
		\includegraphics[width=\linewidth]{./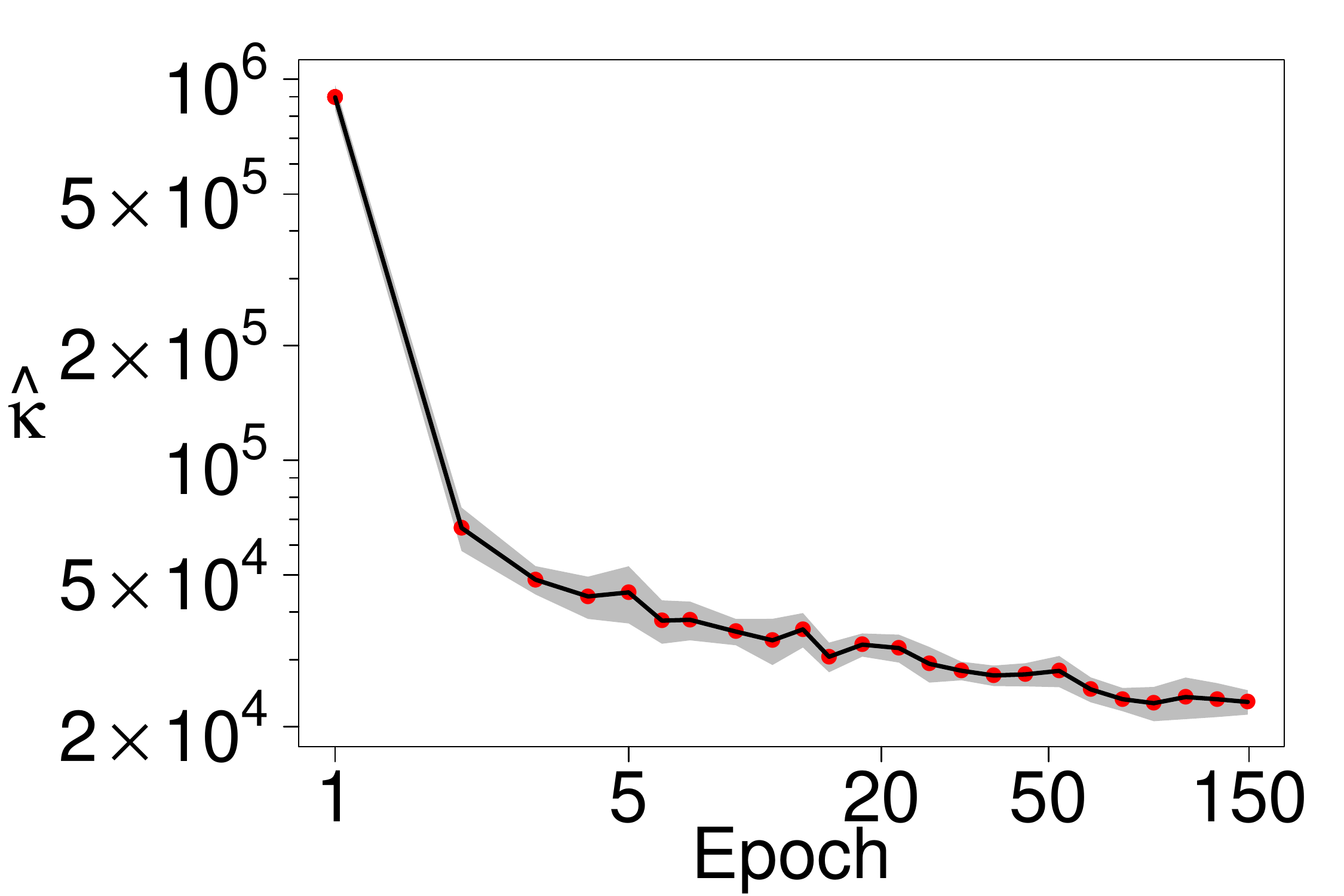}
		\caption{\textbf{FNN+BN}\\
			$~~~~~~~$MNIST\\ $~~~~~~~98.37\pm0.08\%$}\label{subfig:kfinal_MNIST_FNN_BN}
	\end{subfigure}
	\begin{subfigure}{0.245\linewidth}
		\centering
		\includegraphics[width=\linewidth]{./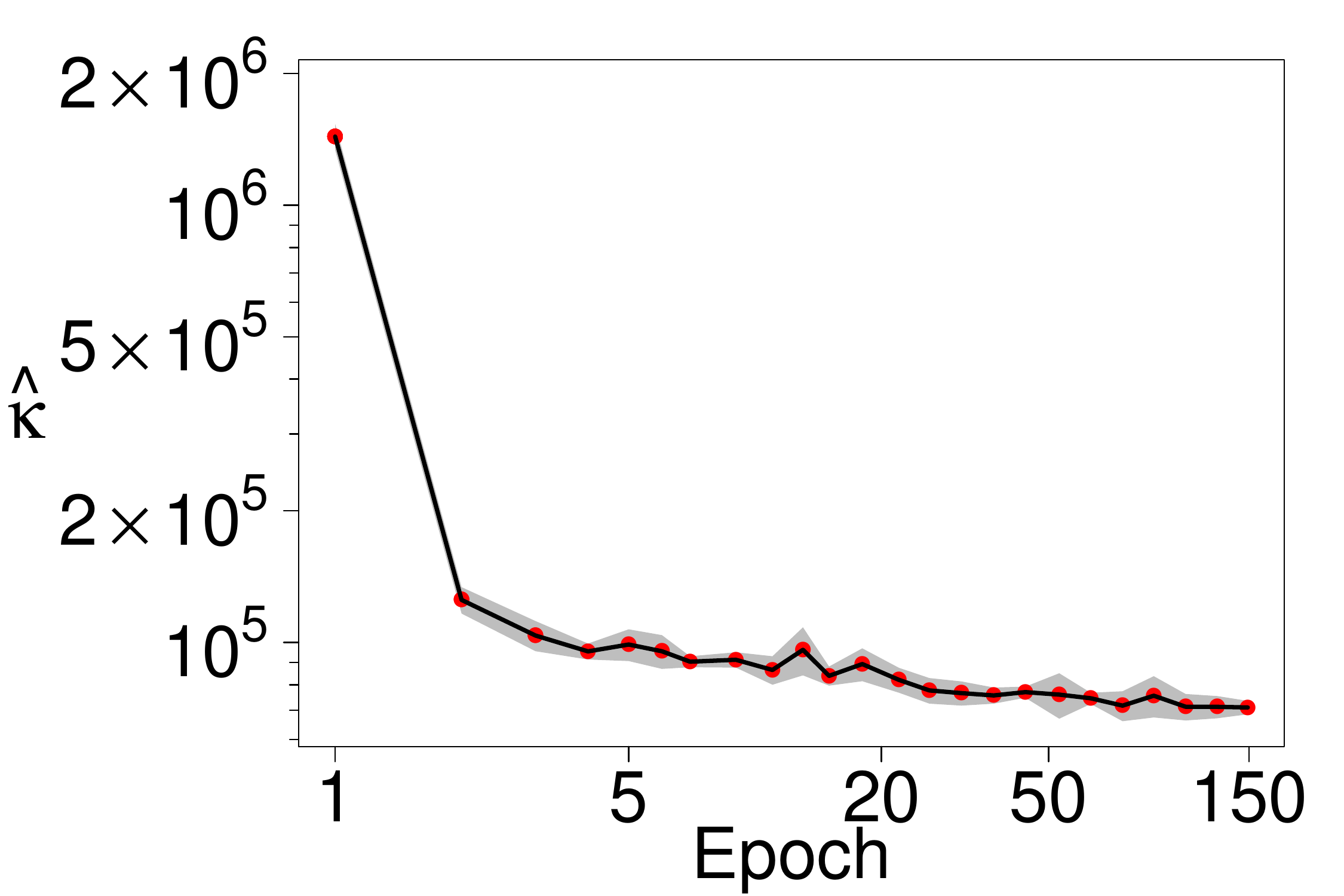}
		\caption{\textbf{DFNN+BN}\\
			$~~~~~~~$MNIST\\ $~~~~~~~98.51\pm0.11\%$}\label{subfig:kfinal_MNIST_DFNN_BN}
	\end{subfigure}
	\begin{subfigure}{0.245\linewidth}
		\centering
		\includegraphics[width=\linewidth]{./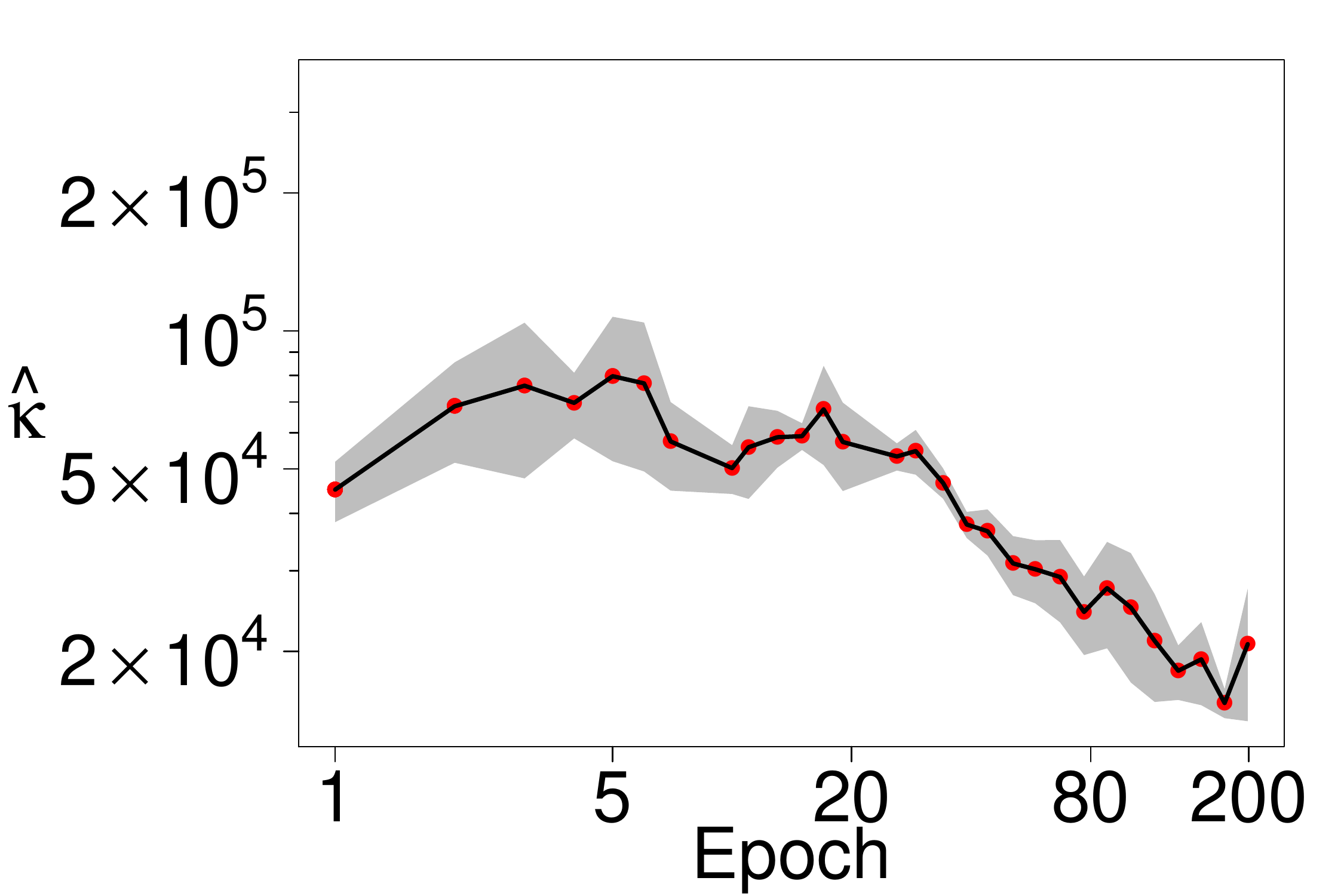}
		\caption{\textbf{CNN+BN}\\
			$~~~~~~~$CIFAR-10\\ 
			$~~~~~~~73.00\pm1.16\%$}\label{subfig:kfinal_C10_CNN_BN}
	\end{subfigure}
	\begin{subfigure}{0.245\linewidth}
		\centering
		\includegraphics[width=\linewidth]{./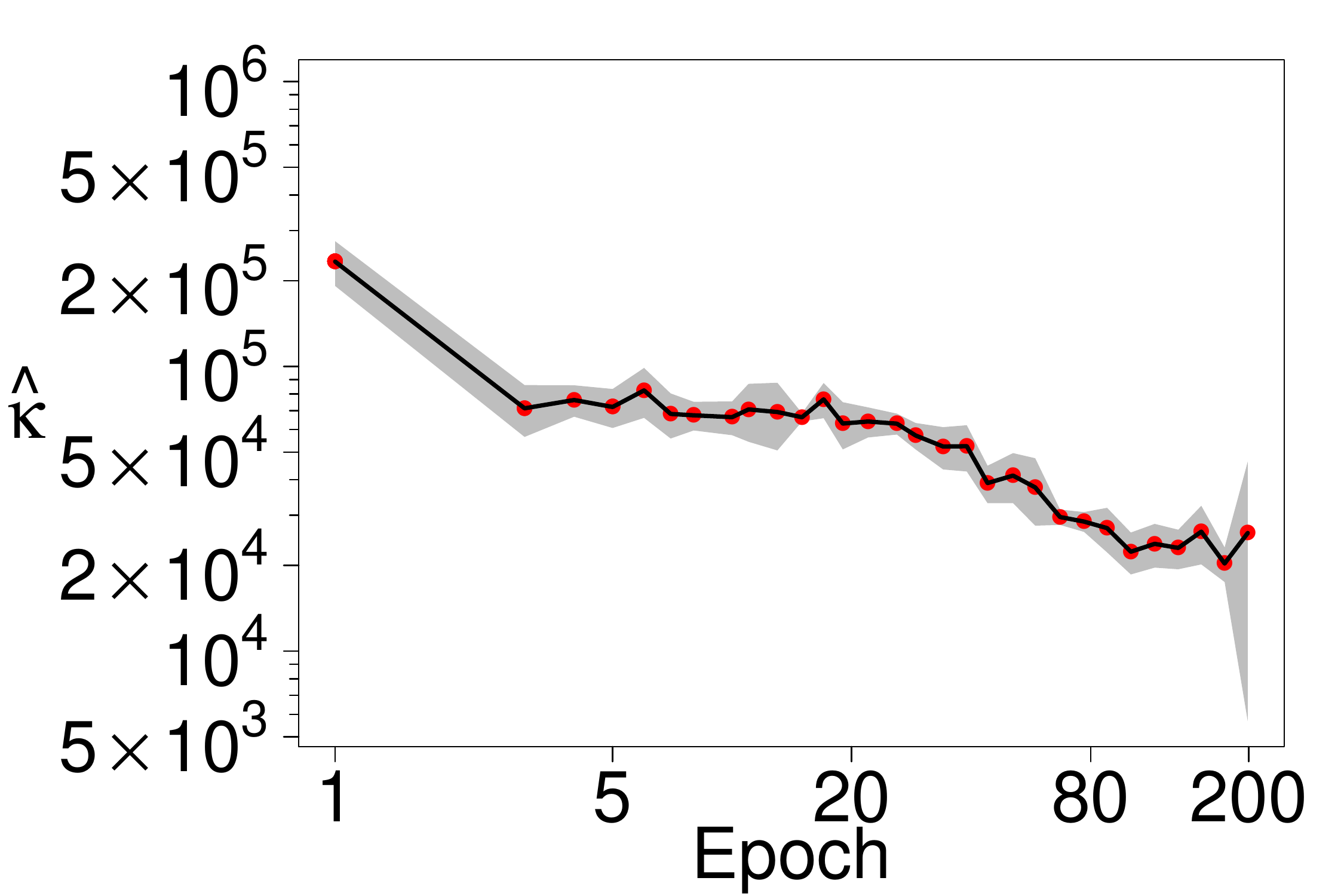}
		\caption{\textbf{CNN+Res+BN}\\
			$~~~~~~~$CIFAR-10\\ $~~~~~~~72.71\pm1.02\%$}\label{subfig:kfinal_C10_ResNet_BN}
	\end{subfigure}
	\caption{
	We show the average $\hat\kk$ (black curve) $\pm$ std. (shaded area), as the function of the number of training epochs(in log-log scale) for all considered setups. We report the name of architecture, dataset and maximum valid accuracy(mean$\pm$std.) of all epochs. Although $\hat\kk$ decreases eventually over time in all the cases, batch normalization ({\bf +BN}) significantly reduces the variance of the directional stochasticity ((a--d) vs. (e--h)). We also observe that the skip connections make $\hat\kk$ decrease monotonically ((c,g) vs. (d,h)). Note the differences in the y-scales.
	}
	\label{fig:kfinal}
\end{figure}

\subsection{Directional Uniformity and Other Metrics}

The gradient stochasticity (GS) was used by \citet{Tishby} as a main metric for identifying two phases of SGD learning in deep neural networks. This quantity includes both the gradient norm stochasticity (GNS) and the directional uniformity $\kk$, implying that either or both of GNS and $\kk$ could drive the gradient stochasticity. We thus investigate the relationship among these three quantities as well as training and validation losses. 
We focus on {\bf CNN}, {\bf CNN+BN} and {\bf CNN+Res+BN} trained on CIFAR-10. 

From Figure~\ref{fig:Relation_kSNR}~(First row), it is clear that the proposed metric of directional uniformity $\hat\kk$ correlates better with the gradient stochasticity than the gradient norm stochasticity does. This was especially prominent during the early stage of learning, suggesting that the directional statistics of minibatch gradients is a major explanatory factor behind the learning dynamics of SGD. This difference in correlations is much more apparent from the scatter plots in Figure~\ref{fig:Relation_kSNR}~(Second row). We show these plots created from other four training runs per setup in {\bf Supplemental \ref{sec:fig_kSNR}}. 


\begin{figure}[t]
	\begin{subfigure}{0.33\linewidth}
		\centering
		\includegraphics[width=\linewidth]{./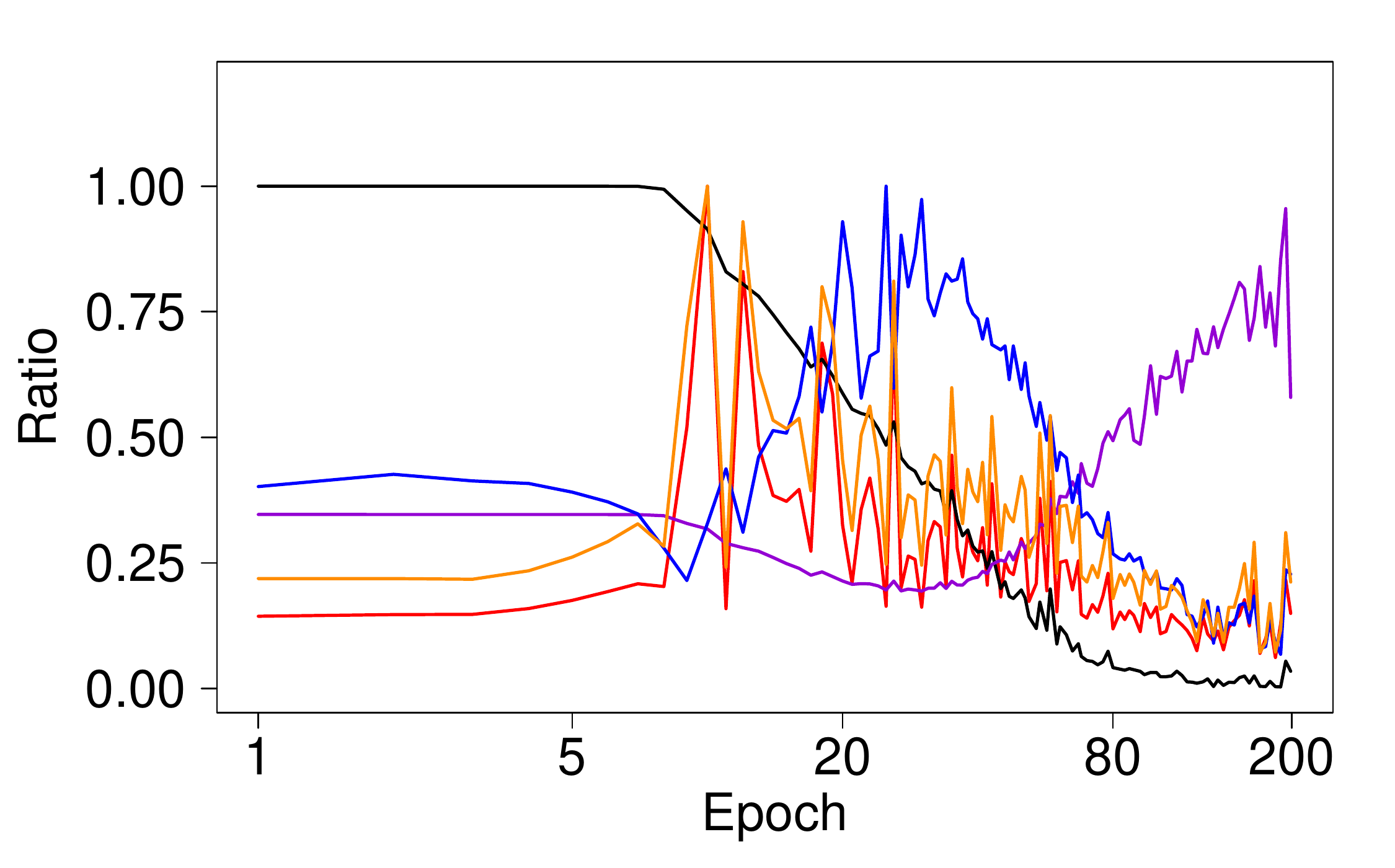}
	\end{subfigure}
	\begin{subfigure}{0.33\linewidth}
		\centering
		\includegraphics[width=\linewidth]{./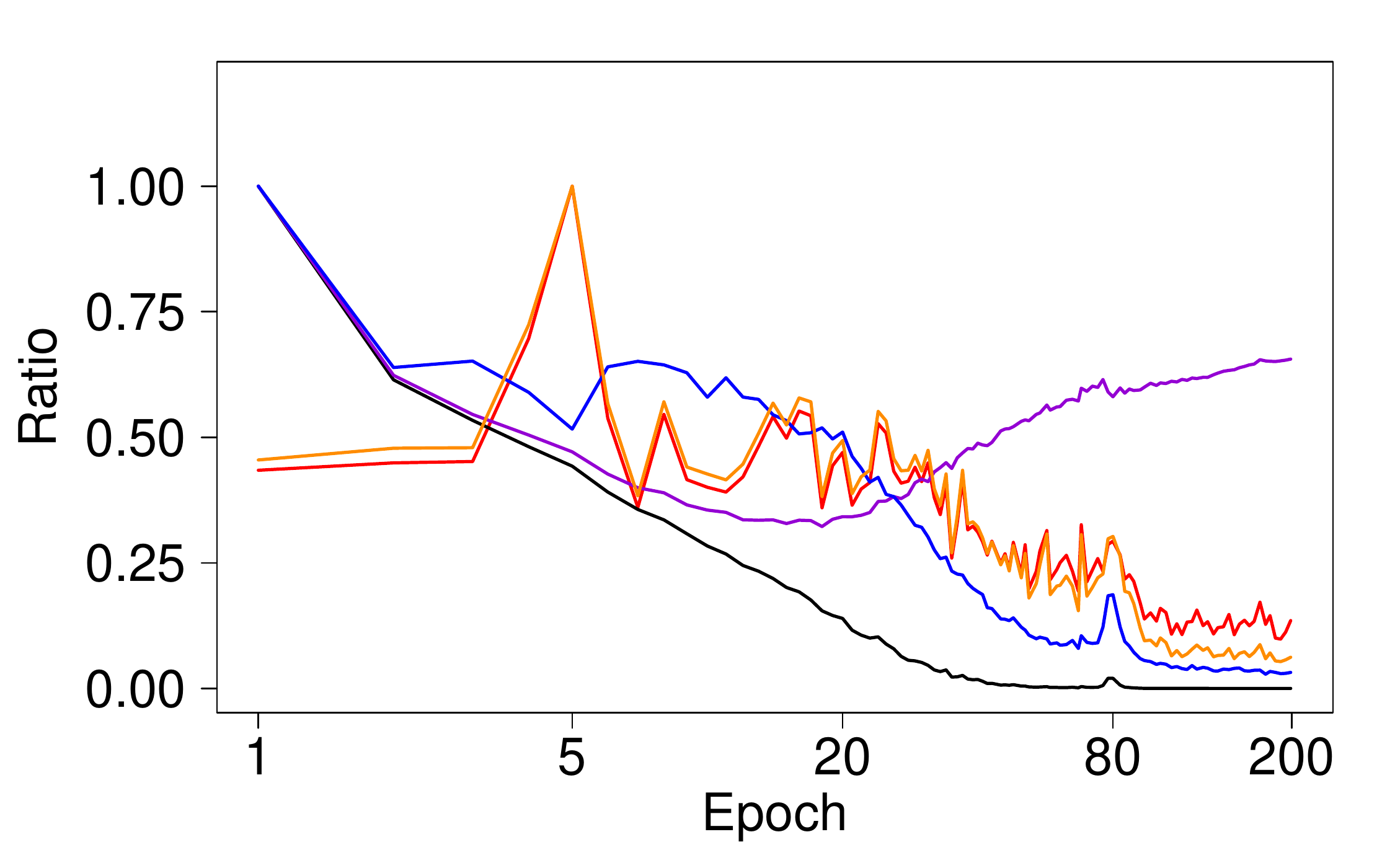}
	\end{subfigure}		
	\begin{subfigure}{0.33\linewidth}
		\centering
		\includegraphics[width=\linewidth]{./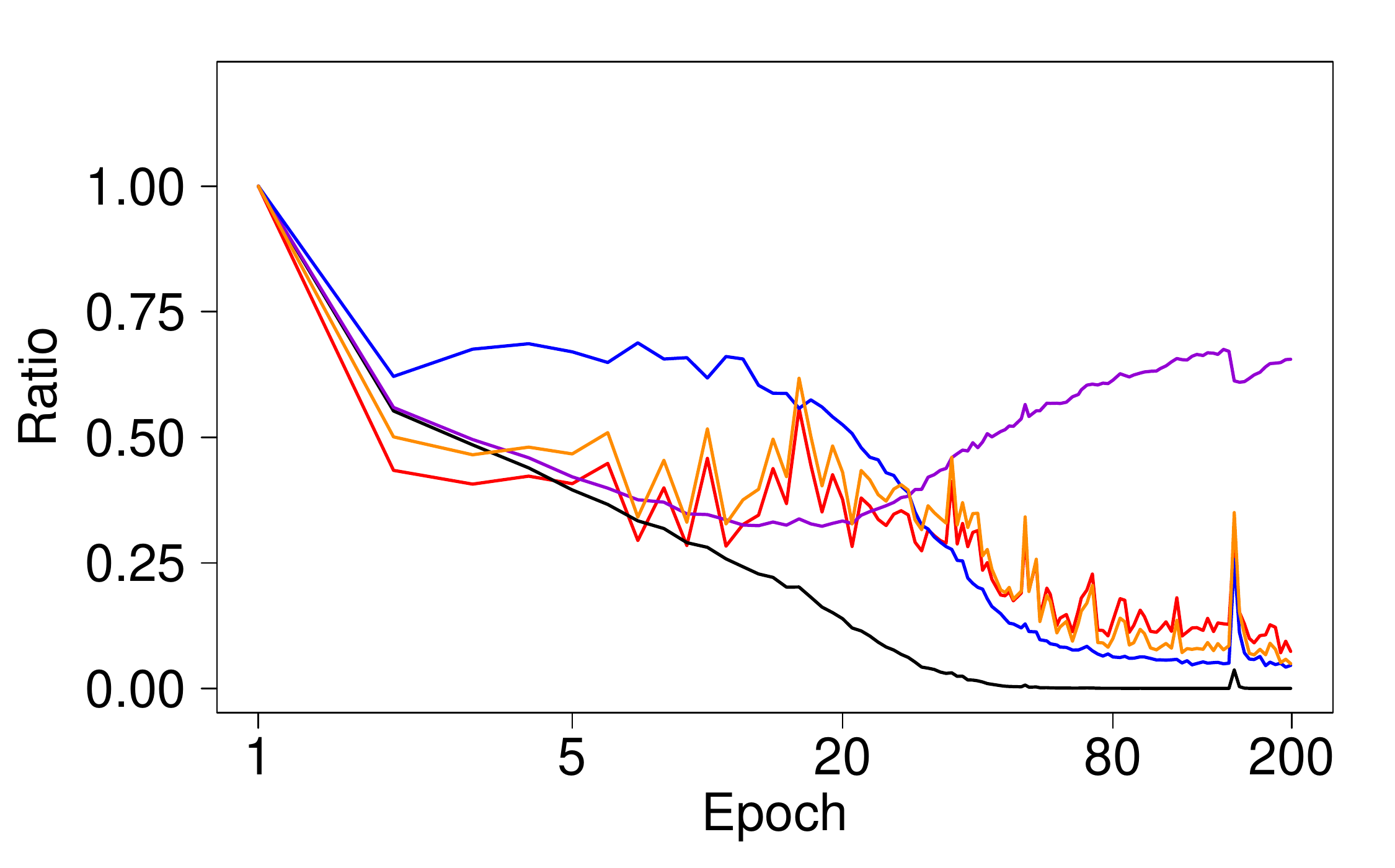}
	\end{subfigure}

	\begin{subfigure}{0.33\linewidth}
		\centering
		\includegraphics[width=\linewidth]{./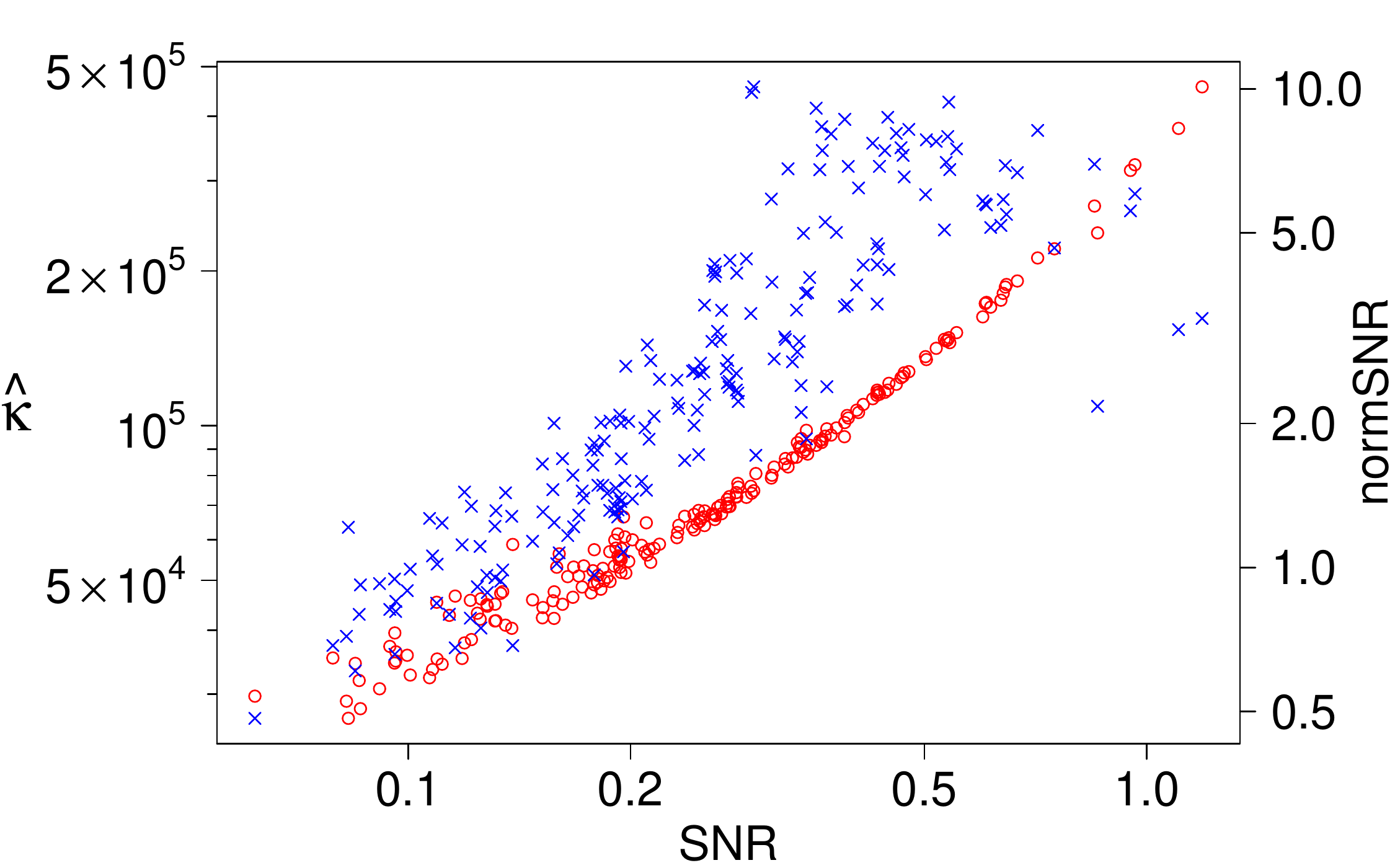}
		\caption{\textbf{CNN}}\label{subfig:scatter_C10_CNN_Seed0}
	\end{subfigure}	
	\begin{subfigure}{0.33\linewidth}
		\centering
		\includegraphics[width=\linewidth]{./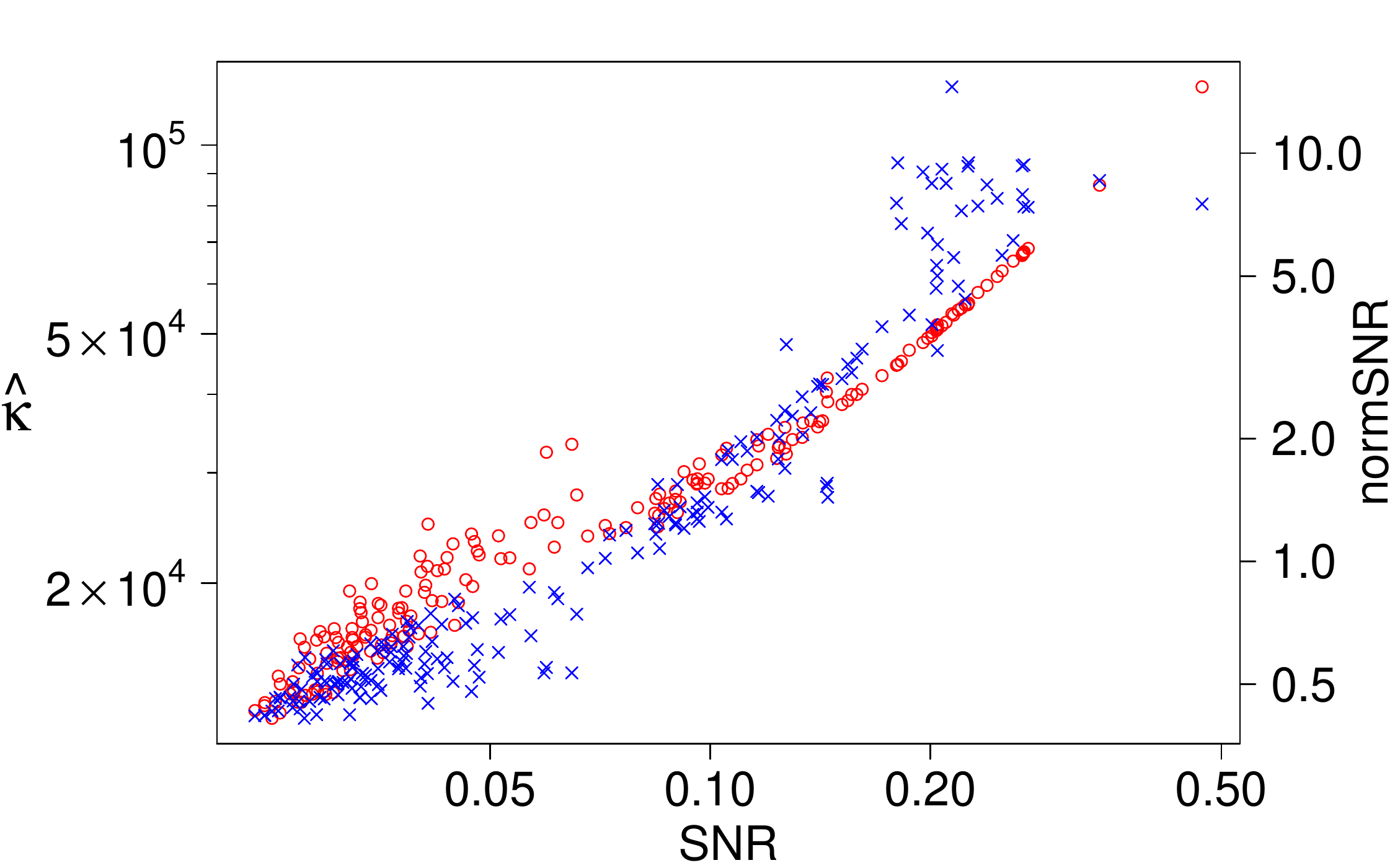}
		\caption{\textbf{CNN+BN}}\label{subfig:scatter_C10_CNN_BN_Seed0}
	\end{subfigure}
	\begin{subfigure}{0.33\linewidth}
		\centering
		\includegraphics[width=\linewidth]{./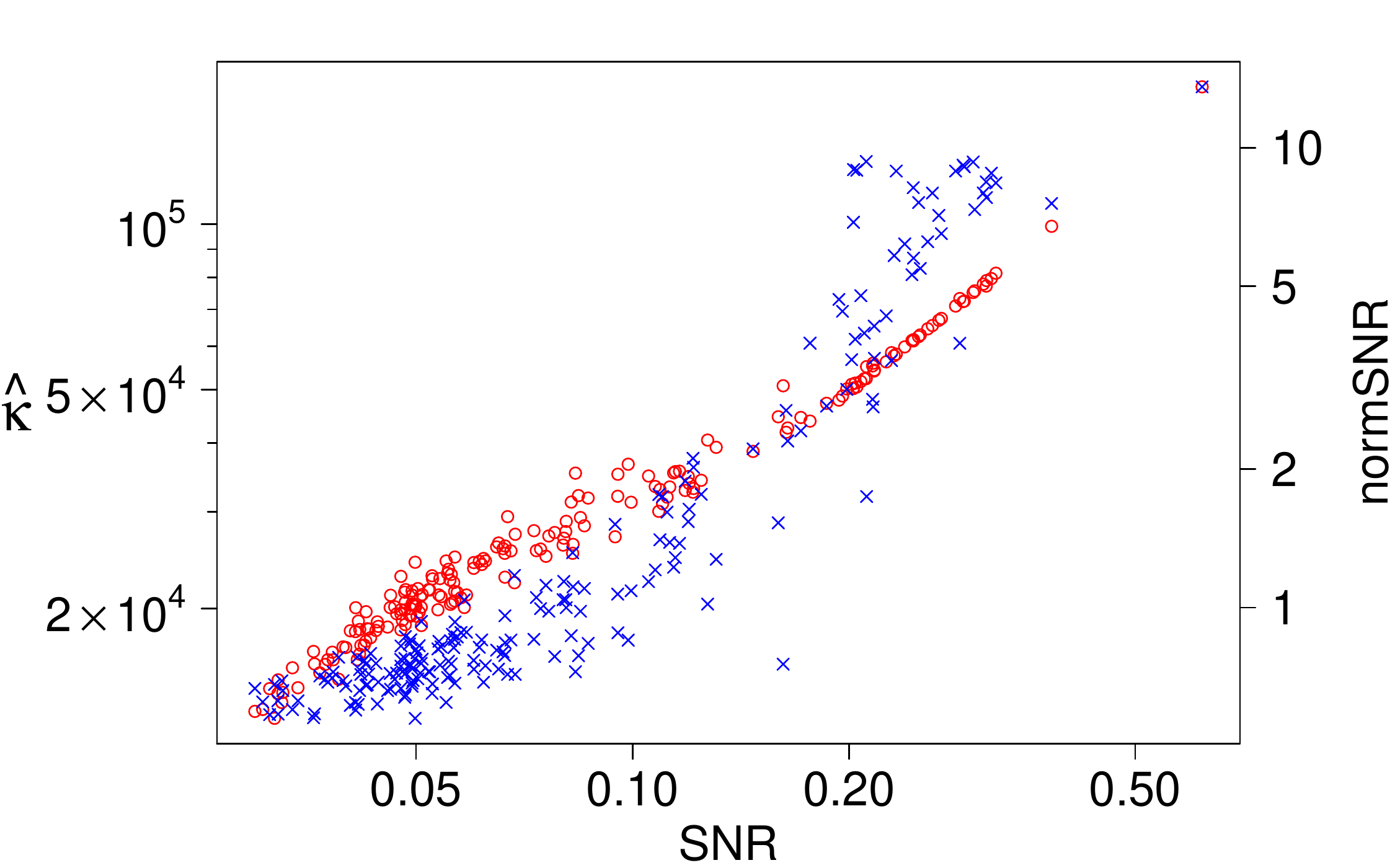}
		\caption{\textbf{CNN+Res+BN}}\label{subfig:scatter_C10_ResNet_BN_Seed0}
	\end{subfigure}
	\vskip 0pt
\begin{subfigure}{0.69\linewidth}
	\centering
	\includegraphics[trim=2cm 3cm 2cm 2cm, clip,width=0.5\linewidth]{./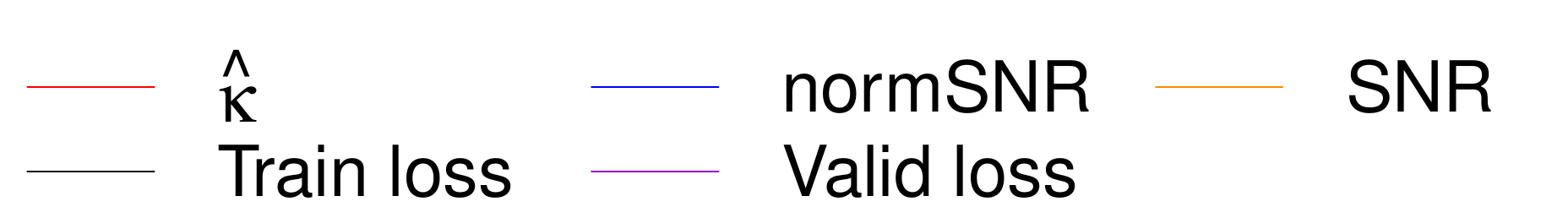}
\end{subfigure}
\begin{subfigure}{0.3\linewidth}
	\centering
	\includegraphics[trim=8.5cm 3cm 7.5cm 2cm, clip,width=0.5\linewidth]{./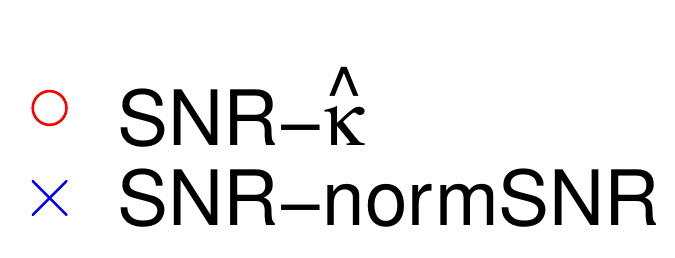}
\end{subfigure}
\vskip 0pt
	\caption{
		(First row) We plot the evolution of the training loss (Train loss), validation loss (Valid loss), inverse of gradient stochasticity (SNR), inverse of gradient norm stochasticity (normSNR) and directional uniformity $\hat\kk$. We normalized each quantity by its maximum value over training for easier comparison on a single plot.
		In all the cases, SNR (orange) and $\hat{\kk}$ (red) are almost entirely correlated with each other, while normSNR is less correlated. (Second row) We further verify this by illustrating SNR-$\hat\kk$ scatter plots (red) and SNR-normSNR scatter plots (blue) in log-log scales. These plots suggest that the SNR is largely driven by the directional uniformity.}\label{fig:Relation_kSNR}
\end{figure}

\section{Conclusion}

Stochasticity of gradients is a key to understanding the learning dynamics of SGD~\citep{Tishby} and has been pointed out as a factor behind the success of SGD~\citep[see, e.g.,][]{lecun2012efficient,Keskar}. In this paper, we provide a theoretical framework using von Mises-Fisher distribution, under which the directional stochasticity of minibatch gradients can be estimated and analyzed, and show that the directional uniformity increases over the course of SGD. Through the extensive empirical evaluation, we have observed that the directional uniformity indeed improves over the course of training a deep neural network, and that its trend is monotonic when batch normalization and skip connections were used. Furthermore, we demonstrated that the stochasticity of minibatch gradients is largely determined by the directional stochasticity rather than the gradient norm stochasticity. 

Our work in this paper suggests two major research directions for the future. First, our analysis has focused on the aspect of optimization, and it is an open question how the directional uniformity relates to the generalization error although handling the stochasticity of gradients has improved SGD \citep{noise, Hoffer, Samuel, jin2017escape}. Second, we have focused on passive analysis of SGD using the directional statistics of minibatch gradients, but it is not unreasonable to suspect that SGD could be improved by explicitly taking into account the directional statistics of minibatch gradients during optimization. 

\subsubsection*{Acknowledgments}
The first and third authors' work was supported in part by Kakao and Kakao Brain corporations, and the National Research Foundation of Korea (NRF) funded by the Korea government (MEST) [Grant NRF-2017R1A2B4011546]. The second author thanks support by AdeptMind, eBay, TenCent, NVIDIA and CIFAR.

\bibliography{iclr2019_conference}
\bibliographystyle{iclr2019_conference}
\clearpage
\setcounter{section}{0}
\renewcommand\thesection{\Alph{section}}
\newtheorem{sdefinition}{Definition}[section]
\newtheorem{slemma}{Lemma}[section]
\newtheorem{stheorem}{Theorem}[section]
\renewcommand\thesection{\Alph{section}}

\section*{Supplementary Material}
\section{Proofs for Theorem \ref{thm:stats_mgrad}}\label{sec:normSNR} 
In proving \textbf{Theorem 1}, we use \textbf{Lemma \ref{lem:stats_mgrad}}. Define selector random variables\citep{Hoffer} as below:
$$\rs_i=\left\{ \begin{array}{ccc}
1, & \text{if } i\in \sI \\
0, & \text{if } i\notin \sI \end{array} \right. .$$
Then we have $$\hg=\frac{1}{m}\sum_{i=1}^n\gi \rs_i.$$

\begin{slemma}\label{lem:stats_mgrad}
	Let $\hg$ be a minibatch gradient induced from the minibatch index set $\sI$ with batch size $m$ from $\{1,\dots,n\}$. Then
	\begin{equation}
	0\leq \E\|\hg\|^2- \|\g\|^2 \leq \frac{2(n-m)}{m(n-1)}\gamma.
	\end{equation}
	where $\gamma=\max_{{i,j}\in\ndex}|\left<\gi,\gj\right>|$. 
\end{slemma}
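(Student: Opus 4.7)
The plan is to work directly with the selector random variables $\rs_i$ defined just before the lemma statement. Since the minibatch $\sI$ of size $m$ is drawn uniformly at random without replacement from $\ndex$, the selectors satisfy $\E[\rs_i]=\E[\rs_i^2]=m/n$ and, for $i\neq j$, $\E[\rs_i\rs_j]=\tfrac{m(m-1)}{n(n-1)}$. Starting from $\hg=\tfrac{1}{m}\sum_i \gi\rs_i$, I would expand $\|\hg\|^2=\tfrac{1}{m^2}\sum_{i,j}\langle\gi,\gj\rangle \rs_i\rs_j$, take expectations, and split the double sum into diagonal ($i=j$) and off-diagonal ($i\neq j$) contributions. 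The target $\|\g\|^2=\tfrac{1}{n^2}\sum_{i,j}\langle\gi,\gj\rangle$ splits the same way, so subtracting produces an expression whose coefficients depend only on $m$ and $n$.

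The main algebraic step is to collect these coefficients; after simplification I expect to obtain
\[
\E\|\hg\|^2 - \|\g\|^2 = \frac{n-m}{mn^2(n-1)}\Bigl[(n-1)\sum_{i}\|\gi\|^2 - \sum_{i\neq j}\langle\gi,\gj\rangle\Bigr].
\]
The key observation is then that the bracket equals $\tfrac12\sum_{i\neq j}\|\gi-\gj\|^2$. This identity follows from expanding $\|\gi-\gj\|^2=\|\gi\|^2+\|\gj\|^2-2\langle\gi,\gj\rangle$, summing over $i\neq j$, and using the symmetry $\sum_{i\neq j}\|\gi\|^2=(n-1)\sum_i\|\gi\|^2$. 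Non-negativity of a sum of squared norms immediately yields the lower bound $\E\|\hg\|^2\geq\|\g\|^2$; this could also be seen from Jensen applied to $\E[\hg]=\g$, but the explicit sum-of-squares form is what drives the upper bound.

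For the upper bound, the definition $\gamma=\max_{i,j\in\ndex}|\langle\gi,\gj\rangle|$ includes the diagonal $i=j$, so in particular $\|\gi\|^2\leq\gamma$ for every $i$. Hence $\|\gi-\gj\|^2\leq 2\|\gi\|^2+2\|\gj\|^2\leq 4\gamma$, and summing over the $n(n-1)$ off-diagonal pairs gives
\[
\E\|\hg\|^2-\|\g\|^2 \leq \frac{n-m}{mn^2(n-1)}\cdot \tfrac12\cdot 4n(n-1)\gamma = \frac{2(n-m)\gamma}{mn}\leq \frac{2(n-m)\gamma}{m(n-1)},
\]
where the last inequality uses $n\geq n-1$. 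The only delicate part of the whole argument is the bookkeeping that produces the common prefactor $\frac{n-m}{mn^2(n-1)}$ from the four raw coefficients; once the bracket is rewritten as a sum of squared pairwise differences, both bounds drop out with essentially no further work.
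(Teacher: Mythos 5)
Your proof is correct and follows essentially the same route as the paper's: the selector variables, the computation of $\E[\rs_i\rs_j]=\frac{m}{n}\delta_{ij}+\frac{m(m-1)}{n(n-1)}(1-\delta_{ij})$, and the diagonal/off-diagonal split are identical, and your common prefactor $\frac{n-m}{mn^2(n-1)}$ agrees with the paper's coefficients after simplification. The only (minor) difference is the last step: your rewriting of the bracket as $\tfrac12\sum_{i\neq j}\|\gi-\gj\|^2$ makes the lower bound manifest without invoking Jensen and gives the slightly sharper intermediate constant $\frac{2(n-m)}{mn}\gamma$ before relaxing to $\frac{2(n-m)}{m(n-1)}\gamma$, whereas the paper bounds $\sum_i\langle\gi,\gi\rangle$ and $\sum_{i,j}\langle\gi,\gj\rangle$ term by term against $n\gamma$ and $n^2\gamma$.
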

\begin{proof}
	By Jensen's inequality, $0\leq \EE\|\hg\|^2- \|\g\|^2$. Note that $$\EE\|\hg\|^2= \sumn\sumnn\frac{1}{m^2}\left<\gi,\gj\right>\EE[\rs_i\rs_j].$$
	Since $\mathbb{E}[\rs_i\rs_j]=\frac{m}{n}\delta_{ij}+\frac{m(m-1)}{n(n-1)}(1-\delta_{ij})$,
	\begin{align*}
	\EE\|\hg\|^2-\|\g\|^2&=\Big{(}\frac{1}{mn}-\frac{m-1}{mn(n-1)}\Big{)}\sumn\left<\gi,\gi\right>\\
	&\quad+\Big{(}\frac{m-1}{mn(n-1)}-\frac{1}{n^2}\Big{)}\sumn\sumnn\left<\gi,\gj\right>\\
	&= \Big{(}\frac{1}{mn}-\frac{m-1}{mn(n-1)}\Big{)}\sumn\left<\gi,\gi\right>\\
	&\quad+\frac{m-n}{mn^2(n-1)}\sumn\sumnn\left<\gi,\gj\right>\\
	&\leq\Big{(}\frac{1}{mn}-\frac{m-1}{mn(n-1)}\Big{)}n\gamma+\frac{n-m}{mn^2(n-1)}n^2\gamma\\
	&=\frac{2(n-m)}{m(n-1)}\gamma    	 
	\end{align*}
	where $\gamma=\max_{{i,j}\in\ndex}|\left<\gi,\gj\right>|$.
\end{proof}	

\begin{customthm}{\ref{thm:stats_mgrad}}
	Let $\hg$ be a minibatch gradient induced from the minibatch index set $\sI$ of batch size $m$ from $\ndex$ and suppose $\gamma=\max_{i,j\in\ndex}|\left<\gi,\gj\right>|$. Then
	$$0 \leq \E\|\hg\|-\|\g\|\leq\frac{2(n-m)}{m(n-1)}\times \frac{\gamma}{\E\|\hg\| + \|\g\|} \leq \frac{(n-m)\gamma}{m(n-1)\|\g\|}$$
	and
	$$\Var(\|\hg\|)\leq \frac{2(n-m)}{m(n-1)}\gamma~.$$
	Hence,
	$$\frac{\sqrt{\Var(\|\hg\|)}}{\E\|\hg\|} \le \sqrt{\frac{2(n-m)}{m(n-1)}\times \frac{\gamma}{\|\g\|^2}}~.$$
\end{customthm}
\begin{proof}
	By Jensen's inequality, we have $\|\g\|=\|\EE[\hg]\| \le \EE\|\hg\|$ and $(\EE\|\hg\|)^2\leq\EE\|\hg\|^2$. From the second inequality and \textbf{Lemma \ref{lem:stats_mgrad}},
	$$(\EE\|\hg\|)^2\leq\EE\|\hg\|^2\leq\|\g\|^2+\frac{2(n-m)}{m(n-1)}\gamma$$
	or
	$$\big( \EE\|\hg\|-\|\g\| \big) \big(\EE\|\hg\|+\|\g\| \big) \le \frac{2(n-m)}{m(n-1)}\gamma~.$$
	Hence
	$$\EE\|\hg\|\leq\|\g\|+\frac{2(n-m)}{m(n-1)}\times\frac{\gamma}{\EE\|\hg\|+\|\g\| }\leq \frac{(n-m)\gamma}{m(n-1)\|\g\|}.$$
	Further,
	\begin{align*}
	\Var(\|\hg\|)&=\EE\|\hg\|^2-(\EE\|\hg\|)^2\\
	&\leq\EE\|\hg\|^2-\|\EE\hg\|^2\\
	&\leq\|\g\|^2+\frac{2(n-m)}{m(n-1)}\gamma-\|\g\|^2=\frac{2(n-m)}{m(n-1)}\gamma.
	\end{align*}
\end{proof}

\section{Proofs for Theorem \ref{thm:asymtotic_angle}}\label{sec:angle}

For proofs, Slutsky's theorem and delta method are key results to describe limiting behaviors of random variables in distributional sense. 

\begin{stheorem} \label{thm:slutsky}
	\textbf{(Slutsky's theorem, \citet{Casella})} Let $\{\rx_n\}$, $\{\ry_n\}$ be a sequence of random variables that satisfies $\rx_n\Rightarrow \rx$ and $\ry_n\parrow \rho$ when $n$ goes to infinity and $\rho$ is constant. Then $$\rx_n\ry_n\Rightarrow c\rx$$
\end{stheorem}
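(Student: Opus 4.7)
The plan is to reduce Slutsky's theorem (with $\rho$ in place of the evidently mistyped $c$) to the standard \emph{converging together} lemma via the algebraic decomposition
\[
\rx_n \ry_n \;=\; \rho\,\rx_n \;+\; \rx_n(\ry_n - \rho).
\]
By the continuous mapping theorem applied to the continuous map $x\mapsto \rho x$, the first summand satisfies $\rho\,\rx_n \Rightarrow \rho\,\rx$. If I can also show that the second summand converges in probability to $0$, then the portmanteau theorem (or equivalently, the converging together lemma: if $\rz_n \Rightarrow \rz$ and $\rw_n \parrow 0$, then $\rz_n + \rw_n \Rightarrow \rz$) immediately yields $\rx_n\ry_n \Rightarrow \rho\,\rx$.

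Hence the heart of the proof is establishing $\rx_n(\ry_n-\rho)\parrow 0$. I would first extract tightness of $\{\rx_n\}$ from $\rx_n\Rightarrow \rx$: given $\varepsilon>0$, choose a continuity point $M>0$ of the CDF of $|\rx|$ with $\PP(|\rx|>M)<\varepsilon/2$, so that weak convergence of $|\rx_n|$ to $|\rx|$ (again by the continuous mapping theorem) gives $\PP(|\rx_n|>M)<\varepsilon$ for all $n$ larger than some $N_1$. Then, for any fixed $\delta>0$, apply $\ry_n\parrow \rho$ to find $N_2$ with $\PP(|\ry_n-\rho|>\delta/M)<\varepsilon$ for $n\ge N_2$. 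A union bound on the complementary good event $\{|\rx_n|\le M\}\cap\{|\ry_n-\rho|\le \delta/M\}$ yields $\PP(|\rx_n(\ry_n-\rho)|>\delta)<2\varepsilon$ for $n\ge \max(N_1,N_2)$, establishing the claim.

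The main obstacle is the tightness step: weak convergence of $\rx_n$ to $\rx$ does not by itself give a uniform bound on $|\rx_n|$ at every $n$, only for $n$ beyond some threshold, so one has to absorb the initial finitely many indices separately (they are automatically tight as a finite collection) and be careful to pick $M$ at a continuity point of the limiting CDF so that the weak convergence argument applies. Once tightness is secured, the rest is a clean union-bound estimate and an invocation of the converging together lemma, so no further delicate step is needed.
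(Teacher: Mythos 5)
The paper does not actually prove this statement: it is quoted as a standard result with a citation to Casella and Berger, so there is no in-paper argument to compare against. Your proof --- the decomposition $\rx_n\ry_n=\rho\,\rx_n+\rx_n(\ry_n-\rho)$, tightness of $\{\rx_n\}$ (handling the initial finitely many indices separately and choosing $M$ at a continuity point) to get $\rx_n(\ry_n-\rho)\parrow 0$, and the converging-together lemma --- is exactly the standard textbook argument, is correct, and rightly reads the stated conclusion $c\rx$ as the typo it is for $\rho\,\rx$.
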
 

\begin{stheorem} \label{thm:delta}
	\textbf{(Delta method, \citet{Casella})} Let $\ry_n$ be a sequence of random variables that satisfies $\sqrt{n}(\ry_n-\mu)\Rightarrow \mathcal{N}(0,\sigma^2)$. For a given smooth function $f: \R \rightarrow \R$, suppose that $f'(\mu)$ exists and is not 0  where $f'$ is a derivative. Then $$\sqrt{n}\times (f(\ry_n)-f(\mu))\Rightarrow \mathcal{N}(0,\sigma^2(f'(\mu))^2).$$
\end{stheorem}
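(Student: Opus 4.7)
The plan is to reduce the claim to Slutsky's theorem (\textbf{Theorem \ref{thm:slutsky}}) via a first-order linearization of $f$ around $\mu$. Two ingredients suffice: (i) consistency $\ry_n \parrow \mu$, and (ii) a multiplicative remainder decomposition $f(\ry_n) - f(\mu) = (f'(\mu) + r(\ry_n))(\ry_n - \mu)$ in which $r(\ry_n) \parrow 0$.

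First I would establish consistency. Writing $\ry_n - \mu = \tfrac{1}{\sqrt{n}} \cdot \sqrt{n}(\ry_n - \mu)$, the hypothesis $\sqrt{n}(\ry_n - \mu) \Rightarrow \mathcal{N}(0,\sigma^2)$ combined with the deterministic fact $1/\sqrt{n} \to 0$ and Slutsky's theorem yields $\ry_n - \mu \Rightarrow 0$, which is equivalent to $\ry_n \parrow \mu$.

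Next I would linearize. Define $r(y) := \tfrac{f(y) - f(\mu)}{y - \mu} - f'(\mu)$ for $y \neq \mu$ and $r(\mu) := 0$; by differentiability of $f$ at $\mu$, the map $r$ is continuous at $\mu$ with $r(\mu)=0$, and one has the identity $f(y) - f(\mu) = (f'(\mu) + r(y))(y - \mu)$ in a neighborhood of $\mu$. Substituting $y = \ry_n$ and scaling by $\sqrt{n}$ produces $\sqrt{n}(f(\ry_n) - f(\mu)) = (f'(\mu) + r(\ry_n)) \cdot \sqrt{n}(\ry_n - \mu)$.

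Finally I would combine. From $\ry_n \parrow \mu$ and continuity of $r$ at $\mu$, the standard continuous-mapping argument gives $r(\ry_n) \parrow 0$, hence $f'(\mu) + r(\ry_n) \parrow f'(\mu)$, a constant. Applying Slutsky's theorem to this product, paired with $\sqrt{n}(\ry_n - \mu) \Rightarrow \mathcal{N}(0,\sigma^2)$, concludes $\sqrt{n}(f(\ry_n) - f(\mu)) \Rightarrow f'(\mu) \cdot \mathcal{N}(0,\sigma^2) = \mathcal{N}(0,\sigma^2 (f'(\mu))^2)$, as claimed. The main subtlety is producing the \emph{multiplicative} remainder: the naive pointwise expansion $f(\ry_n) - f(\mu) = f'(\mu)(\ry_n - \mu) + o(|\ry_n - \mu|)$ is correct but must be rewritten as a Slutsky factor $f'(\mu) + r(\ry_n)$ times the $\sqrt{n}$-rescaled difference, so that the $o(\cdot)$ error is absorbed by the probability-convergent factor rather than being compared directly against the $\sqrt{n}$ rate. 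The assumption $f'(\mu) \neq 0$ is not used for the convergence itself but merely ensures that the limiting law is non-degenerate.
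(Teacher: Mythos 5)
Your proof is correct: the decomposition $f(\ry_n)-f(\mu)=(f'(\mu)+r(\ry_n))(\ry_n-\mu)$ with $r$ continuous at $\mu$, combined with the consistency step $\ry_n\parrow\mu$ and Slutsky's theorem, is exactly the canonical argument for the first-order delta method. The paper does not prove this statement at all --- it imports it verbatim with a citation to Casella and Berger --- and your argument matches the standard textbook proof that citation points to, including the accurate observation that $f'(\mu)\neq 0$ is only needed for non-degeneracy of the limit law, not for the convergence itself.
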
 

\begin{slemma}{\label{lem:asymtotic_inner}}
	Suppose that $\rvu$ and $\rvv$ are mutually independent $d$-dimensional uniformly random unit vectors. Then, $\sqrt{d} \left<\rvu,\rvv\right> \Rightarrow \mathcal{N}(0,1)$ as $d\rightarrow\infty$. 
\end{slemma}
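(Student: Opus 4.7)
The plan is to reduce the problem to a calculation involving a single Gaussian coordinate via the standard construction of a uniformly random unit vector as a normalized Gaussian vector, and then apply Slutsky's theorem (\textbf{Theorem \ref{thm:slutsky}}) that was just recorded.

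First I would use rotational invariance to eliminate one of the two random vectors. Conditional on $\rvv$, the distribution of $\langle \rvu,\rvv\rangle$ depends only on $\|\rvv\|=1$, since $\rvu$ is rotationally invariant; hence $\langle \rvu,\rvv\rangle$ has the same distribution as $\rvu_1$, the first coordinate of a uniformly random unit vector. So it suffices to show $\sqrt{d}\,\rvu_1 \Rightarrow \mathcal{N}(0,1)$.

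Next I would represent $\rvu$ using a standard Gaussian: let $\rvz=(\rvz_1,\dots,\rvz_d)\sim\mathcal{N}(\vzero,\mI_d)$, so that $\rvu \overset{d}{=} \rvz/\|\rvz\|$. Then
\[
\sqrt{d}\,\rvu_1 \;\overset{d}{=}\; \frac{\rvz_1}{\sqrt{\tfrac{1}{d}\sum_{i=1}^d \rvz_i^2}}.
\]
The numerator $\rvz_1$ is exactly $\mathcal{N}(0,1)$ for every $d$, and by the weak law of large numbers the denominator $\sqrt{\tfrac{1}{d}\sum_{i=1}^d \rvz_i^2}$ converges in probability to $1$ since $\E[\rvz_i^2]=1$ and $x\mapsto\sqrt{x}$ is continuous at $1$. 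Slutsky's theorem then yields $\sqrt{d}\,\rvu_1 \Rightarrow \mathcal{N}(0,1)$, which is the claim.

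There is no substantive obstacle here; the only subtlety worth stating explicitly is that the rotational-invariance reduction is legitimate because the joint distribution of $(\rvu,\rvv)$ is invariant under $(\rvu,\rvv)\mapsto(\rmO\rvu,\rmO\rvv)$ for any orthogonal $\rmO$, so one may choose $\rmO$ (measurably depending on $\rvv$) to send $\rvv$ to $\ve_1$ without changing the law of $\langle \rvu,\rvv\rangle$. Everything else is a direct application of the Gaussian representation and Slutsky's theorem, and \textbf{Lemma \ref{lem:asymtotic_inner}} then feeds into the proof of \textbf{Theorem \ref{thm:asymtotic_angle}} via the delta method (\textbf{Theorem \ref{thm:delta}}) applied to $f(x)=\tfrac{180}{\pi}\cos^{-1}(x)$ at $x=0$, where $f'(0)=-\tfrac{180}{\pi}$, giving the stated asymptotic variance $(180/\pi)^2$.
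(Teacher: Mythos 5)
Your proof is correct, but it takes a slightly different route from the paper's. The paper keeps both vectors in play: it writes $\rvu=\rvx/\|\rvx\|$ and $\rvv=\rvy/\|\rvy\|$ for independent standard Gaussians $\rvx,\rvy$, applies the central limit theorem to $\left<\rvx,\rvy\right>/\sqrt{d}=\sqrt{d}\,\big(\tfrac{1}{d}\sum_i \ervx_i\ervy_i\big)$ (the summands $\ervx_i\ervy_i$ are i.i.d.\ with mean $0$ and variance $1$), applies the strong law of large numbers to both $\|\rvx\|/\sqrt{d}$ and $\|\rvy\|/\sqrt{d}$, and finishes with Slutsky. You instead first invoke rotational invariance to replace $\left<\rvu,\rvv\right>$ by the first coordinate of a single uniform unit vector, after which the numerator $\rvz_1$ is \emph{exactly} $\mathcal{N}(0,1)$ for every $d$ and only one law-of-large-numbers step is needed for the denominator; no central limit theorem is required at all. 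Your version is arguably more economical and makes clear that the Gaussian limit here is not a genuine CLT phenomenon but an exact marginal plus a concentrating norm; the paper's version avoids the conditioning/rotational-invariance argument and treats the two vectors symmetrically. Both are complete and both feed into \textbf{Theorem \ref{thm:asymtotic_angle}} by the same delta-method step you describe.
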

\begin{proof}
	Note that $d$-dimensional uniformly random unit vectors $\rvu$ can be generated by normalization of $d$-dimensional multivariate standard normal random vectors $\rvx \sim N(\mathbf{0}, \mI_d)$. That is,
	$$\rvu\sim \frac{\rvx}{\|\rvx\|}.$$
	Suppose that two independent uniformly random unit vector $\rvu$ and $\rvv$ are generated by two independent $d$-dimensional standard normal vector $\rvx=(\ervx_1,\ervx_2,\cdots,\ervx_d)$ and $\rvy=(\ervy_1,\ervy_2,\cdots,\ervy_d)$. Denote them $$\rvu=\frac{\rvx}{\|\rvx\|}\quad\text{and}\quad \rvv=\frac{\rvy}{\|\rvy\|}.$$
	By SLLN, we have $$\frac{\|\rvx\|}{\sqrt{d}}\rightarrow1 \quad a.s.$$
	(Use $\frac{1}{d}\sum_{i=1}^d \ervx_i^2\rightarrow\mathbb{E}\ervx_1^2=1$). Since almost sure convergence implies convergence in probability, $\|\rvx\|/\sqrt{d}\parrow 1$. Similarly, 
	$\|\rvy\|/\sqrt{d}\parrow 1$.
	Moreover, by CLT,  $$\frac{\left<\rvx,\rvy\right>}{\sqrt{d}}=\sqrt{d}\Big{(}\frac{1}{d}\sum_{i=1}^d \ervx_i\ervy_i\Big{)}\Rightarrow \mathcal{N}(0, 1).$$ Therefore, by \textbf{Theorem \ref{thm:slutsky}} (Slutsky's theorem),
	$$\sqrt{d} \left<\rvu,\rvv\right> \Rightarrow \mathcal{N}(0,1).$$          
\end{proof}

\begin{customthm}{\ref{thm:asymtotic_angle}}
	Suppose that $\rvu$ and $\rvv$ are mutually independent $d$-dimensional uniformly random unit vectors. Then, \\ $$\sqrt{d} \times  \Big{(}\frac{180}{\pi}\cos^{-1}\left<\rvu,\rvv\right>-90\Big{)} \Rightarrow \mathcal{N}\Big{(}0,\Big{(}\frac{180}{\pi}\Big{)}^2\Big{)}$$ as $d\rightarrow\infty$.	
\end{customthm}

\begin{proof}
	Suppose that $\mu=0$, $\sigma=1$, and $f(\cdot)=\frac{180}{\pi}\cos^{-1}(\cdot)$. Since 
	$\frac{180}{\pi}\frac{d}{dx}\cos^{-1}(x) = -\frac{180}{\pi\sqrt{1-x^2}}$, we have $f'(\mu)=-\frac{180}{\pi}$. Hence, by \textbf{Lemma \ref{lem:asymtotic_inner}} and \textbf{Theorem \ref{thm:delta}} (Delta method), the desired convergence in distribution holds.
\end{proof}

\section{Proofs for Theorem \ref{thm:sgd_kappa}}\label{sec:k_thm}
\subsection{Proof of Lemma \ref{lem:kappa_l2}}\label{subsec:k_lem1}
\begin{customlemma}{\ref{lem:kappa_l2}}
	The approximated estimator of $\kappa$ induced from the $d$-dimensional unit vectors $\{\x_1,\x_2, \cdots,\x_{n_b}\}$, $$\hat{\kappa} = \frac{\bar{r}(d-\bar{r}^2)}{1-\bar{r}^2},$$
	where $\bar{r} =\frac{\|\sum_{i=1}^{n_b}\x_i\|}{{n_b}}$ is a strict increasing function on $[0,1]$. If we consider $\hat{\kk}=h(u)$ as function of $u=\|\sum_{i=1}^{n_b} \x_i\|$. Then $h(\cdot)$ is Lipschitz continuous on $[0, {n_b}(1-\epsilon)]$ for any $\epsilon>0$. Moreover, $h(\cdot)$ and $h'(\cdot)$ are strict increasing and increasing on $[0, n_b)$, respectively.   
\end{customlemma}
\begin{proof}
	Note that $\|\sum_{i=1}^{n_b}\x_i\|\leq\sum_{i=1}^{n_b}||\x_i|| = {n_b}$. Therefore, we have $\bar{r} \in [0,1]$.	If $d=1$, then $\hat{\kappa}=\bar{r}$ and this increases on $[0,1]$. For $d>1$, 
	$$\frac{d\hat{\kappa}}{d\bar{r}}=\frac{d+\bar{r}^4+(d-3\bar{r}^2)}{(1-\bar{r}^2)^2}$$
	and its numerator is always positive for $d>2$. When $d=2$, $$\frac{d\hat{\kappa}}{d\bar{r}}=\frac{\bar{r}^4-3\bar{r}^2+4}{(1-\bar{r}^2)^2}=\frac{(\bar{r}^2-\frac{3}{2})^2+\frac{7}{4}}{(1-\bar{r}^2)^2} > 0.$$
	So $\hat{\kappa}$ increases as $\bar{r}$ increases.
	
	The Lipschitz continuity of $h(\cdot)$ directly comes from the continuity of $\frac{d\hat{\kappa}}{d\bar{r}}$ since $$\frac{d\hat{\kappa}}{d u} = \frac{1}{{n_b}}\frac{d\hat{\kappa}}{d\bar{r}}.$$ 
	Recall that any continuous function on the compact interval $[0,{n_b}(1-\epsilon)]$ is bounded. Hence the derivative of $\hat{\kappa}$ with respect to $u$ is bounded. This implies the Lipschitz continuity of $h(\cdot)$. 
	
	$h(\cdot)$ is strictly increasing since $\bar{r}=\frac{u}{n_b}$. Further, 
	\begin{align*}
	h''(u)&=\frac{1}{{n_b}^2}\frac{d^2\hat{\kk}}{d\bar{r}^2}\\
	&=\frac{2\bar{r}^5+(4-8d)\bar{r}^3+(8d-6)\bar{r}}{{n_b}^2(1-\bar{r}^2)^4}> 0
	\end{align*}
	due to $\bar{r}\in[0,1]$. Therefore $h'(\cdot)$ is also increasing on $[0,{n_b})$.
\end{proof}
\subsection{Proof of Lemma 2}\label{subsec:k_lem2}
\begin{customlemma}{\ref{lem:direction_kappa_dec}}
	Let $\p_1, \p_2, \cdots, \p_{n_b}$ be $d$-dimensional vectors. If all $\p_i$'s are not on a single ray from the current location $\w$, then there exists positive number $\eta$ such that $$\Big{\|}\sum_{j=1}^{n_b}\frac{\p_j-\w-\epsilon\sum_{i=1}^{n_b} \frac{\p_i-\w}{\|\p_i-\w\|}}{\|\p_j-\w-\epsilon\sum_{i=1}^{n_b} \frac{\p_i-\w}{\|\p_i-\w\|}\|}\Big{\|} < \Big{\|}\sum_{i=1}^{n_b}\frac{\p_i-\w}{\|\p_i-\w\|}\Big{\|}$$
	for all $\epsilon\in (0,\eta]$.
\end{customlemma}
\begin{proof}
	Without loss of generality, we regard $\w$ as the origin. Let $f(\epsilon)=\Big{\|}\sum_{j=1}^{n_b}\frac{\p_j-\epsilon\sum_{i=1}^{n_b} \frac{\p_i}{\|\p_i\|}}{\|\p_j-\epsilon\sum_{i=1}^{n_b} \frac{\p_i}{\|\p_i\|}\|}\Big{\|}^2$, then $f(0)=\Big{\|}\sum_{i=1}^{n_b}\frac{\p_i}{\|\p_i\|}\Big{\|}^2$. Therefore, we only need to show $f'(0)< 0$. Now denote $\x_j = \frac{\p_j}{\|\p_j\|}$,  $\p_j(\epsilon)=\p_j-\epsilon\sum_{i=1}^{n_b}\x_i$ and $\vu=-\sum_{i=1}^{n_b}\x_i$. That is, $\p_j(\epsilon)=\p_j+\epsilon{\vu}$. Since
	$$f(\epsilon)=\Big{<}\sum_{j=1}^{n_b}\frac{\p_j(\epsilon)}{\|\p_j(\epsilon)\|},\sum_{j=1}^{n_b}\frac{\p_j(\epsilon)}{\|\p_j(\epsilon)\|}\Big{>},$$
	we have $$f'(\epsilon)=2\Big{<}\sum_{j=1}^{n_b}\frac{\p_j(\epsilon)}{\|\p_j(\epsilon)\|},\frac{d}{d\epsilon}\Big{(}\sum_{j=1}^{n_b}\frac{\p_j(\epsilon)}{\|\p_j(\epsilon)\|}\Big{)}\Big{>}$$
	and
	$$\frac{d}{d\epsilon}\Big{(}\sum_{j=1}^{n_b}\frac{\p_j(\epsilon)}{\|\p_j(\epsilon)\|}\Big{)}=\sum_{j=1}^{n_b}\frac{\|\p_j(\epsilon)\|{\vu}-\frac{\left<{\vu},\p_j(\epsilon)\right>}{\|\p_j(\epsilon)\|}\p_j(\epsilon)}{\|\p_j(\epsilon)\|^2}.$$
	Hence
	$$f'(\epsilon)=2\Big{<}\sum_{j=1}^{n_b}\frac{\p_j(\epsilon)}{\|\p_j(\epsilon)\|},\sum_{j=1}^{n_b}\frac{\|\p_j(\epsilon)\|{\vu}-\frac{\left<{\vu},\p_j(\epsilon)\right>}{\|\p_j(\epsilon)\|}\p_j(\epsilon)}{\|\p_j(\epsilon)\|^2}\Big{>}.$$
	Note that $\p_j(0)=\p_j$ and $\|\x_j\|=1$. We have 
	\begin{align*}
	f'(0) &= 2\Big{<}\sum_{j=1}^{n_b}\frac{\p_j}{\|\p_j\|},\sum_{j=1}^{n_b} \frac{\|\p_j\|{\vu}-\frac{\left<{\vu},\p_j\right>}{\|\p_j\|}\p_j}{\|\p_j\|^2}\Big{>} \\
	&=2\Big{<}\sum_{j=1}^{n_b}\frac{\p_j}{\|\p_j\|},\sum_{j=1}^{n_b} \frac{1}{\|\p_j\|}\Big{(}{\vu}-\big{<}{\vu},\frac{\p_j}{\|\p_j\|}\big{>}\frac{\p_j}{\|\p_j\|}\Big{)}\Big{>}\\
	&= 2\Big{<}\sum_{j=1}^{n_b}\x_j,\sum_{j=1}^{n_b}\frac{1}{\|\p_j\|}\Big{(} {\vu}-\left<{\vu},\x_j\right>\x_j\Big{)}\Big{>}\\
	&=2\Big{<}-{\vu},\sum_{j=1}^{n_b}\frac{1}{\|\p_j\|}\Big{(} {\vu}-\left<{\vu},\x_j\right>\x_j\Big{)}\Big{>}\\
	&= -2\sum_{j=1}^{n_b}\frac{\|{\vu}\|^2-\left<{\vu},\x_j\right>^2}{\|\p_j\|}\\
	&\leq -2\sum_{j=1}^{n_b}\frac{\|{\vu}\|^2-\|{\vu}\|^2\|\x_j\|^2}{\|\p_j\|}\\
	&=0
	\end{align*}
	Since the equality holds when $\left<{\vu}, \x_j\right>^2 = \|{\vu}\|^2\|\x_j\|^2$ for all $j$, we have strict inequality when all $\p_i$'s are not located on a single ray from the origin.
\end{proof}
\subsection{Proof of Theorem \ref{thm:sgd_kappa}}\label{subsec:k_thm}
The proof of \textbf{Theorem \ref{thm:sgd_kappa}} is very similar to that of \textbf{Lemma \ref{lem:direction_kappa_dec}}.
\begin{customthm}{\ref{thm:sgd_kappa}}
	Let $\p_1(\w_t^0), \p_2(\w_t^0), \cdots, \p_{n_b}(\w_t^0)$ be $d$-dimensional vectors, and all $\p_i(\w_t^0)$'s are not on a single ray from the current location $\w_t^0$. If 
	\begin{equation}\label{seq:norm_bdd_con}
	\Big{\|}\sum_{i=1}^{n_b}\frac{\p_i(\w_t^0)-\w_t^0}{\|\p_i(\w_t^0)-\w_t^0\|}-\sum_{i=1}^{n_b}\frac{\hat{\vg}_i(\w_t^{i-1})}{\|\hat{\vg}_i(\w_t^{i-1})\|}\Big{\|}\leq \xi
	\end{equation}
	for sufficiently small $\xi>0$, then there exists positive number $\eta$ such that
	\begin{equation}\label{seq:kappadec_withgrad}
	\Big{\|}\sum_{j=1}^{n_b}\frac{\p_j(\w_t^0)-\w_t^0-\epsilon\sum_{i=1}^{n_b} \frac{\hat{\vg}_i(\w_t^{i-1})}{\|\hat{\vg}_i(\w_t^{i-1})\|}}{\|\p_j(\w_t^0)-\w_t^0-\epsilon\sum_{i=1}^{n_b} \frac{\hat{\vg}_i(\w_t^{i-1})}{\|\hat{\vg}_i(\w_t^{i-1})\|}\|}\Big{\|} < \Big{\|}\sum_{i=1}^{n_b}\frac{\p_i(\w_t^0)-\w_t^0}{\|\p_i(\w_t^0)-\w_t^0\|}\Big{\|}
	\end{equation}
	for all $\epsilon\in (0,\eta]$.
\end{customthm}
\begin{proof}
	We regard $\w_t^0$ as the origin ${\bf 0}$. For simplicity, write $\p_i({\bf 0})$ and $\hat{\vg}_i(\w_t^{i-1})$ as $\p_i$ and $\hat{\vg}_i$, respectively. Let $f(\epsilon)=\Big{\|}\sum_{j=1}^{n_b}\frac{\p_j-\epsilon\sum_{i=1}^{n_b} \frac{\p_i}{\|\p_i\|}}{\|\p_j-\epsilon\sum_{i=1}^{n_b} \frac{\p_i}{\|\p_i\|}\|}\Big{\|}^2$ and $\tilde{f}(\epsilon)=\Big{\|}\sum_{j=1}^{n_b}\frac{\p_j-\epsilon\sum_{i=1}^{n_b} \frac{\hat{\vg}_{i}}{\|\hat{\vg}_{i}|}}{\|\p_j-\epsilon\sum_{i=1}^{n_b} \frac{\hat{\vg}_{i}}{\|\hat{\vg}_{i}\|}\|}\Big{\|}^2$. Denote ${\vu}=-\sum_{j=1}^{n_b} \frac{\p_i}{\|\p_i\|}$, ${\vt}=\sum_{i=1}^{n_b}\frac{\p_i}{\|\p_i\|}-\sum_{i=1}^{n_b}\frac{\hat{\vg}_{i}}{\|\hat{\vg}_{i}\|}$ and $\tilde{\p}_j(\epsilon)=\p_j+\epsilon({\vu}+{\vt})$. Then $$\tilde{f}(\epsilon)=\Big{\|}\sum_{i=1}^{n_b}\frac{\tilde{\p}_j(\epsilon)}{\|\tilde{\p}_j(\epsilon)\|}\Big{\|}^2.$$ Now we differentiate $\tilde{f}(\epsilon)$ with respect to $\epsilon$, that is,
	$$\tilde{f}'(\epsilon)=2\Big{<}\sum_{j=1}^{n_b}\frac{\tilde{\p}_j(\epsilon)}{\|\tilde{\p}_j(\epsilon)\|},\sum_{j=1}^{n_b}\frac{\|\tilde{\p}_j(\epsilon)\|({\vu}+{\vt})-\frac{\left<{\vu}+{\vt},\tilde{\p}_j(\epsilon)\right>}{\|\tilde{\p}_j(\epsilon)\|}\tilde{\p}_j(\epsilon)}{\|\tilde{\p}_j(\epsilon)\|^2}\Big{>}.$$
	Recall that $\tilde{\p}_j(0) = \p_j$. Rewrite $\frac{\p_j}{\|\p_j\|}=\x_j$ and use $f'(0)$ in the proof of \textbf{Lemma \ref{lem:direction_kappa_dec}}
	\begin{align*}
	\tilde{f}'(0)&=2\Big{<}\sum_{j=1}^{n_b}\frac{\p_j}{\|\p_j\|}, \sum_{j=1}^{n_b}\frac{\|\p_j\|({\vu}+{\vt})-\frac{\left<{\vu}+{\vt},\p_j\right>}{\|\p_j\|}\p_j}{\|\p_j\|^2}\Big{>}\\
	&=2\Big{<}-{\vu},\sum_{j=1}^{n_b}\frac{{\vu}+{\vt}-\big{<}{\vu}+{\vt},\frac{\p_j}{\|\p_j\|}\big{>}\frac{\p_j}{\|\p_j\|}}{\|\p_j\|}\Big{>}\\
	&= 2\Big{<}-{\vu},\sum_{j=1}^{n_b}\frac{1}{\|\p_j\|}\Big{(}{\vu}+{\vt}-\left<{\vu}+{\vt},\x_j\right>\x_j\Big{)}\Big{>}\\
	&=2\Big{<}-{\vu},\sum_{j=1}^{n_b}\frac{1}{\|\p_j\|}\Big{(}{\vu}-\left<{\vu},\x_j\right>\x_j\Big{)}\Big{>}+2\Big{<}-{\vu},\sum_{j=1}^{n_b}\frac{1}{\|\p_j\|}\Big{(}{\vt}-\left<{\vt},\x_j\right>\x_j\Big{)}\Big{>}\\
	&=f'(0)-2\sum_{j=1}^{n_b}\frac{1}{\|\p_j\|}\big{(}\left<{\vu},{\vt}\right>-\left<{\vt},\x_j\right>\left<{\vu},\x_j\right>\big{)}
	\end{align*}
	Since $f'(0)<0$ by the proof of \textbf{Lemma \ref{lem:direction_kappa_dec}},
	$$\tilde{f}'(0)< 0 \quad \Longleftrightarrow \quad 2\sum_{j=1}^{n_b}\frac{1}{\|\p_j\|}\big{(}\left<{\vt},\x_j\right>\left<{\vu},\x_j\right>-\left<{\vu},{\vt}\right>\big{)} < |f'(0)|.$$ 
	By using $\|\x_j\|=1$ and applying the Cauchy inequality,
	\begin{align*}
	2\sum_{j=1}^{n_b}\frac{1}{\|\p_j\|}\big{(}\left<{\vt},\x_j\right>\left<{\vu},\x_j\right>-\left<{\vu},{\vt}\right>\big{)}&\leq 2\sum_{j=1}^{n_b}\frac{\|{\vt}\|\|\x_j\|\|{\vu}\|\|\x_j\|+\|{\vu}\|\|{\vt}\|}{\|\p_j\|}\\
	&=4\sum_{j=1}^{n_b}\frac{\|{\vu}\|\|{\vt}\|}{\|\p_j\|}\\
	&\leq\frac{4n_b\|{\vu}\|\|{\vt}\|}{\min_j \|\p_j\|}\\&\leq\frac{4n_b^2\xi}{\min_j\|\p_j\|}\quad\Big{(}\because \|{\vu}\|\leq\sum_{i=1}^{n_b}\|\x_i\|={n_b}\Big{)}.
	\end{align*}
	Define $r=\min_j\|\p_j\|$. If $$\xi<\frac{|f'(0)| r}{4n_b^2},$$
	then $$\tilde{f}(\epsilon)< 0.$$
\end{proof}
\section{Proofs for Corollary \ref{cor:sgd_kappadec}}\label{sec:k_cor}
\begin{customcor}{\ref{cor:sgd_kappadec}}
	Let $\p_i$ be local minibatch solutions for each $f_{\sI_i}$. Suppose a region $\mathcal{R}$ satisfying:
	$$\text{For all }\w,\w'\in \mathcal{R},\quad \p_i(\w)=\p_i(\w')=\p_i$$
	for all $i=1,\cdots,n_b$. Further, assume that Hessian matrices of $f_{\sI_i}$'s are positive definite, well-conditioned, and bounded in the sense of matrix $\normltwo$-norm on $\mathcal{R}$. If SGD moves $\w_t^0$ to $\w_{t+1}^0$ on $\mathcal{R}$ with a large batch size and a small learning rate, then $\hat{\kk}(\w_t^0)>\hat{\kk}(\w_{t+1}^0)$. Moreover, we can estimate $\hat{\kk}(\w_t^0)$ and $\hat{\kk}(\w_{t+1}^0)$ by minibatch gradients at $\w_t^0$ and $\w_{t+1}^0$, respectively.  
\end{customcor}
\begin{proof}
	Recall that $\w_{t+1}^0=\w_t^0+\eta\sum_{i=1}^{n_b} \hat{\vg}_i(\w_t^{i-1})$ where $\eta$ is a learning rate. 
	To prove \textbf{Corollary \ref{cor:sgd_kappadec}}, we need to show $\hat{\kk}(\w_{t+1}^0)<\hat{\kk}(\w_{t}^0)$ which is equivalent to 
	\begin{equation}
	\label{corpf:raw_batch_eq}
	\Big{\|}\sum_{j=1}^{n_b}\frac{\p_j(\w_{t+1}^0)-\w_{t}^0-\eta\sum_{i=1}^{n_b} \hat{\vg}_i(\w_t^{i-1})}{\|\p_j(\w_{t+1}^0)-\w_{t}^0-\eta\sum_{i=1}^{n_b} \hat{\vg}_i(\w_t^{i-1})\|}\Big{\|} < \Big{\|}\sum_{i=1}^{n_b}\frac{\p_i(\w_t^0)-\w_t^0}{\|\p_i(\w_t^0)-\w_t^0\|}\Big{\|}.
	\end{equation}
	Since $\|\nablaw^2 f_{\sI_i}(\cdot)\|_2$ is bounded on $\mathcal{R}$, $\nablaw f_{\sI_i}(\cdot)$ is Lipschitz continuous on $\mathcal{R}$\citep{Bottou}. If the batch size is sufficiently large and the learning rate $\eta$ is sufficiently small, $\|\hat{\vg}_i(\w_t^{i-1})\|\approx\|\hat{\vg}_i(\w_t^{0})\|\approx \tau$ for all $i$ by \textbf{Theorem \ref{thm:stats_mgrad}}. Therefore, we have
	$$\eta \sum_{i=1}^{n_b} \hat{\vg}_i(\w_t^{i-1})\approx \tau\eta \sum_{i=1}^{n_b}\frac{\hat{\vg}_i(\w_t^{i-1})}{\|\hat{\vg}_i(\w_t^{i-1})\|}$$
	If we denote $\tau\eta$ as $\epsilon$, we can convert (\ref{corpf:raw_batch_eq}) to (\ref{corpf:raw_eq}).
	
	\begin{equation}
	\label{corpf:raw_eq}
	\Big{\|}\sum_{j=1}^{n_b}\frac{\p_j(\w_{t+1}^0)-\w_{t}^0-\epsilon\sum_{i=1}^{n_b} \frac{\hat{\vg}_i(\w_t^{i-1})}{\|\hat{\vg}_i(\w_t^{i-1})\|}}{\|\p_j(\w_{t+1}^0)-\w_{t}^0-\epsilon\sum_{i=1}^{n_b} \frac{\hat{\vg}_i(\w_t^{i-1})}{\|\hat{\vg}_i(\w_t^{i-1})\|}\|}\Big{\|} < \Big{\|}\sum_{i=1}^{n_b}\frac{\p_i(\w_t^0)-\w_t^0}{\|\p_i(\w_t^0)-\w_t^0\|}\Big{\|}.
	\end{equation} 
	Since both $\w_{t+1}^0$ and $\w_t^0$ are in $\mathcal{R}$ for small learning rate, we have $\p_i(\w_{t+1}^0)=\p_i(\w_{t}^0)=\p_i$ by the assumption. That is, (\ref{corpf:raw_eq}) is equivalent to (\ref{seq:kappadec_withgrad}). 
	In (\ref{seq:kappadec_withgrad}), $\hat{\vg}_i(\w_t^{i-1})/\|\hat{\vg}_i(\w_t^{i-1})\|$ cannot be replaced by $(\p_i-\w_t^0)/\|\p_i-\w_t^0\|$ in general. Hence we introduce \textbf{Definition \ref{def:condition}} and \textbf{Lemma \ref{lem:hessain_connum}} to connect the direction of the minibatch gradient with the corresponding local minibatch solution.        
	\begin{sdefinition}\label{def:condition}
		The condition number $c(\mA)$ of a matrix $\mA$ is defined as $$c(\mA)=\frac{\sigma_{\max}(\mA)}{\sigma_{\min}(\mA)}$$ where $\sigma_{\max}(\mA)$ and $\sigma_{\min}(\mA)$ are maximal and minimal singular values of $\mA$, respectively. If $\mA$ is positive-definite matrix, then
		$$c(\mA)=\frac{\lambda_{\max}(\mA)}{\lambda_{\min}(\mA)}.$$ Here $\lambda_{\max}(\mA)$ and $\lambda_{\min}(\mA)$ are maximal and minimal eigenvalues of $\mA$, respectively.   
	\end{sdefinition}
	
	\begin{slemma}\label{lem:hessain_connum}
		If the condition number of the positive definite Hessian matrix of $f_{\sI_i}$ at a local minibatch solution $\p_i$, denoted by $\mH_i=\nablaw^2f_{\sI_i}(\p_i)$, is close to $1$ (well-conditioned), then the direction to $\p_i$ from $\w$ is approximately parallel to its negative gradient at $\w$. That is, for all $\w\in\mathcal{R}$, $$\Big{\|}\frac{\p_i-\w}{\|\p_i-\w\|}-\frac{\hgi}{\|\hgi\|}\Big{\|}\approx 0$$ 
		where $\hgi=-\nablaw f_{\sI_i}(\w).$
	\end{slemma}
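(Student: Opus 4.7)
The plan is to combine a first-order Taylor expansion of $\nablaw f_{\sI_i}$ around $\p_i$ with the spectral hypothesis on $\mH_i$. Since $\p_i$ is a local minimum of $f_{\sI_i}$, first-order optimality gives $\nablaw f_{\sI_i}(\p_i) = \vzero$. Expanding at $\p_i$ and evaluating at $\w$ yields
\[
\hgi \;=\; -\nablaw f_{\sI_i}(\w) \;=\; \mH_i(\p_i - \w) \;+\; \vr_i(\w),
\]
where $\vr_i(\w)$ is a remainder satisfying $\|\vr_i(\w)\| = O(\|\w-\p_i\|^2)$ uniformly on $\mathcal{R}$, by the boundedness of $\nablaw^2 f_{\sI_i}$ assumed in Corollary~\ref{cor:sgd_kappadec}.

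The next step exploits the well-conditioning. Writing $c = c(\mH_i) = \lambda_{\max}/\lambda_{\min}$ and $\bar\lambda = (\lambda_{\max}+\lambda_{\min})/2$, a spectral decomposition produces $\mH_i = \bar\lambda\mI + \mE_i$ with $\|\mE_i\| \le (\lambda_{\max}-\lambda_{\min})/2 = \bar\lambda\,(c-1)/(c+1)$, so $\|\mE_i\|/\bar\lambda \to 0$ as $c \to 1$. Substituting into the identity above,
\[
\hgi \;=\; \bar\lambda(\p_i - \w) \;+\; \mE_i(\p_i - \w) \;+\; \vr_i(\w),
\]
displays $\hgi$ as a positive scalar multiple of $\p_i - \w$ plus a controlled perturbation.

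To convert this into a bound on the difference of unit vectors, I would invoke the elementary fact that whenever $\va = \lambda\vb + \vs$ with $\lambda > 0$ and $\|\vs\|$ small compared to $\lambda\|\vb\|$,
\[
\left\|\frac{\va}{\|\va\|} - \frac{\vb}{\|\vb\|}\right\| \;\le\; \frac{2\|\vs\|}{\lambda\|\vb\|}.
\]
Applying this with $\va = \hgi$, $\vb = \p_i - \w$, $\lambda = \bar\lambda$, and $\vs = \mE_i(\p_i-\w) + \vr_i(\w)$ gives a bound of order $(c-1)/(c+1) + \|\w-\p_i\|$ on the quantity in the lemma; both terms are small under the stated hypotheses (the first by well-conditioning, the second because $\w$ lies near $\p_i$ in $\mathcal{R}$).

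The main obstacle is that the statement is qualitative (``$\approx 0$''), so one must decide how to formalize both ``close to $1$'' and ``approximately parallel.'' The cleanest formalization is the quantitative $O\!\bigl((c(\mH_i)-1) + \|\w-\p_i\|\bigr)$ bound sketched above. A secondary subtlety is ensuring the linear principal part $\bar\lambda(\p_i-\w)$ dominates the quadratic remainder $\vr_i(\w)$, so that $\hgi$ is nonzero and the unit-vector comparison is well defined; this follows from the positive-definiteness and boundedness of the Hessians on $\mathcal{R}$ assumed in the enclosing corollary.
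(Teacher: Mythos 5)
Your argument is correct and lands in the same place as the paper, but the key estimate is carried out by a genuinely different route. Both proofs begin identically: first-order optimality at the local minibatch solution gives $\nablaw f_{\sI_i}(\p_i)=\vzero$, and a Taylor expansion yields $\hgi\approx\mH_i(\p_i-\w)$. The paper then sets $\x=\p_i-\w$, computes $\bigl\|\x/\|\x\|-\mH_i\x/\|\mH_i\x\|\bigr\|^2=2-2\,\x^\top\mH_i\x/(\|\x\|\,\|\mH_i\x\|)$, and bounds this by $2-2\lambda_{\min}/\lambda_{\max}$ after diagonalizing, i.e.\ an $O(\sqrt{c-1})$ bound on the unit-vector distance as $c=c(\mH_i)\to1$. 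You instead split $\mH_i=\bar\lambda\mI+\mE_i$ with $\|\mE_i\|\le\bar\lambda(c-1)/(c+1)$ and invoke a perturbation inequality for normalized vectors. This buys you two things: a sharper rate $O(c-1)$ in the condition number, and a clean way to carry the Taylor remainder $\vr_i(\w)$ through the estimate explicitly, whereas the paper simply absorbs the remainder into ``$\approx$'' and never quantifies it. Either route suffices for the qualitative claim in the lemma.

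One small repair is needed: your assertion that $\|\vr_i(\w)\|=O(\|\w-\p_i\|^2)$ uniformly on $\mathcal{R}$ ``by the boundedness of $\nablaw^2 f_{\sI_i}$'' cites the wrong hypothesis. From the integral form of the remainder, boundedness of the Hessian only yields $\|\vr_i(\w)\|=O(\|\w-\p_i\|)$, which is not small relative to the principal term $\bar\lambda\|\p_i-\w\|$; you need continuity of the Hessian on $\mathcal{R}$ to get $o(\|\w-\p_i\|)$, or Lipschitz continuity of $\nablaw^2 f_{\sI_i}$ to get the quadratic rate you claim. Since the paper treats the quadratic model as exact, this does not put you behind its standard of rigor, but the justification as written should be corrected.
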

	\begin{proof}
		By the second order Taylor expansion,
		$$f_{\sI_i}(\w)\approx f_{\sI_i}(\p_i)+\frac{1}{2}(\w-\p_i)^\top \mH_i(\w-\p_i).$$
		Hence,
		$$\hgi=-\nablaw f_{\sI_i}(\w) \approx -\mH_i(\w-\p_i)$$
		Denote $\p_i-\w$ as $\x$. Then, we only need to show
		$$\Big{\|}\frac{\x}{\|\x\|}-\frac{\mH_i\x}{\|\mH_i\x\|}\Big{\|}^2\approx 0$$
		Since $\mH_i$ is positive definite, we can diagonalize it as $\mH_i=\mP_i^\top\mLambda_i \mP_i$ where $\mP_i$ is an orthonormal transition matrix for $\mH_i$.
		\begin{align*}
		\Big{\|}\frac{\x}{\|\x\|}-\frac{\mH_i\x}{\|\mH_i\x\|}\Big{\|}^2 &= 2-2\frac{\x^\top \mH_i\x}{\|\x\|\|\mH_i\x\|}\\
		&=2-2\frac{(\mP_i\x)^\top \mLambda_i \mP_i\x}{\|\mP_i\x\|\|\mP_i^\top \mLambda_i \mP_i\x\|}\\
		&=2-2\frac{(\mP_i\x)^\top \mLambda_i \mP_i\x}{\|\mP_i\x\|\|\mLambda_i \mP\x\|}\\
		&\leq2-2\frac{\sum_j \lambda_j (\mP_i\x)_j^2}{\|\mP_i\x\|\|\mLambda_i \mP_i\x\|}\\
		&\leq 2-2\frac{\lambda_{\min} \|\mP_i\x\|^2}{\|\mP_i\x\|\lambda_{\max}\| \mP_i\x\|}\\
		&=2-2\frac{\lambda_{\min}}{\lambda_{\max}}\approx 0
		\end{align*}
	\end{proof}
	\textbf{Lemma \ref{lem:hessain_connum}} suggests that a well-conditioned Hessian matrix of $f_{\sI_i}$ at $\p_i$ allows $\hat{\vg}_i(\w)/\|\hat{\vg}_i(\w)\|$ to be replaced by $(\p_i-\w)/\|\p_i-\w\|$ for all $\w\in\mathcal{R}$. Using this, we prove \textbf{Lemma \ref{lem:each_bddnormcon}}. 
	
	\begin{slemma}\label{lem:each_bddnormcon}
		Let $\w$ be a parameter in $\mathcal{R}$. If the condition number of Hessian matrix of $f_{\sI_i}$ is sufficiently close to  1 (well-conditioned) and $\frac{\|\w-\w_t^0\|}{\|\p_i-\w_t^0\|}$ is sufficiently close to 0, then $$\Big{\|}\frac{\p_i-\w_t^0}{\|\p_i-\w_t^0\|}-\frac{\hat{\vg}_i(\w)}{\|\hat{\vg}_i(\w)\|}\Big{\|} \leq \frac{\xi}{n_b}$$ for all sufficiently small $\xi$.
	\end{slemma}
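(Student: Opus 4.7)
The plan is to apply the triangle inequality to split the quantity of interest into two pieces: a position-change piece that measures how much the direction from $\w$ to $\p_i$ differs from the direction from $\w_t^0$ to $\p_i$, and a gradient-direction piece that compares $(\p_i-\w)/\|\p_i-\w\|$ with $\hat{\vg}_i(\w)/\|\hat{\vg}_i(\w)\|$. Concretely, I would write
\[
\left\|\frac{\p_i-\w_t^0}{\|\p_i-\w_t^0\|}-\frac{\hat{\vg}_i(\w)}{\|\hat{\vg}_i(\w)\|}\right\|
\le
\left\|\frac{\p_i-\w_t^0}{\|\p_i-\w_t^0\|}-\frac{\p_i-\w}{\|\p_i-\w\|}\right\|
+
\left\|\frac{\p_i-\w}{\|\p_i-\w\|}-\frac{\hat{\vg}_i(\w)}{\|\hat{\vg}_i(\w)\|}\right\|,
\]
and bound each summand separately.

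The second summand is already governed by \textbf{Lemma \ref{lem:hessain_connum}}. Inspecting the chain of inequalities in its proof, what is really shown is the quantitative bound $\|\x/\|\x\|-\mH_i\x/\|\mH_i\x\|\|^2\le 2-2\lambda_{\min}(\mH_i)/\lambda_{\max}(\mH_i)$, which tends to $0$ as the condition number $c(\mH_i)$ tends to $1$. So by choosing $c(\mH_i)$ close enough to $1$, the second summand is at most any prescribed $\xi/(2n_b)$.

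For the first summand I would use the elementary perturbation bound that for any nonzero $\va$ and any $\vb=\va+\vd$ with $\vb\ne\vzero$,
\[
\left\|\frac{\va}{\|\va\|}-\frac{\vb}{\|\vb\|}\right\|\le \frac{2\|\vd\|}{\|\va\|},
\]
which is immediate from $\|\vb\|\ge \|\va\|-\|\vd\|$ and a one-line triangle-inequality argument. Applying it with $\va=\p_i-\w_t^0$, $\vb=\p_i-\w$, $\vd=-(\w-\w_t^0)$ yields a bound proportional to $\|\w-\w_t^0\|/\|\p_i-\w_t^0\|$, which by hypothesis can be made at most $\xi/(2n_b)$.

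Summing the two contributions gives the desired bound $\xi/n_b$. The main obstacle is not a single hard inequality but the bookkeeping: one must verify that the two smallness hypotheses (condition number near $1$, and $\|\w-\w_t^0\|/\|\p_i-\w_t^0\|$ near $0$) can be tuned independently so that the two error terms each contribute at most $\xi/(2n_b)$, and also that the informal ``$\approx 0$'' conclusion of \textbf{Lemma \ref{lem:hessain_connum}} is being used as a genuinely quantitative bound rather than an asymptotic statement. No new analytic ingredient beyond the two inequalities above is required.
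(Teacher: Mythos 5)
Your proposal is correct and follows essentially the same route as the paper: a triangle-inequality split into a basepoint-change term and a gradient-versus-direction term, with the latter controlled quantitatively by \textbf{Lemma \ref{lem:hessain_connum}} via $\sqrt{2-2\lambda_{\min}/\lambda_{\max}}$. The only cosmetic difference is that the paper bounds the basepoint-change term by writing it as $\sqrt{2(1-\cos\theta)}$ and bounding the cosine from below as $\|\w-\w_t^0\|/\|\p_i-\w_t^0\|\to 0^+$, whereas you invoke the standard normalization-perturbation inequality $\bigl\|\va/\|\va\|-\vb/\|\vb\|\bigr\|\le 2\|\va-\vb\|/\|\va\|$; both are elementary and yield the same conclusion.
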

	\begin{proof}
		We have
		\begin{align}
		\Big{\|}\frac{\p_i-\w_t^0}{\|\p_i-\w_t^0\|}-\frac{\hat{\vg}_i(\w)}{\|\hat{\vg}_i(\w)\|}\Big{\|} &\leq\Big{\|}\frac{\p_i-\w_t^0}{\|\p_i-\w_t^0\|}-\frac{\tg}{\|\tg\|}\Big{\|}+\Big{\|}\frac{\tg}{\|\tg\|}-\frac{\hat{\vg}_i(\w)}{\|\hat{\vg}_i(\w)\|}\Big{\|}\nonumber\\
		&\leq\Big{\|}\frac{\p_i-\w_t^0}{\|\p_i-\w_t^0\|}-\frac{\tg}{\|\tg\|}\Big{\|}+\Big{\|}\frac{\tg}{\|\tg\|}-\frac{\p_i-\w_t^0}{\|\p_i-\w_t^0\|}\Big{\|}\nonumber\\
		&\quad+\Big{\|}\frac{\p_i-\w_t^0}{\|\p_i-\w_t^0\|}-\frac{\p_i-\w}{\|\p_i-\w\|}\Big{\|}+\Big{\|}\frac{\p_i-\w}{\|\p_i-\w\|}-\frac{\hat{\vg}_i(\w)}{\|\hat{\vg}_i(\w)\|}\Big{\|}\nonumber\\
		&\leq\epsilon+\epsilon+\sqrt{2\Big{(}1-\frac{\left<\p_i-\w_t^0,\p_i-\w\right>}{\|\p_i-\w_t^0\|\|\p_i-\w\|}\Big{)}}+\epsilon\nonumber\\
		&= 3\epsilon + \sqrt{2\Big{(}1-\frac{\|\p_i-\w_t^0\|^2-\left<\p_i-\w_t^0,\w-\w_t^0\right>}{\|\p_i-\w_t^0\|\|\p_i-\w\|}\Big{)}}\nonumber
		\end{align}
		for sufficently small $\epsilon$(See \textbf{Lemma \ref{lem:hessain_connum}}).
		Now we only need to show 
		\begin{equation}
		\sqrt{2\Big{(}1-\frac{\|\p_i-\w_t^0\|^2-\left<\p_i-\w_t^0,\w-\w_t^0\right>}{\|\p_i-\w_t^0\|\|\p_i-\w\|}\Big{)}}<\epsilon~. \label{ineq:first}
		\end{equation}
		Since $\frac{\|\w-\w_t^0\|}{\|\p_i-\w_t^0\|}$ is sufficiently small, we have   
		\begin{align*}
		\|\p_i-\w_t^0\|^2-\left<\p_i-\w_t^0,\w-\w_t^0\right>&=\|\p_i-\w_t^0\|\Big{(}\|\p_i-\w_t^0\|-\left<\frac{\p_i-\w_t^0}{\|\p_i-\w_t^0\|},\w-\w_t^0\right>\Big{)}\\
		&\geq \|\p_i-\w_t^0\|(\|\p_i-\w_t^0\|-\|\w-\w_t^0\|)\\  
		&= \|\p_i-\w_t^0\|^2\Big{(}1-\frac{\|\w-\w_t^0\|}{\|\p_i-\w_t^0\|}\Big{)}\\
		&\geq0.
		\end{align*}
		By using the above non-negativeness, we have the following inequality.
		\begin{align}
		1&\geq\frac{\|\p_i-\w_t^0\|^2-\left<\p_i-\w_t^0,\w-\w_t^0\right>}{\|\p_i-\w_t^0\|\|\p_i-\w\|}\nonumber\\
		&\geq\frac{\|\p_i-\w_t^0\|^2-\left<\p_i-\w_t^0,\w-\w_t^0\right>}{\|\p_i-\w_t^0\|(\|\p_i-\w_t^0\|+\|\w-\w_t^0\|)}\nonumber\\
		&=\frac{\frac{\|\p_i-\w_t^0\|}{\|\w-\w_t^0\|}-\big{<}\frac{\p_i-\w_t^0}{\|\p_i-\w_t^0\|},\frac{\w-\w_t^0}{\|\w-\w_t^0\|}\big{>}}{1+\frac{\|\p_i-\w_t^0\|}{\|\w-\w_t^0\|}}\nonumber\\
		&\geq\frac{\frac{\|\p_i-\w_t^0\|}{\|\w-\w_t^0\|}-1}{1+\frac{\|\p_i-\w_t^0\|}{\|\w-\w_t^0\|}}.
		\label{ineq:last}
		\end{align}
		As $\frac{\|\w-\w_t^0\|}{\|\p_i-\w_t^0\|} \rightarrow0^+$, (\ref{ineq:last}) is monotonically increasing to $1$. This implies that (\ref{ineq:first}) holds for sufficiently small $\epsilon$.    
	\end{proof}
	With a small learning rate, $\w_t^{i-1}$'s are in $\mathcal{R}$ for all $i\in\{1,\dots,n_b\}$. As a result, by \textbf{Lemma \ref{lem:each_bddnormcon}}, we have 
	\begin{equation}\label{seq:each_bddnorm}
	\Big{\|}\frac{\p_i-\w_t^0}{\|\p_i-\w_t^0\|}-\frac{\hat{\vg}_i(\w_t^{i-1})}{\|\hat{\vg}_i(\w_t^{i-1})\|}\Big{\|}\leq \frac{\xi}{{n_b}}
	\end{equation}
	for sufficiently small $\xi$. This implies (\ref{seq:norm_bdd_con}) since $$\Big{\|}\sum_{i=1}^{n_b}\frac{\p_i-\w}{\|\p_i-\w\|}-\sum_{i=1}^{n_b}\frac{\hat{\vg}_i(\w_t^{i-1})}{\|\hat{\vg}_i(\w_t^{i-1})\|}\Big{\|}\leq \sum_{i=1}^{n_b}\Big{\|}\frac{\p_i-\w}{\|\p_i-\w\|}-\frac{\hat{\vg}_i(\w_t^{i-1})}{\|\hat{\vg}_i(\w_t^{i-1})\|}\Big{\|}.$$Then we can apply \textbf{Theorem \ref{thm:sgd_kappa}} and $\hat{\kk}(\w_t^0)>\hat{\kk}(\w_{t+1}^0)$ holds.
	
	For the last statement, \textit{''Moreover, we can estimate $\hat{\kk}(\w_t^0)$ and $\hat{\kk}(\w_{t+1}^0)$ by minibatch gradients at  $\w_t^0$ and $\w_{t+1}^0$, respectively.''}, recall that $$\hat{\kk}(\w_t^0)=h\Big{(}\Big{\|}\sum_{i=1}^{n_b}\frac{\p_i(\w_t^0)-\w_t^0}{\|\p_i(\w_t^0)-\w_t^0\|}\Big{\|}\Big{)}$$  
	where $h(\cdot)$ is increasing and Lipschitz continuous(\textbf{Lemma \ref{lem:kappa_l2}}).
	By \textbf{Lemma \ref{lem:hessain_connum}}, we have
	$$\Big{\|}\frac{\p_i(\w_t^0)-\w_t^0}{\|\p_i(\w_t^0)-\w_t^0\|}-\frac{\hat{\vg}_i(\w_t^0)}{\|\hat{\vg}_i(\w_t^0)\|}\Big{\|}<\frac{\xi}{n_b}$$ for sufficiently small $\xi>0$. Therefore,
	$$\Big{|}\Big{\|}\sum_{i=1}^{n_b}\frac{\p_i(\w_t^0)-\w_t^0}{\|\p_i(\w_t^0)-\w_t^0\|}\Big{\|}-\Big{\|}\sum_{i=1}^{n_b}\frac{\hat{\vg}_i(\w_t^0)}{\|\hat{\vg}_i(\w_t^0)\|}\Big{\|}\Big{|}\leq\Big{\|}\sum_{i=1}^{n_b}\frac{\p_i(\w_t^0)-\w_t^0}{\|\p_i(\w_t^0)-\w_t^0\|}-\sum_{i=1}^{n_b}\frac{\hat{\vg}_i(\w_t^0)}{\|\hat{\vg}_i(\w_t^0)\|}\Big{\|}$$
	where rhs is bounded by $\xi$. Hence, Lipschitz continuity of $h(\cdot)$  implies that $$\Big{|}h\Big{(}\Big{\|}\sum_{i=1}^{n_b}\frac{\p_i(\w_t^0)-\w_t^0}{\|\p_i(\w_t^0)-\w_t^0\|}\Big{\|}\Big{)}-h\Big{(}\Big{\|}\sum_{i=1}^{n_b}\frac{\hat{\vg}_i(\w_t^0)}{\|\hat{\vg}_i(\w_t^0)\|}\Big{\|}\Big{)}\Big{|} \rightarrow 0$$
	as $\xi \rightarrow 0$. That is,
	$$\hat{\kk}(\w_t^0)\approx h\Big{(}\Big{\|}\sum_{i=1}^{n_b}\frac{\hat{\vg}_i(\w_t^0)}{\|\hat{\vg}_i(\w_t^0)\|}\Big{\|}\Big{)}\Big{)}.$$
	Since $t$ is arbitrary, we can apply this for all  $\w\in\mathcal{R}$ including $\w_{t+1}^0$.
\end{proof}

\section{Experimental Details}\label{sec:exp}
\subsection{Model Architecture}
For all cases, their weighted layers do not have biases, and dropout \citep{dropout} is not applied. We use Xavier initializations\citep{Xavier} and cross entropy loss functions for all experiments.
\paragraph{FNN}
The {\bf FNN} is a fully connected network with a single hidden layer. It has 800 hidden units with ReLU \citep{ReLU} activations and a softmax output layer.

\paragraph{DFNN}
The {\bf DFNN} is a fully connected network with three hidden layers. It has 800 hidden units with ReLU activations in each hidden layers and a softmax output layer.

\paragraph{CNN}
The network architecture of \textbf{CNN} is similar to the network introduced in \citet{ResNet} as a CIFAR-10 plain network. The first layer is $3\times 3$ convolution layer and the number of output filters are 16. After that, we stack of \{4, 4, 3, 1\} layers with $3\times3$ convolutions on the feature maps of sizes \{32, 16, 8, 4\} and the numbers of filters \{16, 32, 64, 128\}, respectively. The subsampling is performed with a stride of 2. All convolution layers are activated by ReLU and the convolution part ends with a global average pooling\citep{averagepool}, a 10-way fully-conneted layers, and softmax. Note that there are 14 stacked weighted layers. 

\paragraph{+BN}
We apply batch normalization right before the ReLU activations on all hidden layers.

\paragraph{+Res}
The identity skip connections are added after every two convolution layers before ReLU nonlinearity (After batch normalization, if it is applied on it.). We concatenate zero padding slices backwards when the number of filters increases.   

\subsection{Data}
We use neither data augmentations nor preprocessings except scaling pixel values into $[0,1]$ both MNIST and CIFAR-10. In the case of CIFAR-10, for validation, we randomly choose 5000 images out of 50000 training images.

\clearpage

\section{Some notes about the $\kappa$ estimate}\label{sec:kappa_hat}
\begin{figure}[t]
    \centering
 	\begin{subfigure}{0.55\linewidth}
		\centering
		\includegraphics[width=\linewidth]{./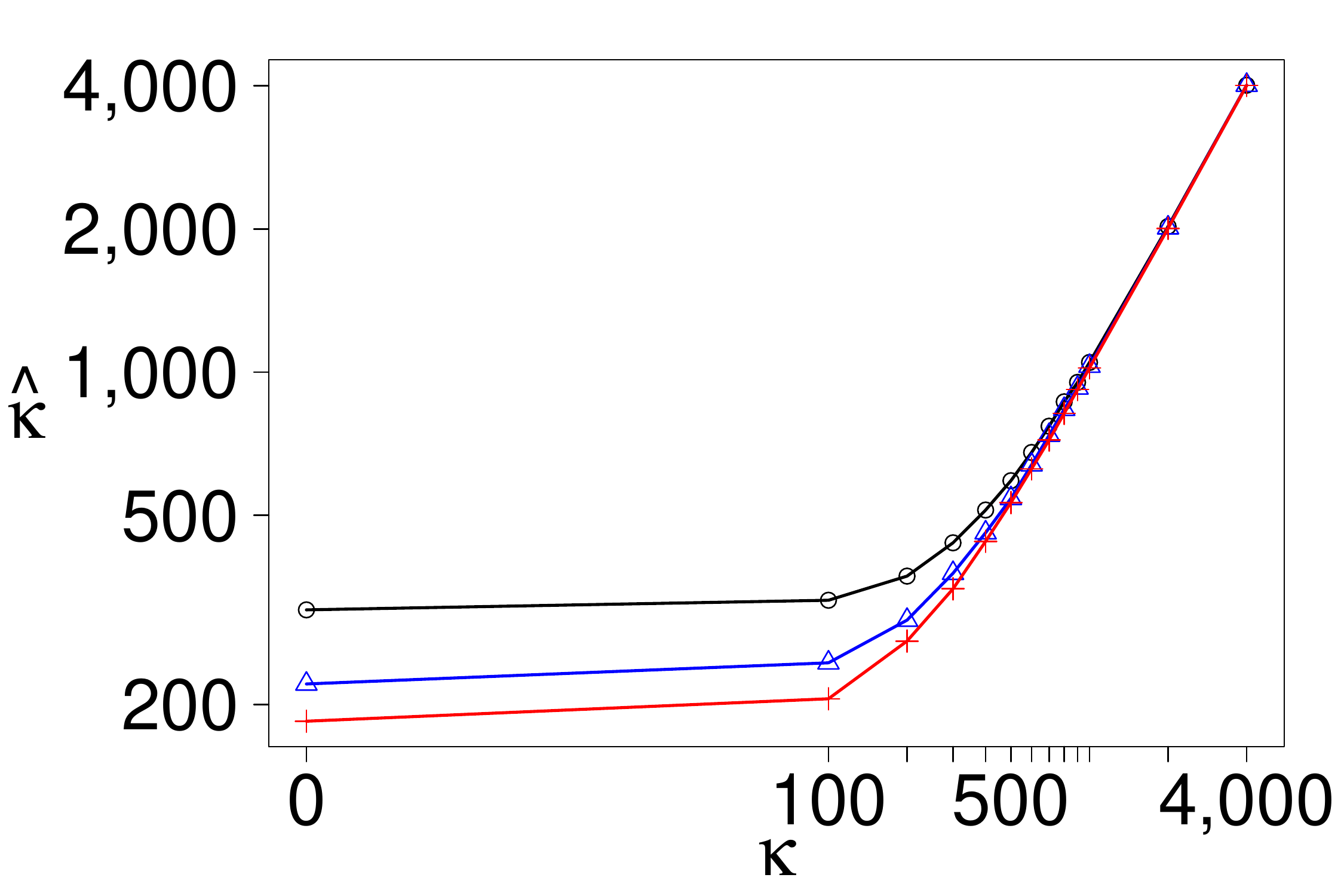}
	\end{subfigure}
	 	\begin{subfigure}{0.25\linewidth}
		\centering
		\includegraphics[trim=0cm 5cm 10cm 0cm, clip,width=\linewidth]{./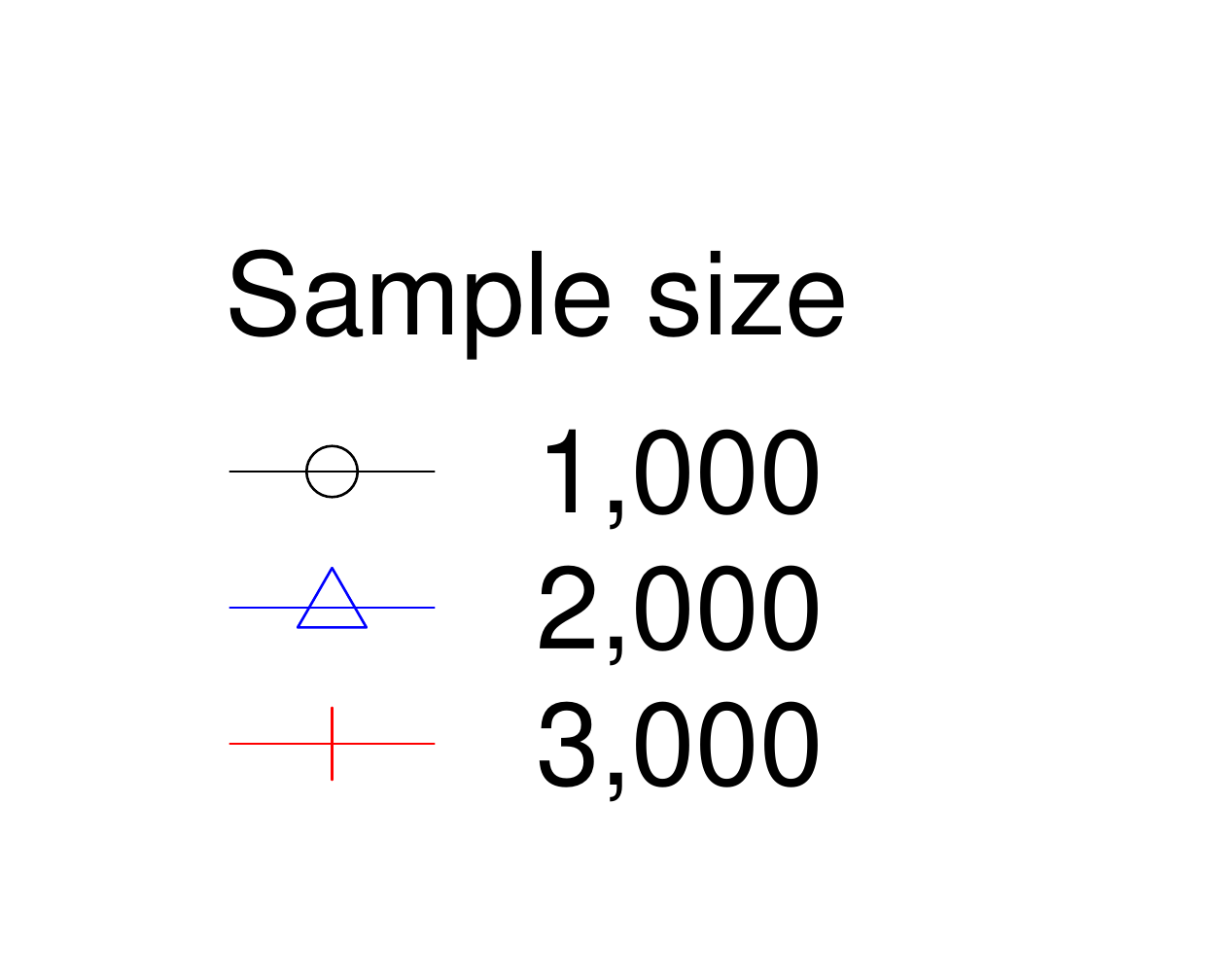}
	\end{subfigure}
 	\caption{We show $\hat\kappa$ estimated from \{$1,000$ (black), $2,000$ (blue), $3,000$ (red)\} random samples of the vMF distribution with underlying true $\kappa$ in $10,000$-dimensional space, as the function of $\kappa$ (in log-log scale except 0). For large $\kappa$, it is well-estimated by $\hat\kappa$ regardless of sample sizes. When the true $\kappa$ approaches 0, we need a larger sample size to more accurately estimate this.}\label{fig:kappa_kappahat}
\end{figure}  

We point out that, for a small $\kk$, the absolute value of $\hat\kappa$ is not a precise indicator of the uniformity due to its dependence on the dimensionality, as was investigated earlier by \citet{cutting2017tests}. In order to verify this claim, we run some simulations. First, we vary the number of samples and the true underlying $\kappa$ with the fixed dimensionality (Unfortunately, we could not easily go over $10,000$ dimensions due to the difficulty in sampling from the vMF distribution with positive $\kappa$.). We draw $\{1,000, ~2,000,~ 3,000\}$ random samples from the vMF distribution with the designated $\kappa$. We compute $\hat\kappa$ from these samples.

As can be seen from Figure \ref{fig:kappa_kappahat}, the $\hat\kappa$ approaches the true $\kappa$ from above as the number of samples increases. When the true $\kappa$ is large, the estimation error rapidly becomes zero as the number of samples approaches $3,000$. When the true $\kappa$ is low, however, the gap does not narrow completely even with $3,000$ samples. 

While fixing the true $\kappa$ to $0$ and the number of samples to $\{1,000,~2,000,~3,000\}$, we vary the dimensionality to empirically investigate the $\hat\kappa$. We choose to use $3,000$ samples to be consistent with our experiments in this paper. We run five simulations each and report both mean and standard deviation (Table \ref{table:kappa0hat}).

\begin{table}[t]
\caption{The value of $\hat\kappa$ (mean$\pm$std.) estimated from $\{1,000,~2,000,~3,000\}$ random samples of the vMF distribution with $\kappa=0$ across various dimensions.}
\label{table:kappa0hat}
\begin{center}
\begin{tabular}{lllll}
\multicolumn{1}{c}{\bf Network}  &\multicolumn{1}{c}{\bf Dimension} &\multicolumn{1}{c}{\bf $\hat\kappa$ (1,000 samples)} &\multicolumn{1}{c}{\bf $\hat\kappa$ (2,000 samples)}&\multicolumn{1}{c}{\bf $\hat\kappa$ (3,000 samples)}
\\ \hline \\
{\bf CNN}           &$206,128$     &$6,527.09\pm5.98$&$4,608.16\pm6.18$&$3,762.45\pm7.23$\\
{\bf CNN+Res}       &$206,128$     &$6,527.09\pm5.98$&$4,608.16\pm6.18$ &$3,762.45\pm7.23$\\
{\bf CNN+BN}        &$207,152$     &$6,563.62\pm8.68$&$4,633.84\pm5.60$ &$3,781.04\pm3.12$\\ 
{\bf CNN+Res+BN}    &$207,152$     &$6,563.62\pm8.68$&$4,633.84\pm5.60$ &$3,781.04\pm3.12$\\ 
{\bf FNN}           &$635,200$     &$20,111.90\pm13.04$&$14,196.89\pm14.91$ &$11,607.39\pm9.27$\\
{\bf FNN+BN}        &$636,800$     &$20,157.57\pm14.06$&$14,259.83\pm16.38$ &$11,621.63\pm6.83$\\
{\bf DFNN}          &$1,915,200$   &$60,619.02\pm13.49$&$42,849.86\pm18.90$ &$34,983.31\pm15.62$\\
{\bf DFNN+BN}       &$1,920,000$   &$60,789.84\pm17.93$&$42,958.71\pm25.61$ &$35,075.99\pm12.39$

\end{tabular}
\end{center}
\end{table}

We clearly observe the trend of increasing $\hat\kappa$'s with respect to the dimensions. This suggests that we should not compare the absolute values of $\hat\kappa$'s across different network architectures due to the differences in the number of parameters. This agrees well with \citet{cutting2017tests} which empirically showed that the threshold for rejecting the null hypothesis of $\kappa = p$ by using $\hat\kappa$ where $p$ is a fixed value grows with respect to the dimensions.
\clearpage

\section{Other four training runs in Figure \ref{fig:Relation_kSNR}}\label{sec:fig_kSNR}
We show plots from other four training runs in Figure 6. For all runs, the curves of GS (inverse of SNR) and $\hat\kk$ are strongly correlated while GNS (inverse of normSNR) is less correlated to GS.    
\begin{figure}[h]
	\begin{subfigure}{0.5\linewidth}
		\centering
		\includegraphics[width=\linewidth]{./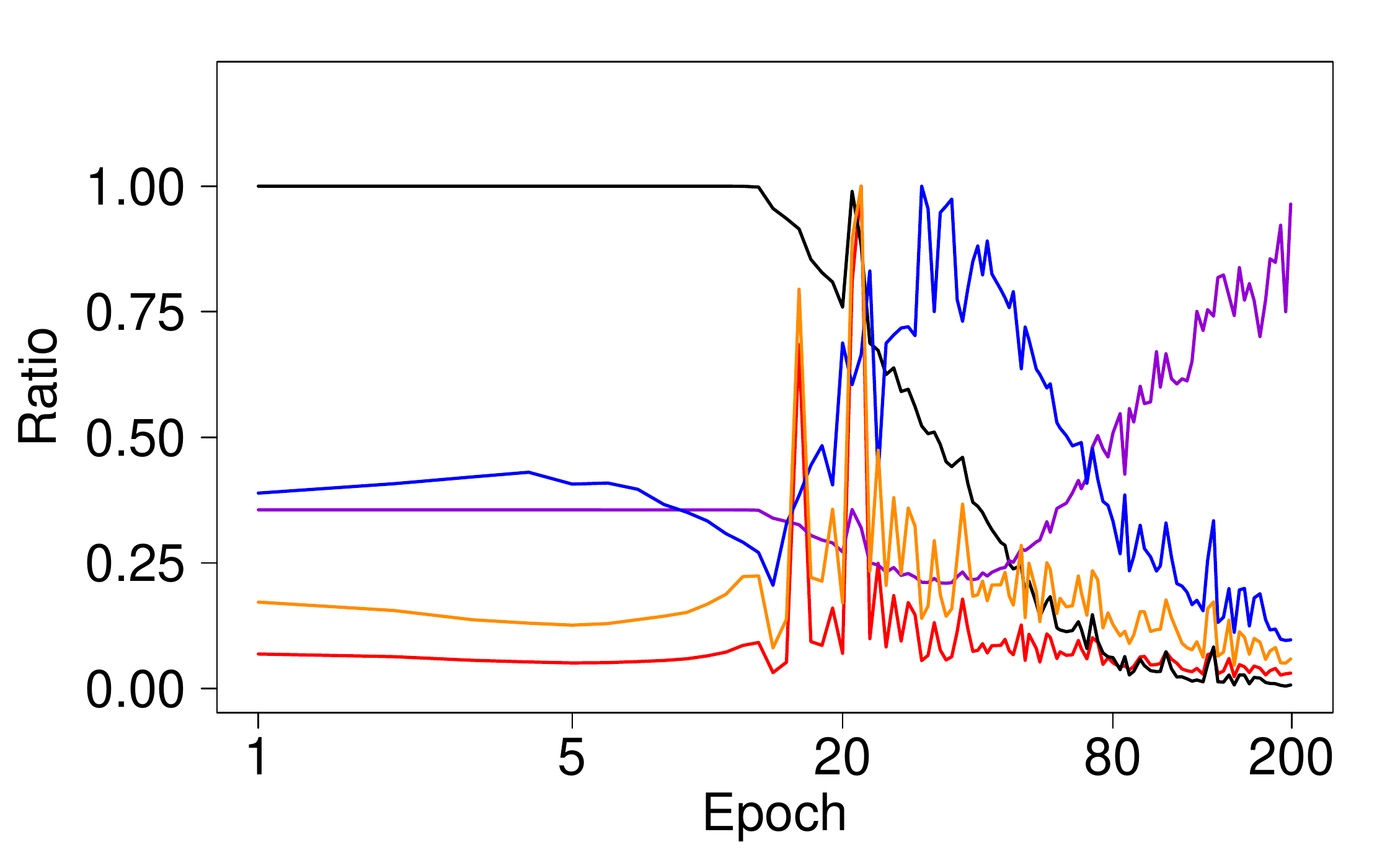}
		\caption{\textbf{CNN}, initialization 1}\label{subfig:logxFP_C10_CNN_Seed1}
	\end{subfigure}
	\begin{subfigure}{0.5\linewidth}
		\centering
		\includegraphics[width=\linewidth]{./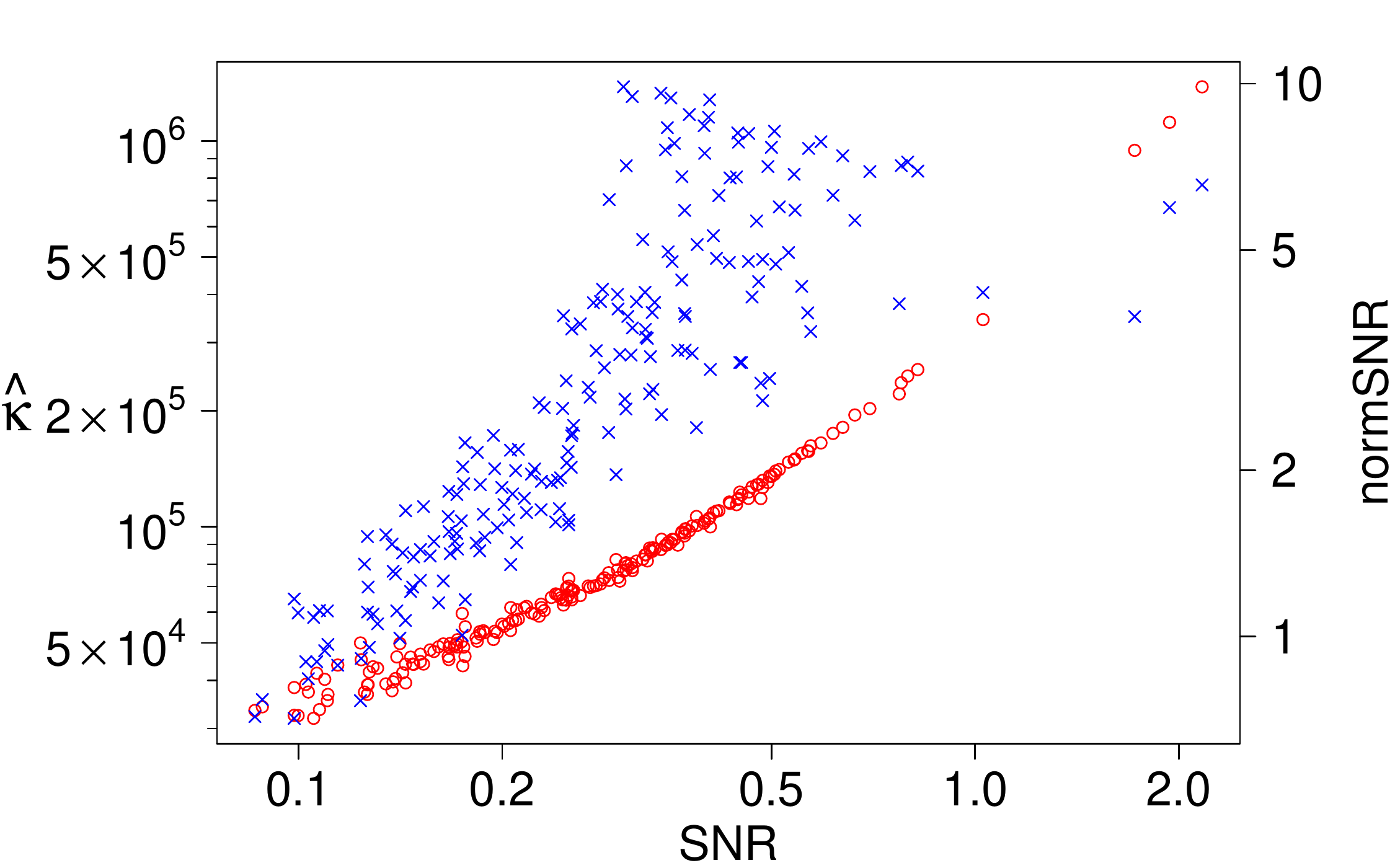}
		\caption{\textbf{CNN}, initialization 1}\label{subfig:scatter_C10_CNN_Seed1}
	\end{subfigure}	
	\begin{subfigure}{0.5\linewidth}
		\centering
		\includegraphics[width=\linewidth]{./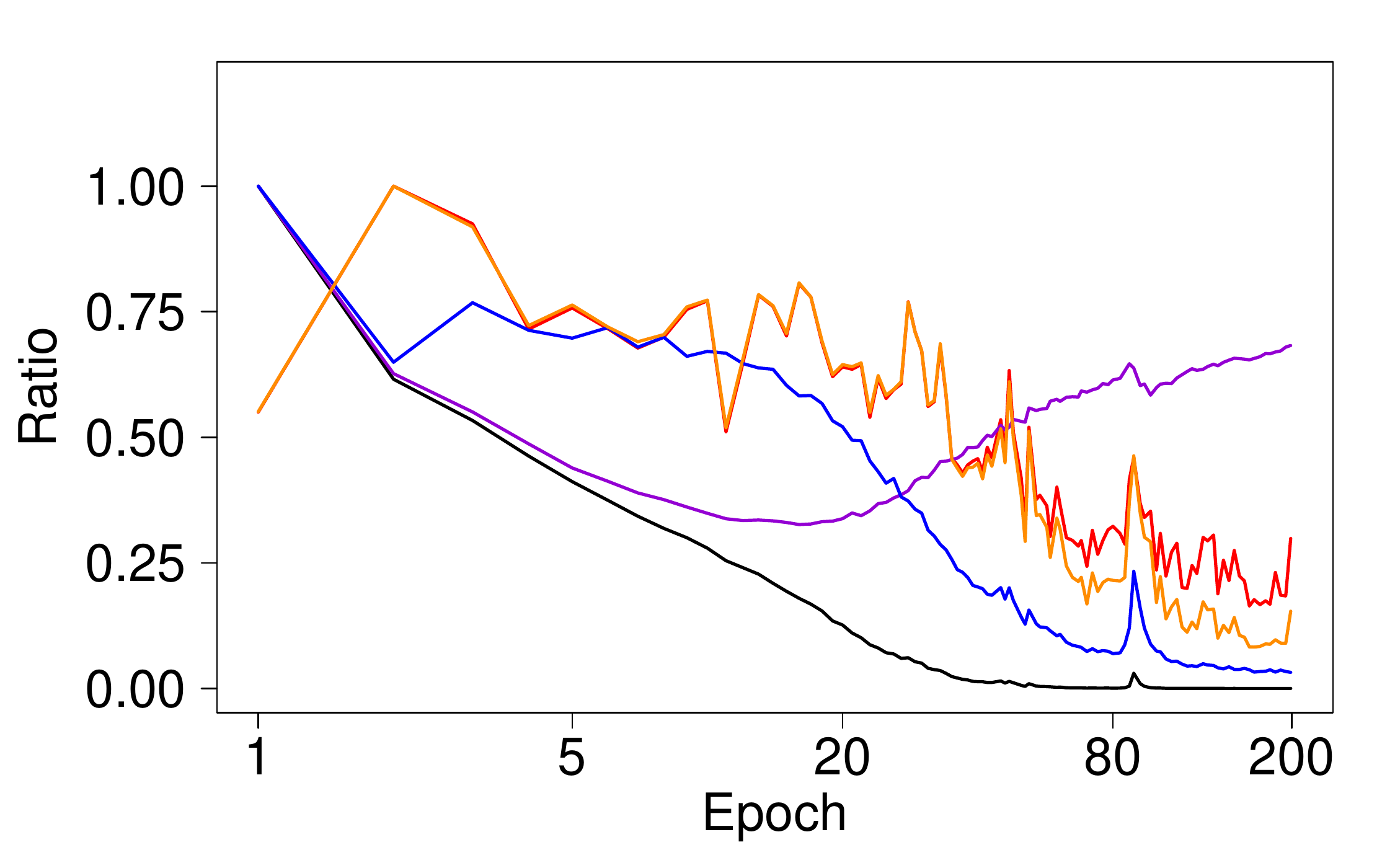}
		\caption{\textbf{CNN+BN}, initialization 1}\label{subfig:logxFP_C10_CNN_BN_Seed1}
	\end{subfigure}		
	\begin{subfigure}{0.5\linewidth}
		\centering
		\includegraphics[width=\linewidth]{./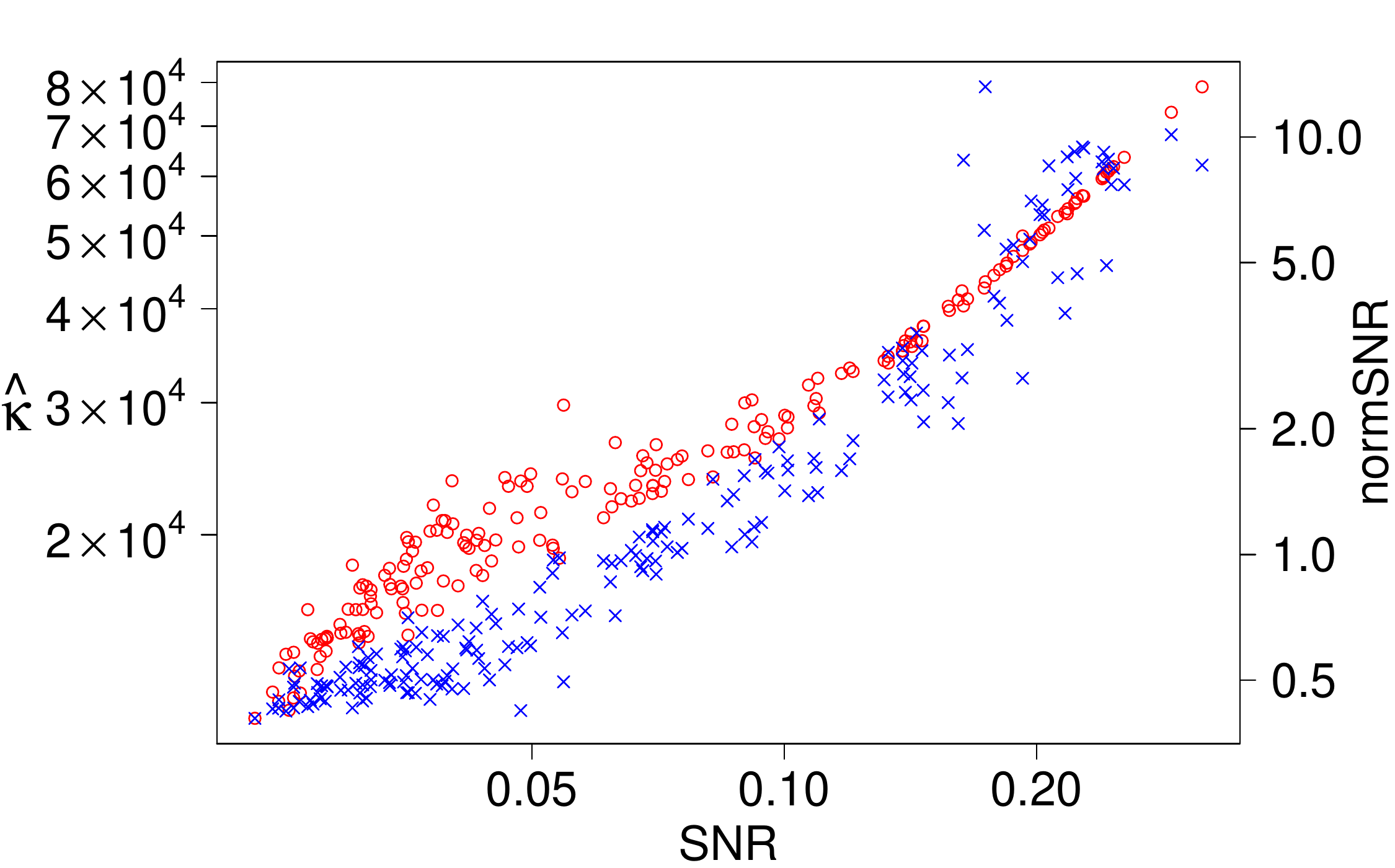}
		\caption{\textbf{CNN+BN}, initialization 1}\label{subfig:scatter_C10_CNN_BN_Seed1}
	\end{subfigure}
	\begin{subfigure}{0.5\linewidth}
		\centering
		\includegraphics[width=\linewidth]{./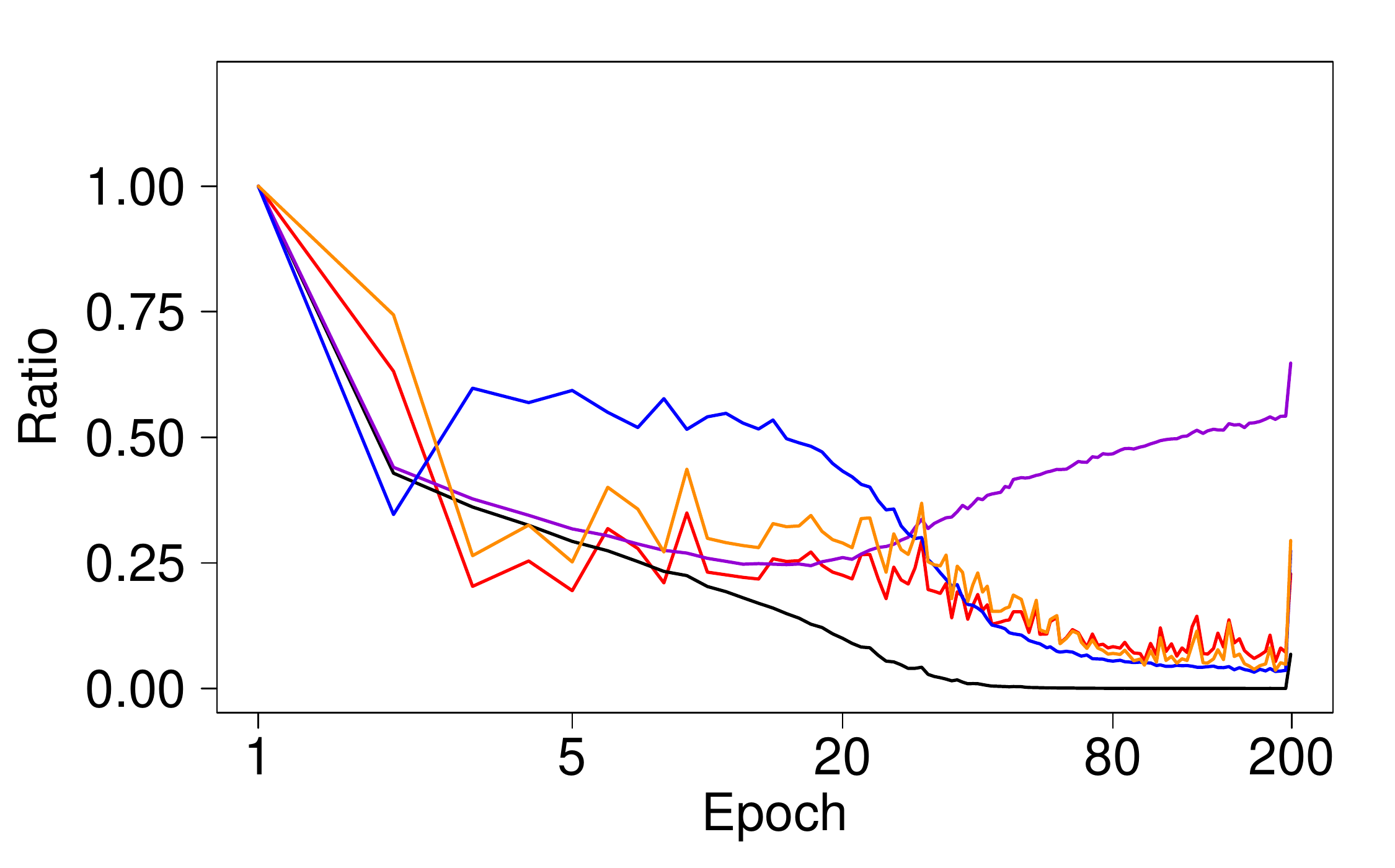}
		\caption{\textbf{CNN+Res+BN}, initialization 1}\label{subfig:logxFP_C10_ResNet_BN_Seed1}
	\end{subfigure}
	\begin{subfigure}{0.5\linewidth}
		\centering
		\includegraphics[width=\linewidth]{./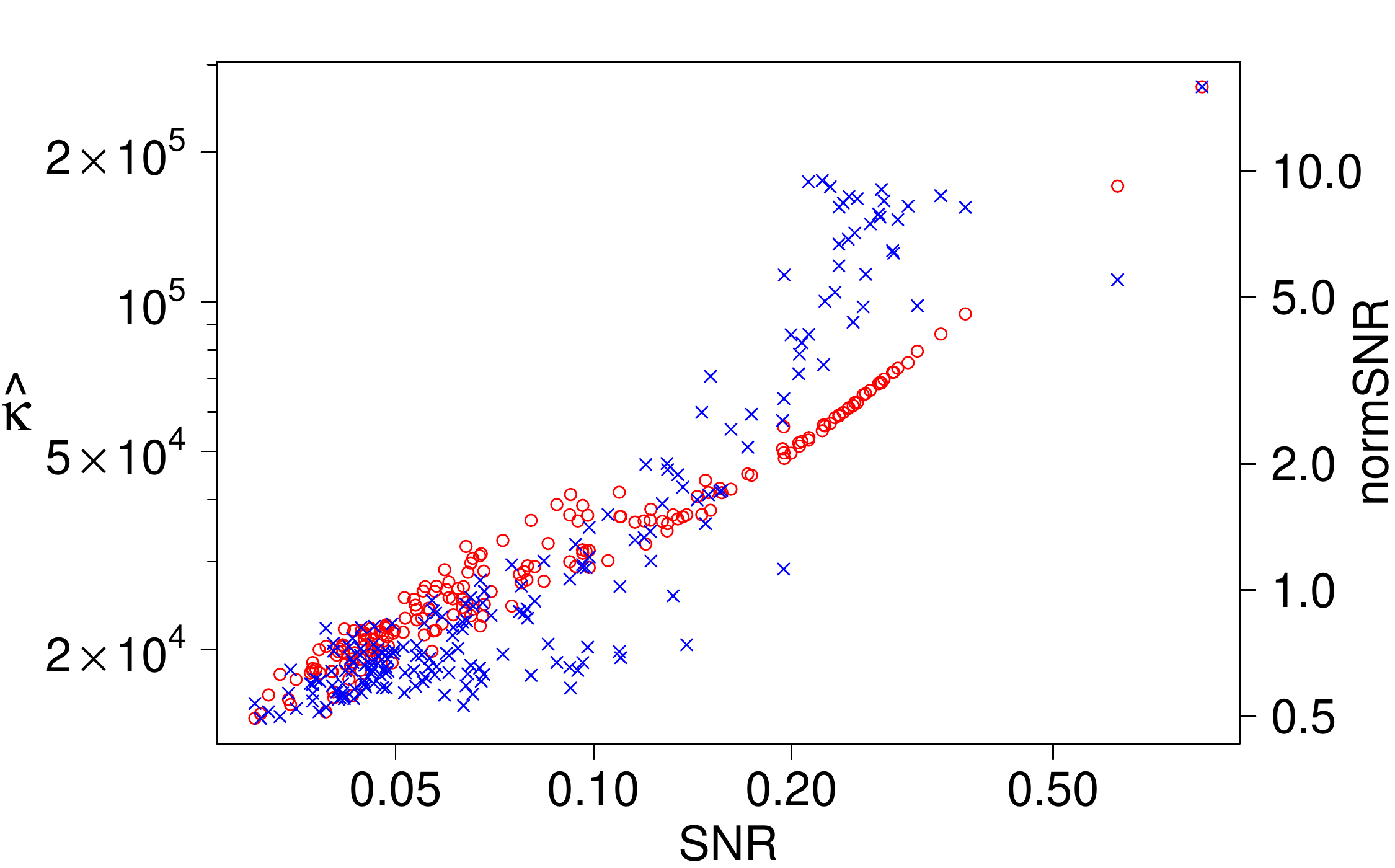}
		\caption{\textbf{CNN+Res+BN}, initialization 1}\label{subfig:scatter_C10_ResNet_BN_Seed1}
	\end{subfigure}
	\begin{subfigure}{0.5\linewidth}
		\centering
		\includegraphics[trim=0cm 3cm 2cm 2cm, clip,width=\linewidth]{./figures/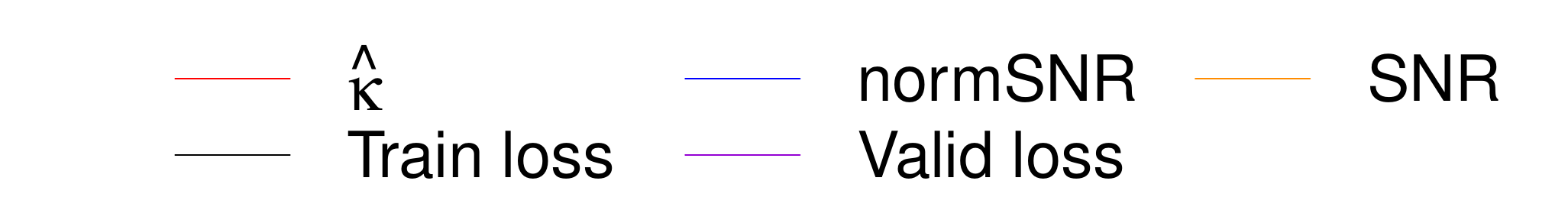}
	\end{subfigure}
	\begin{subfigure}{0.5\linewidth}
		\centering
		\includegraphics[trim=0cm 3cm 2cm 2cm, clip,width=\linewidth]{./figures/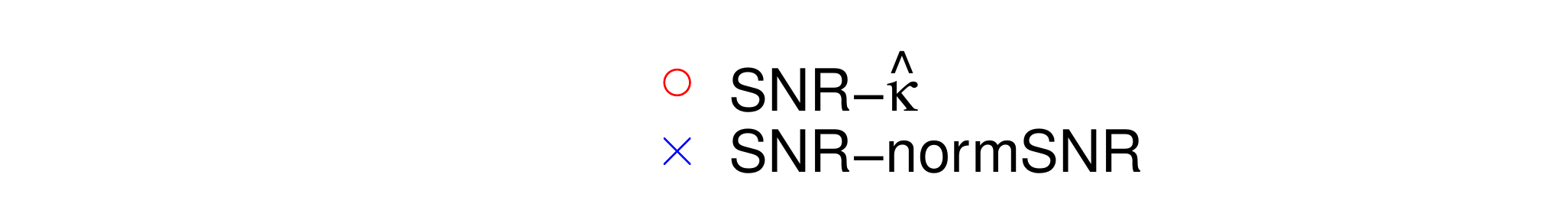}
	\end{subfigure}
	\caption{(a,c,e) We plot the evolution of the training loss (Train loss), validation loss (Valid loss), inverse of gradient stochasticity (SNR), inverse of gradient norm stochasticity (normSNR) and directional uniformity $\kk$. We normalized each quantity by its maximum value over training for easier comparison on a single plot. In all the cases, SNR (orange) and $\hat\kk$ (red) are almost entirely correlated with each other, while normSNR is less correlated. (b,d,f) We further verify this by illustrating SNR-$\hat\kk$ scatter plots (red) and SNR-normSNR scatter plots (blue) in log-log scales. These plots suggest that the SNR is largely driven by the directional uniformity.}\label{fig:Relation_kSNR_Seed1}
\end{figure}
\begin{figure}[t]
	\begin{subfigure}{0.5\linewidth}
		\centering
		\includegraphics[width=\linewidth]{./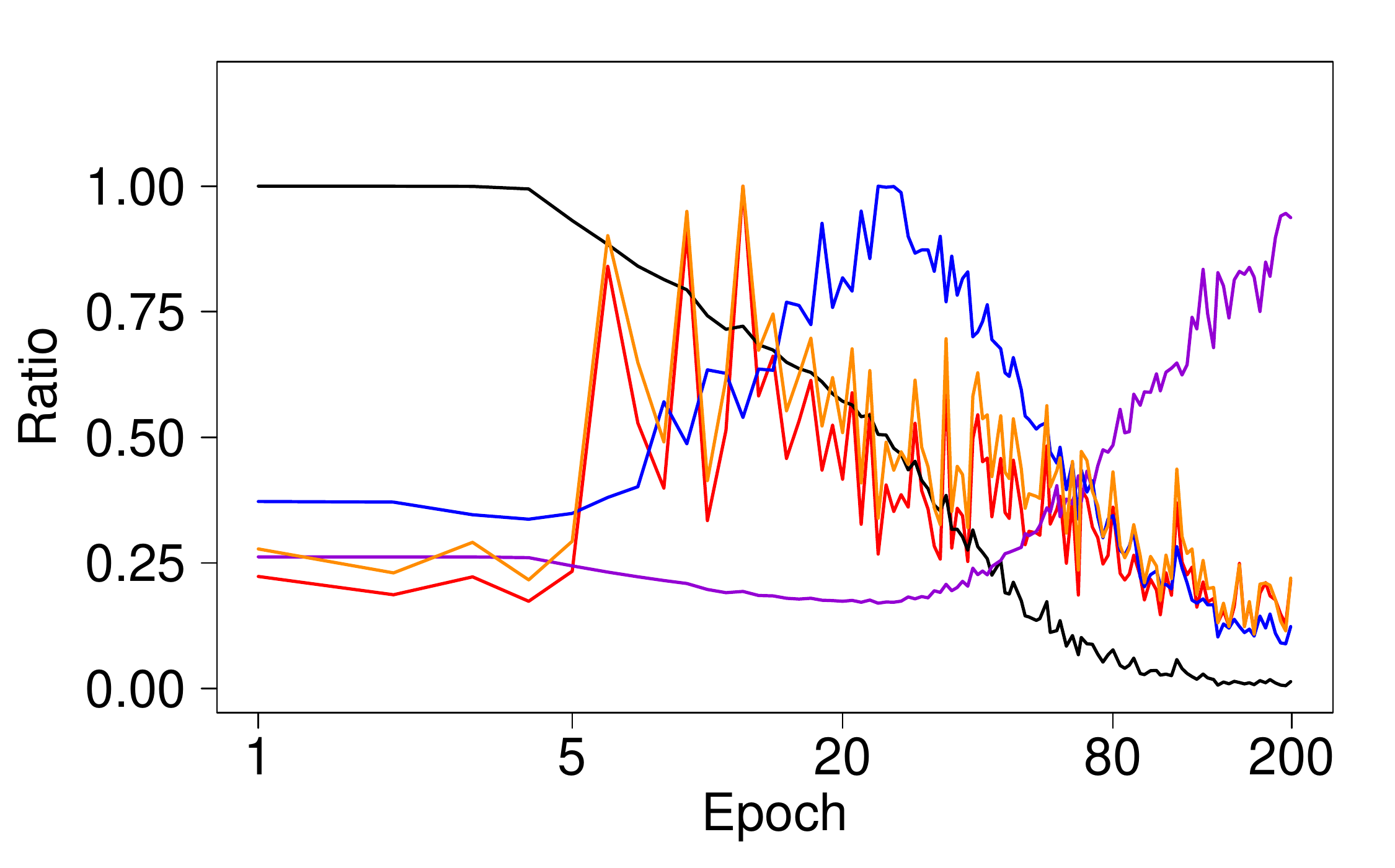}
		\caption{\textbf{CNN}, initialization 2}\label{subfig:logxFP_C10_CNN_Seed2}
	\end{subfigure}
	\begin{subfigure}{0.5\linewidth}
		\centering
		\includegraphics[width=\linewidth]{./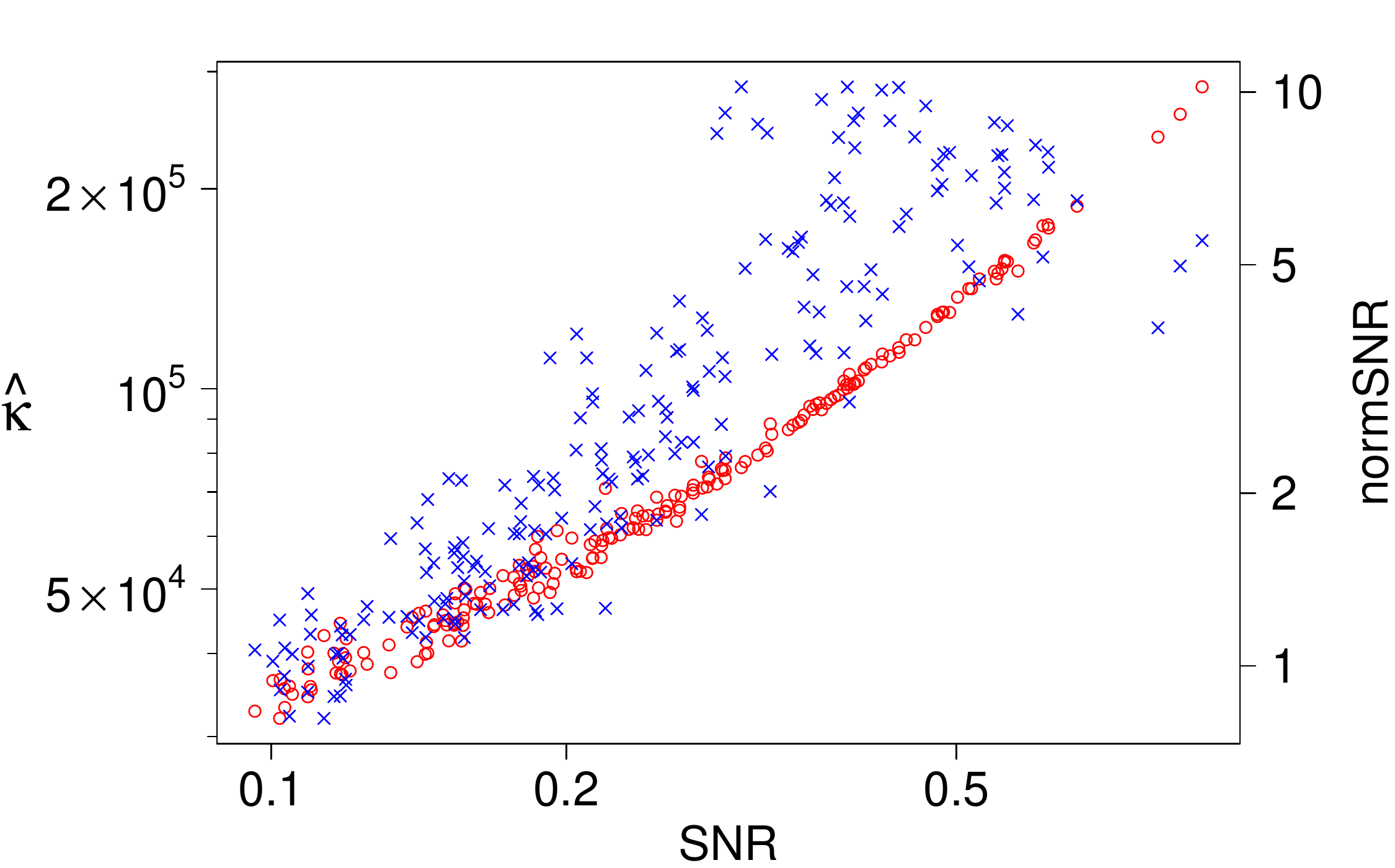}
		\caption{\textbf{CNN}, initialization 2}\label{subfig:scatter_C10_CNN_Seed2}
	\end{subfigure}	
	\begin{subfigure}{0.5\linewidth}
		\centering
		\includegraphics[width=\linewidth]{./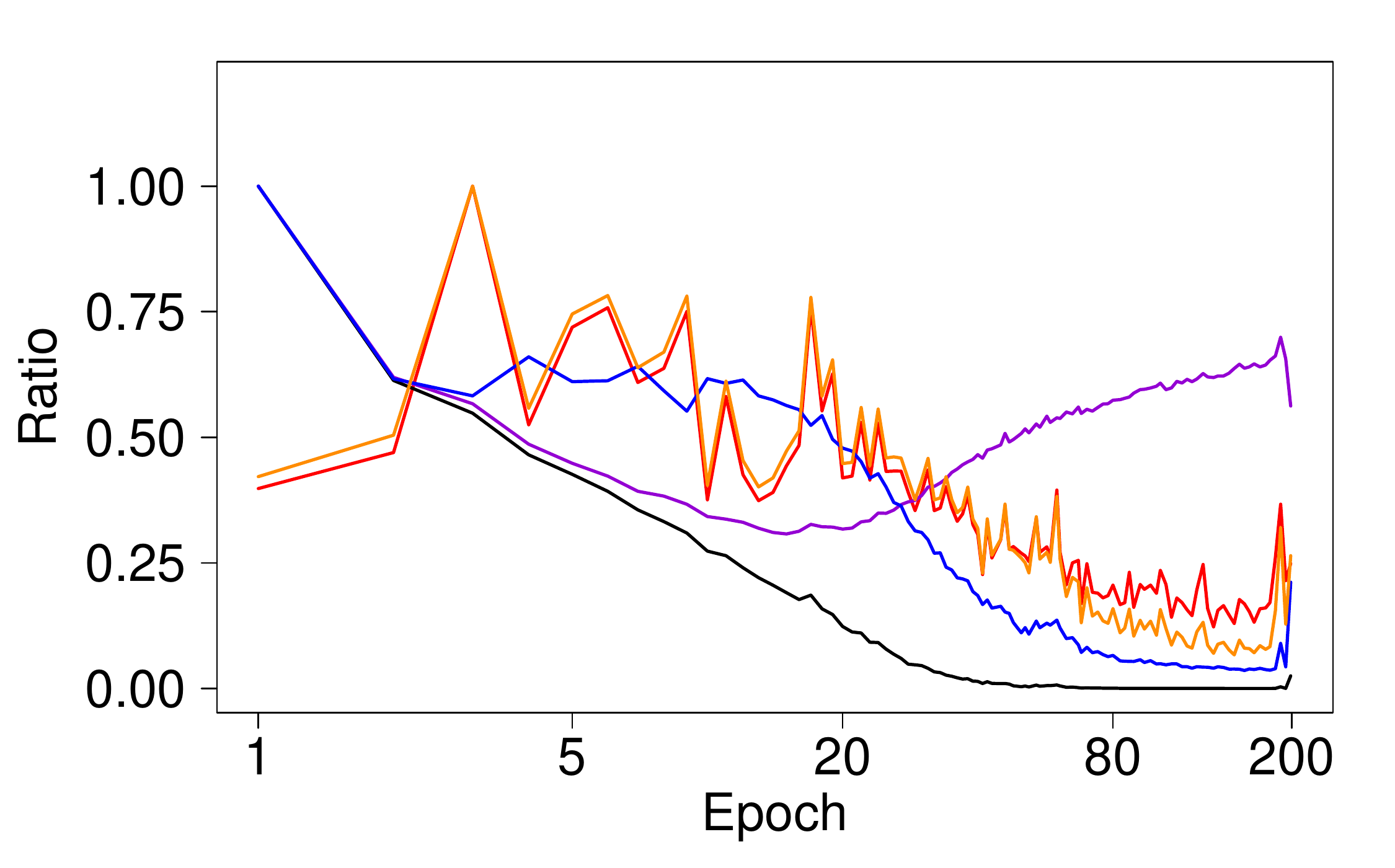}
		\caption{\textbf{CNN+BN}, initialization 2}\label{subfig:logxFP_C10_CNN_BN_Seed2}
	\end{subfigure}		
	\begin{subfigure}{0.5\linewidth}
		\centering
		\includegraphics[width=\linewidth]{./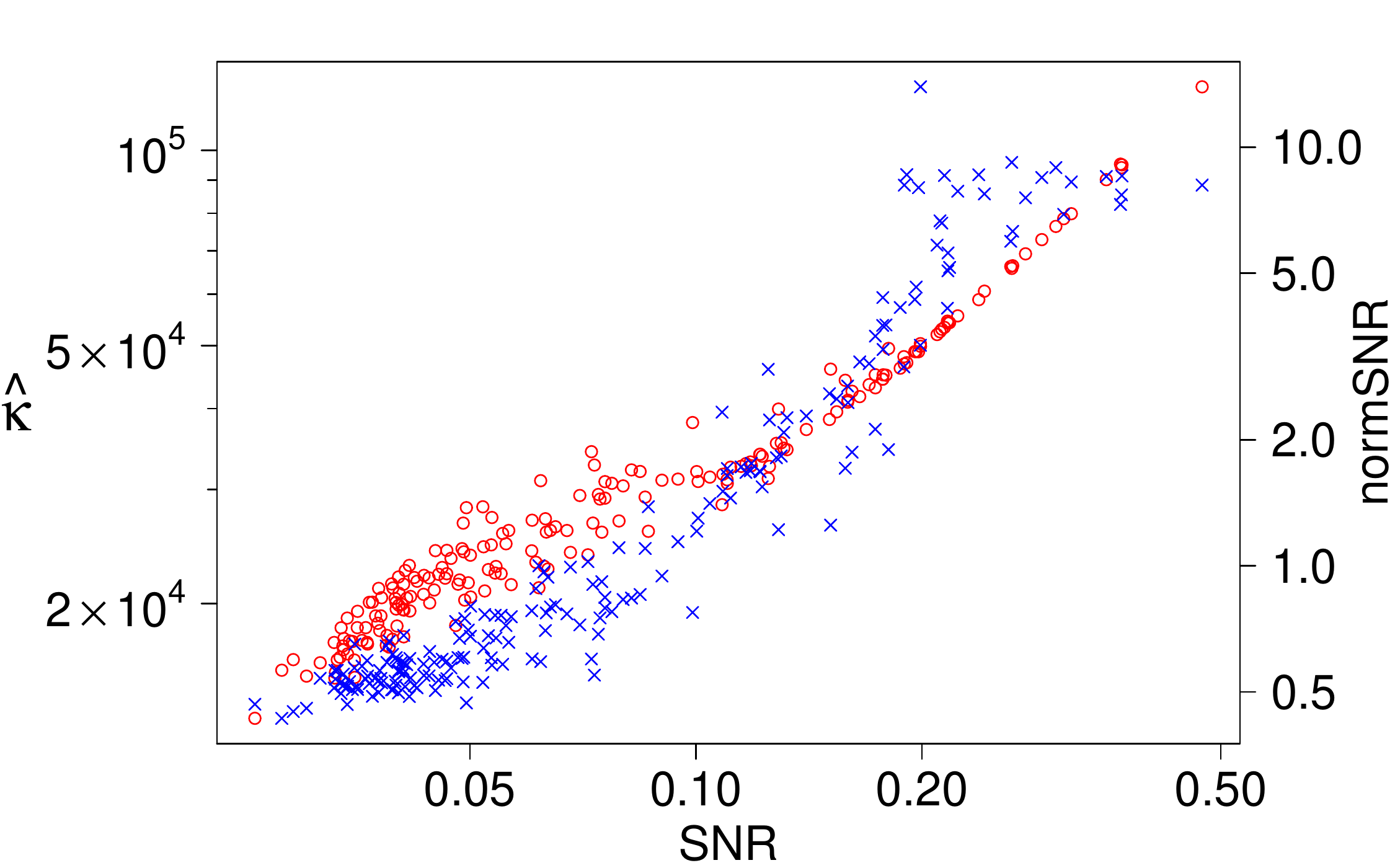}
		\caption{\textbf{CNN+BN}, initialization 2}\label{subfig:scatter_C10_CNN_BN_Seed2}
	\end{subfigure}
	\begin{subfigure}{0.5\linewidth}
		\centering
		\includegraphics[width=\linewidth]{./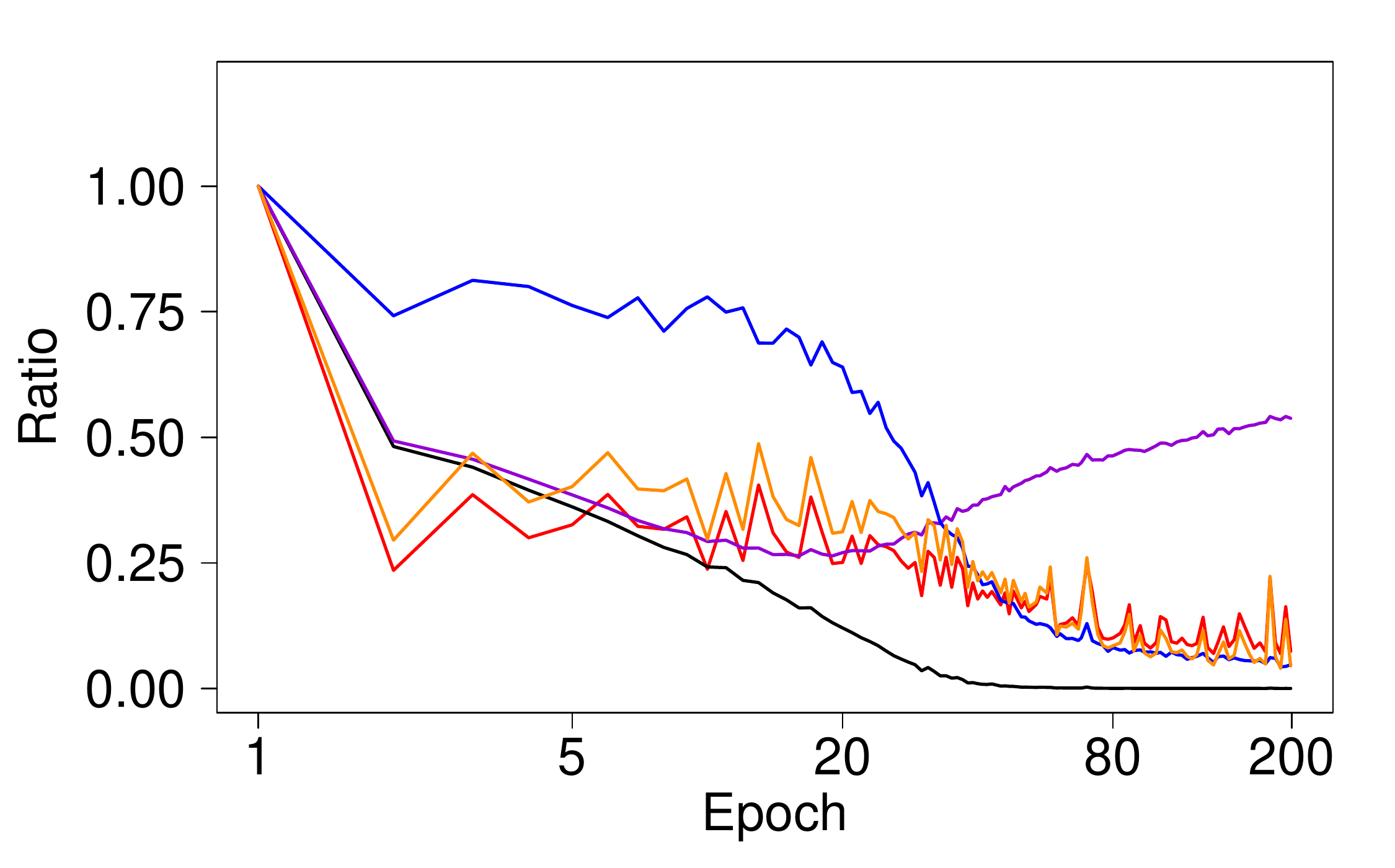}
		\caption{\textbf{CNN+Res+BN}, initialization 2}\label{subfig:logxFP_C10_ResNet_BN_Seed2}
	\end{subfigure}
	\begin{subfigure}{0.5\linewidth}
		\centering
		\includegraphics[width=\linewidth]{./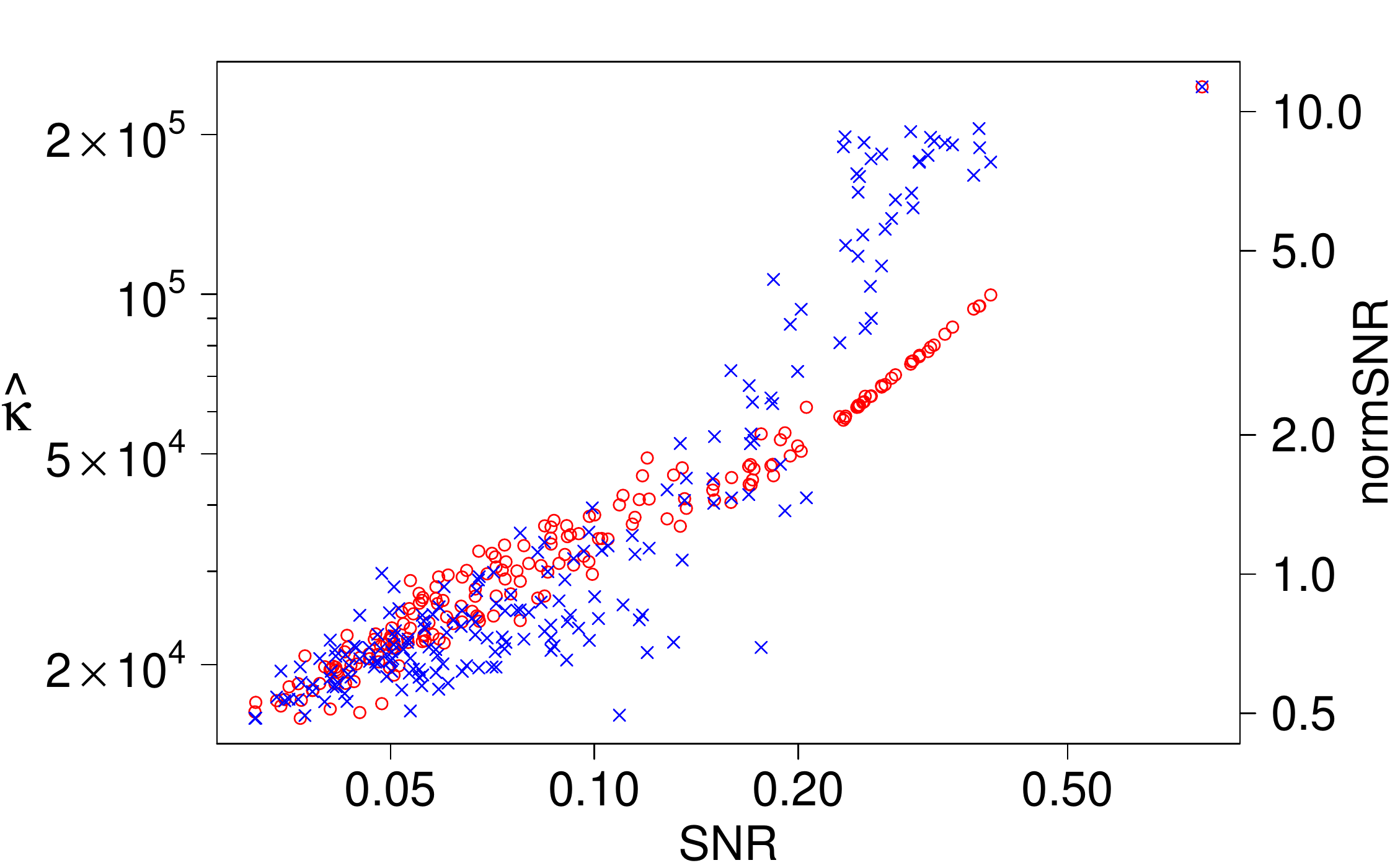}
		\caption{\textbf{CNN+Res+BN}, initialization 2}\label{subfig:scatter_C10_ResNet_BN_Seed2}
	\end{subfigure}
	\begin{subfigure}{0.5\linewidth}
		\centering
		\includegraphics[trim=0cm 3cm 2cm 2cm, clip,width=\linewidth]{./figures/fig6_left_legend.pdf}
	\end{subfigure}
	\begin{subfigure}{0.5\linewidth}
		\centering
		\includegraphics[trim=0cm 3cm 2cm 2cm, clip,width=\linewidth]{./figures/fig6_right_legend.pdf}
	\end{subfigure}
	\caption{(a,c,e) We plot the evolution of the training loss (Train loss), validation loss (Valid loss), inverse of gradient stochasticity (SNR), inverse of gradient norm stochasticity (normSNR) and directional uniformity $\kk$. We normalized each quantity by its maximum value over training for easier comparison on a single plot. In all the cases, SNR (orange) and $\hat\kk$ (red) are almost entirely correlated with each other, while normSNR is less correlated. (b,d,f) We further verify this by illustrating SNR-$\hat\kk$ scatter plots (red) and SNR-normSNR scatter plots (blue) in log-log scales. These plots suggest that the SNR is largely driven by the directional uniformity.}\label{fig:Relation_kSNR_Seed2}
\end{figure}
\begin{figure}[t]
	\begin{subfigure}{0.5\linewidth}
		\centering
		\includegraphics[width=\linewidth]{./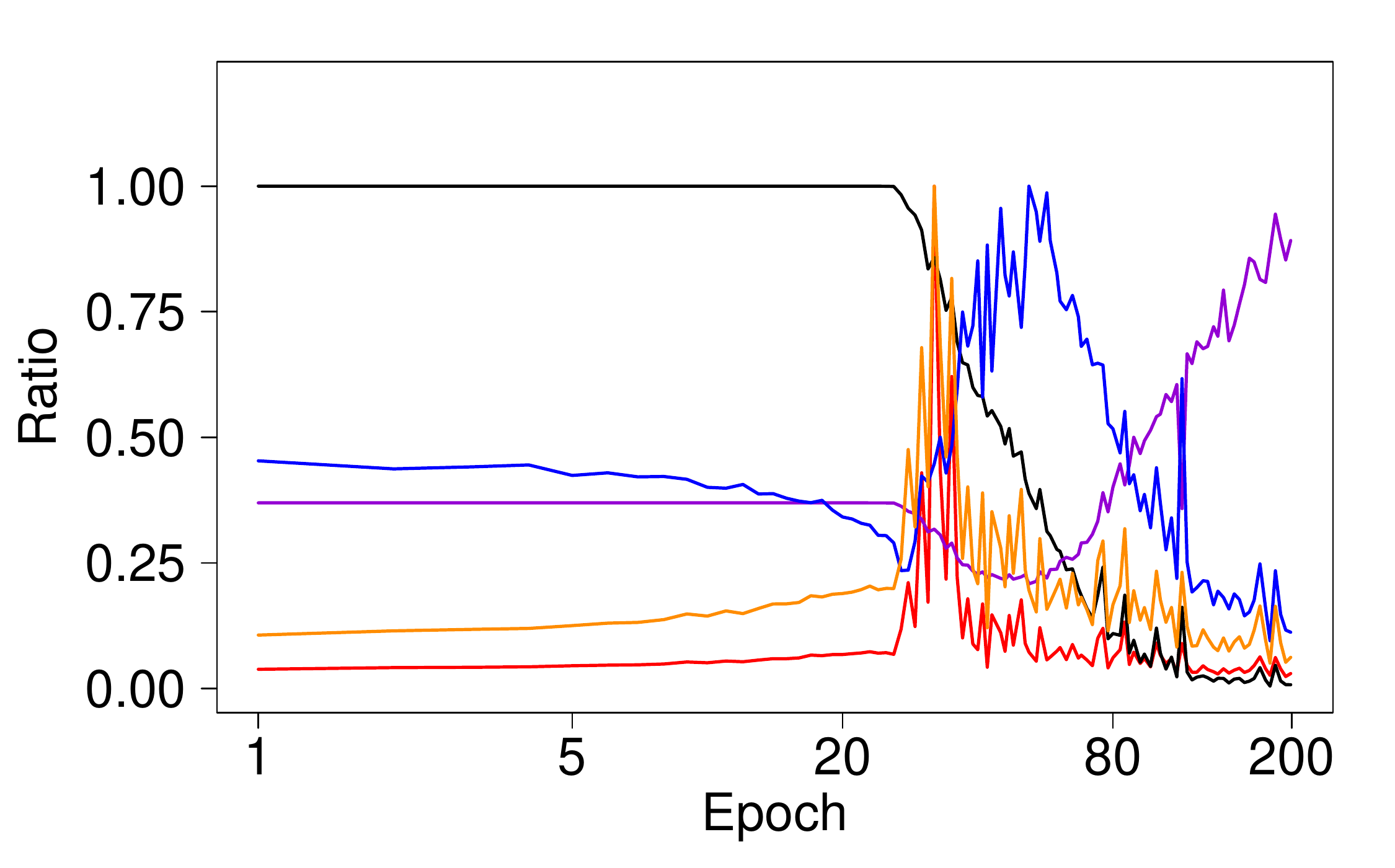}
		\caption{\textbf{CNN}, initialization 3}\label{subfig:logxFP_C10_CNN_Seed3}
	\end{subfigure}
	\begin{subfigure}{0.5\linewidth}
		\centering
		\includegraphics[width=\linewidth]{./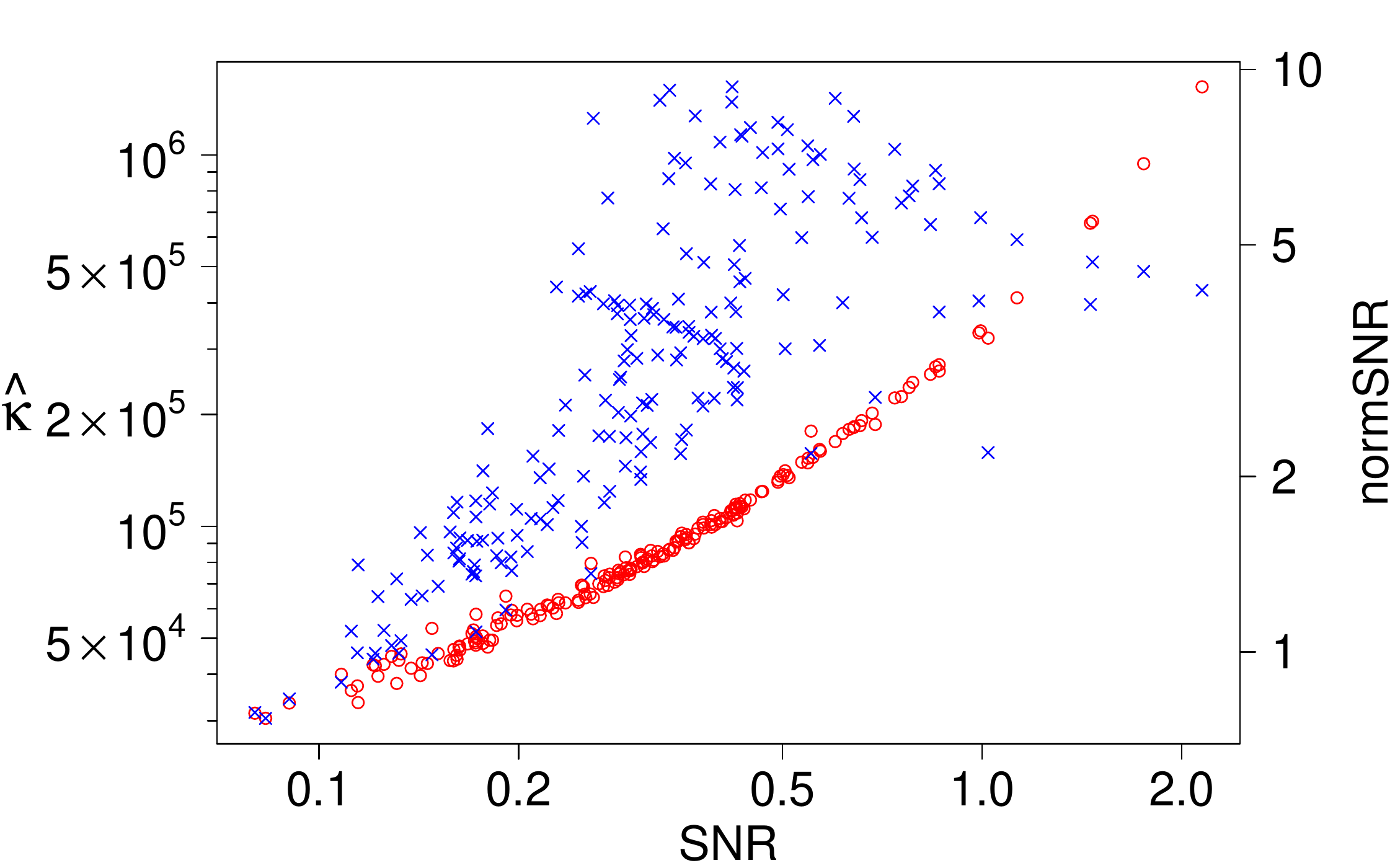}
		\caption{\textbf{CNN}, initialization 3}\label{subfig:scatter_C10_CNN_Seed3}
	\end{subfigure}	
	\begin{subfigure}{0.5\linewidth}
		\centering
		\includegraphics[width=\linewidth]{./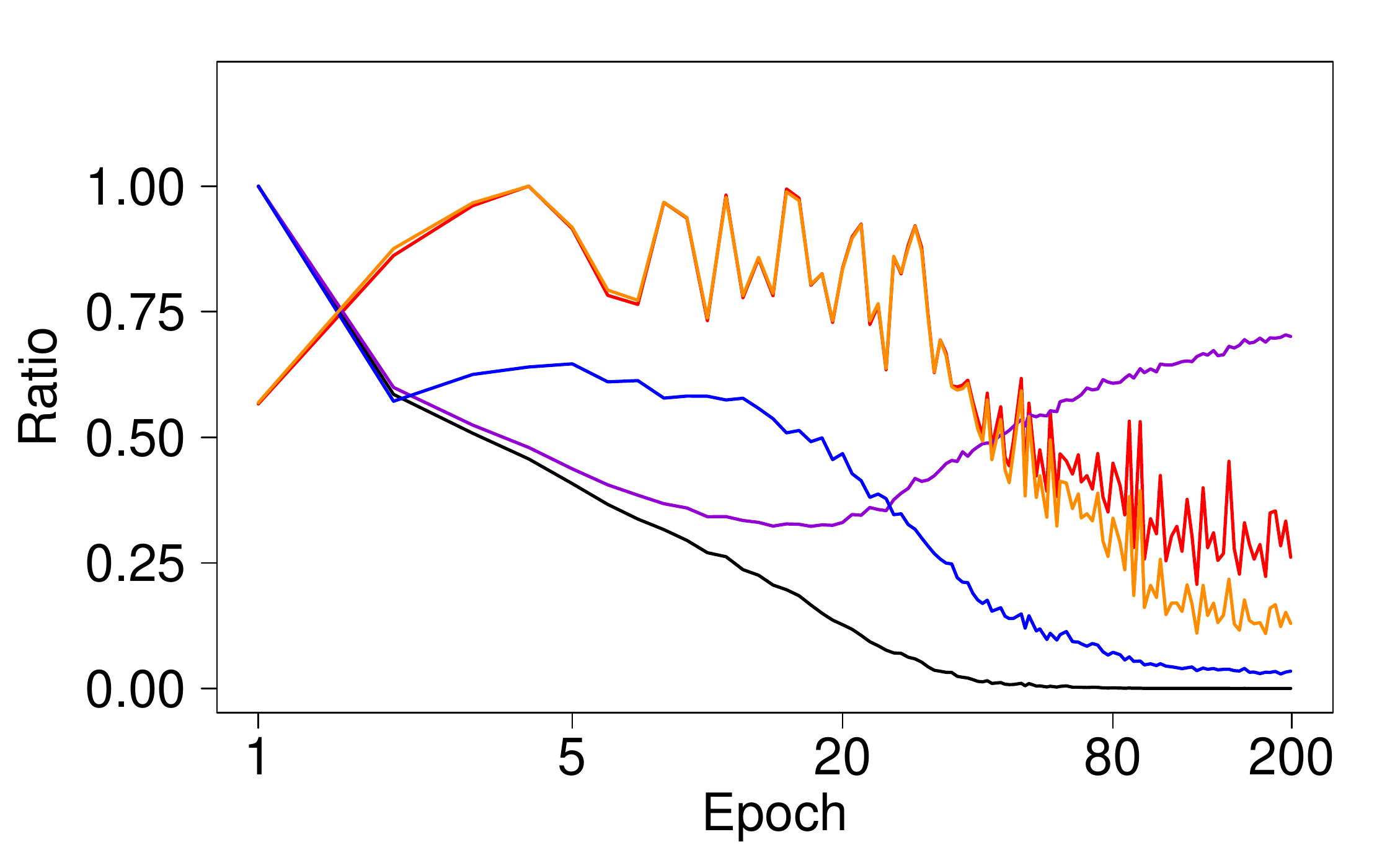}
		\caption{\textbf{CNN+BN}, initialization 3}\label{subfig:logxFP_C10_CNN_BN_Seed3}
	\end{subfigure}		
	\begin{subfigure}{0.5\linewidth}
		\centering
		\includegraphics[width=\linewidth]{./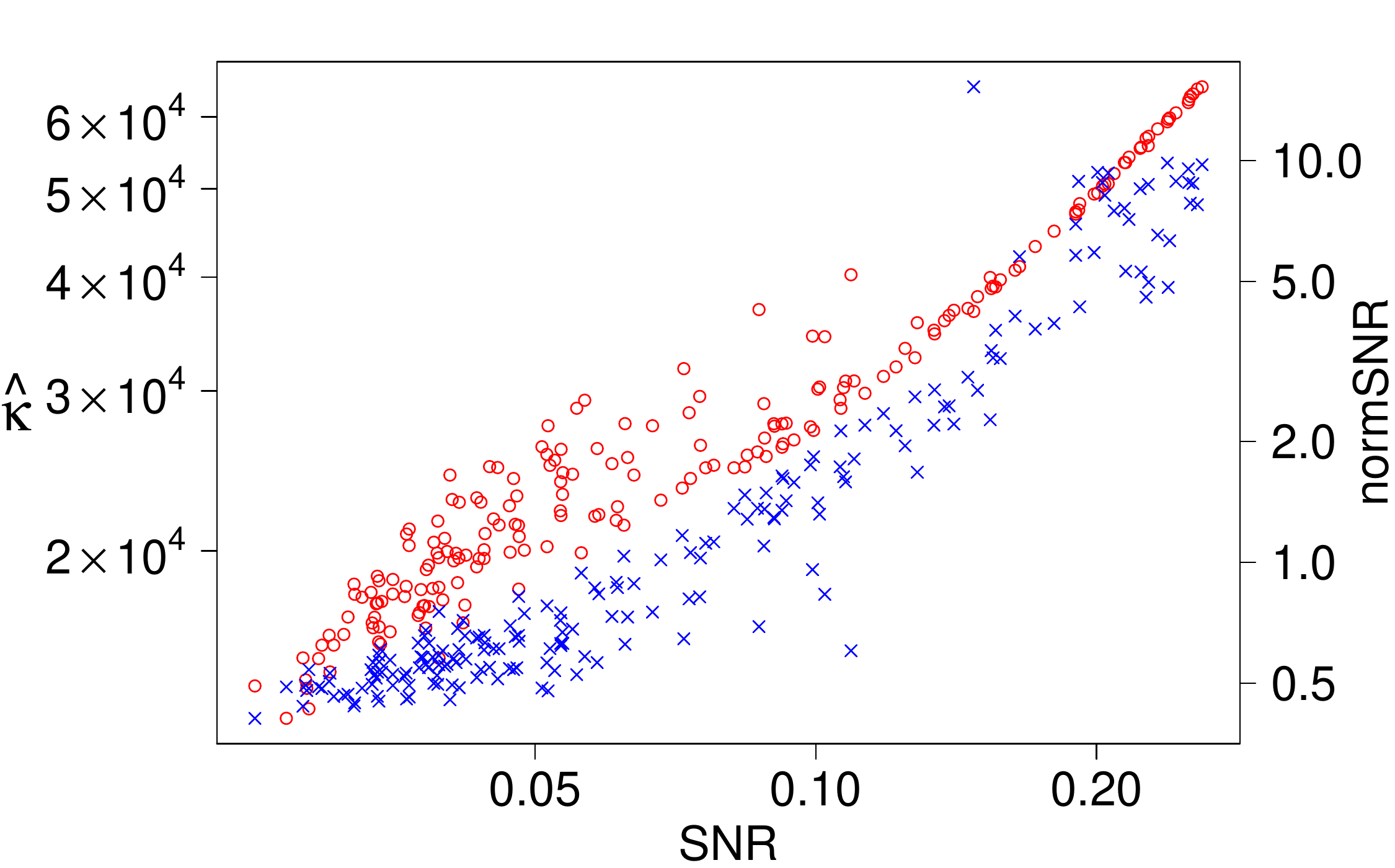}
		\caption{\textbf{CNN+BN}, initialization 3}\label{subfig:scatter_C10_CNN_BN_Seed3}
	\end{subfigure}
	\begin{subfigure}{0.5\linewidth}
		\centering
		\includegraphics[width=\linewidth]{./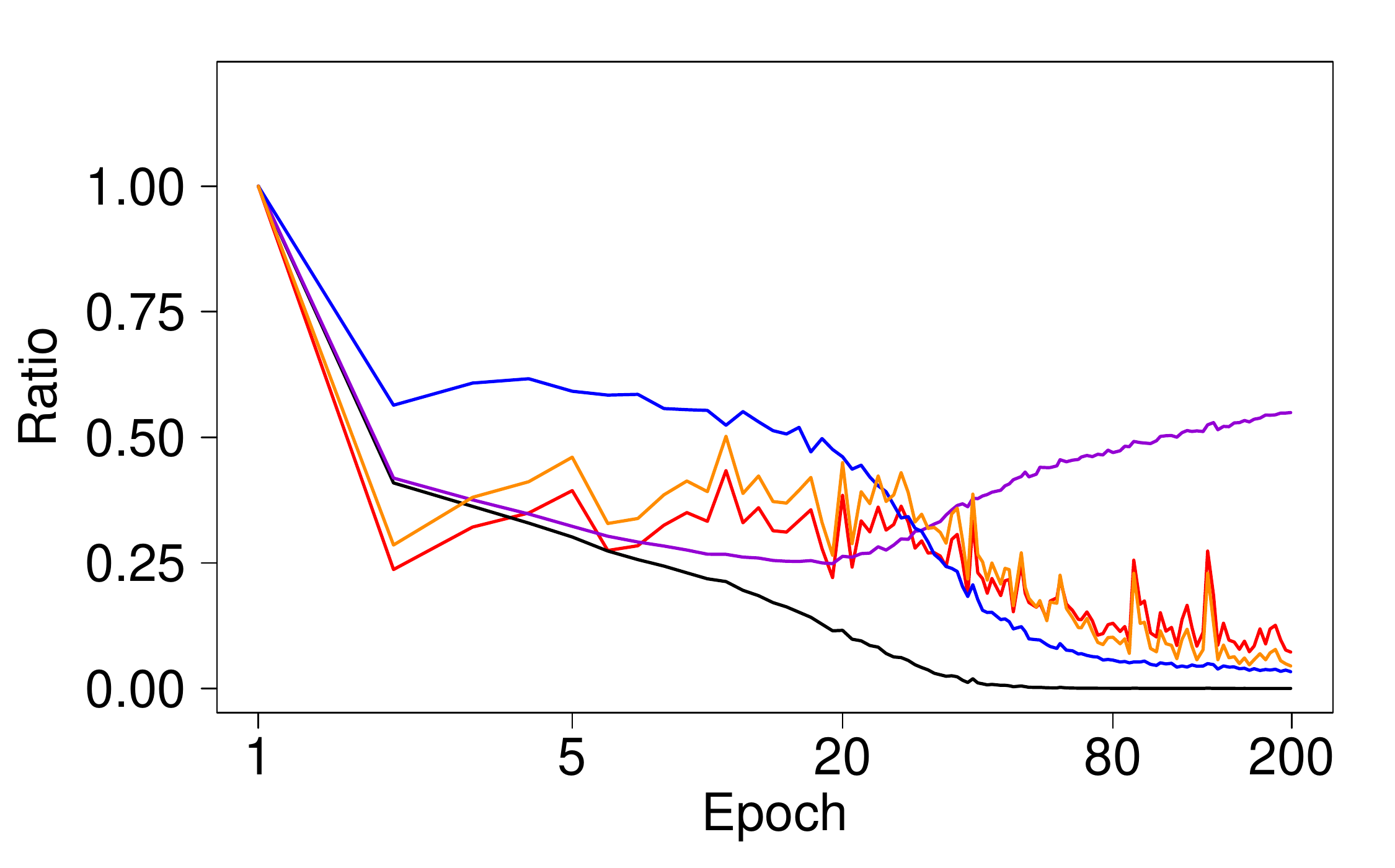}
		\caption{\textbf{CNN+Res+BN}, initialization 3}\label{subfig:logxFP_C10_ResNet_BN_Seed3}
	\end{subfigure}
	\begin{subfigure}{0.5\linewidth}
		\centering
		\includegraphics[width=\linewidth]{./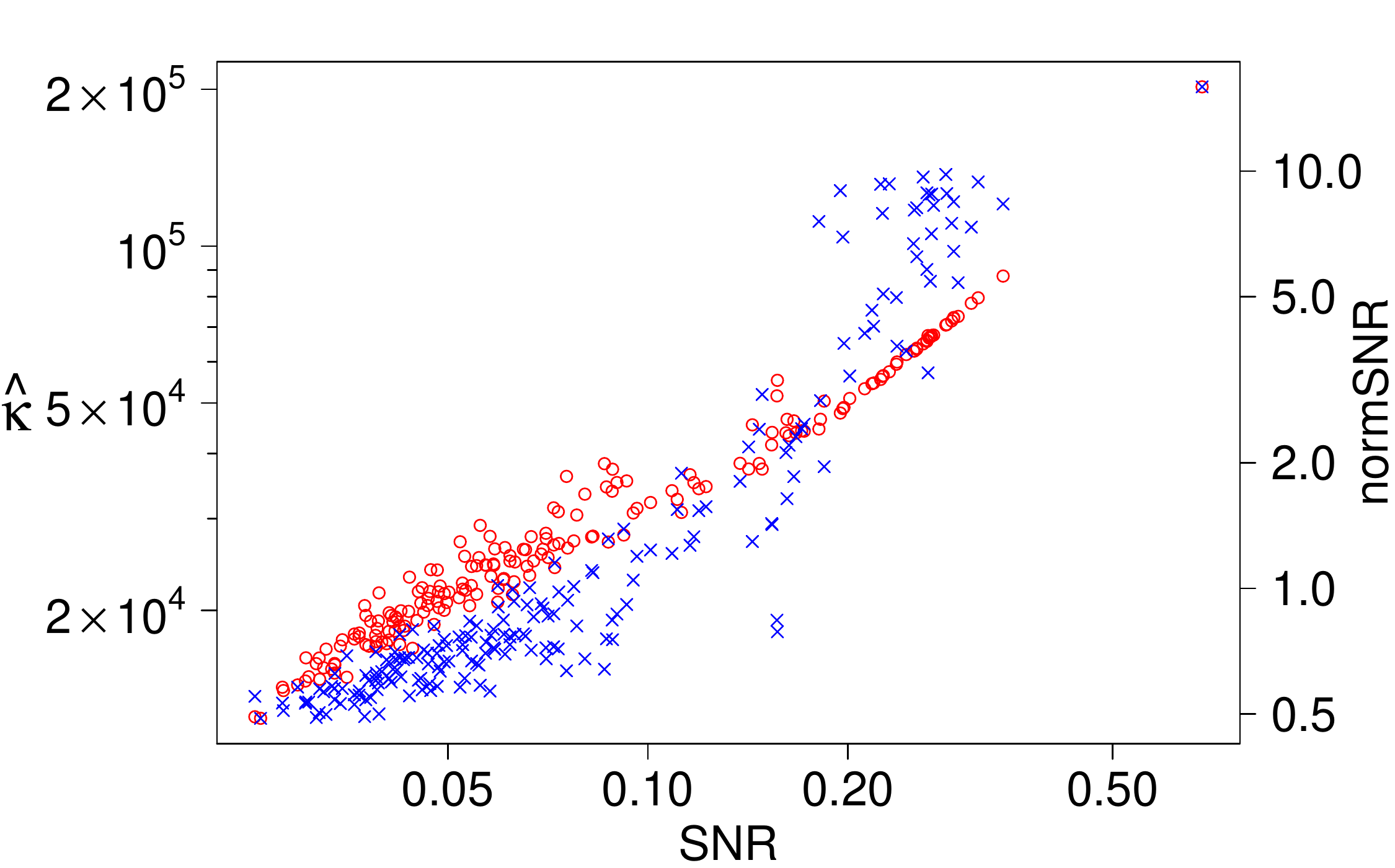}
		\caption{\textbf{CNN+Res+BN}, initialization 3}\label{subfig:scatter_C10_ResNet_BN_Seed3}
	\end{subfigure}
	\begin{subfigure}{0.5\linewidth}
		\centering
		\includegraphics[trim=0cm 3cm 2cm 2cm, clip,width=\linewidth]{./figures/fig6_left_legend.pdf}
	\end{subfigure}
	\begin{subfigure}{0.5\linewidth}
		\centering
		\includegraphics[trim=0cm 3cm 2cm 2cm, clip,width=\linewidth]{./figures/fig6_right_legend.pdf}
	\end{subfigure}
	\caption{(a,c,e) We plot the evolution of the training loss (Train loss), validation loss (Valid loss), inverse of gradient stochasticity (SNR), inverse of gradient norm stochasticity (normSNR) and directional uniformity $\hat\kk$. We normalized each quantity by its maximum value over training for easier comparison on a single plot. In all the cases, SNR (orange) and $\kk$ (red) are almost entirely correlated with each other, while normSNR is less correlated. (b,d,f) We further verify this by illustrating SNR-$\hat\kk$ scatter plots (red) and SNR-normSNR scatter plots (blue) in log-log scales. These plots suggest that the SNR is largely driven by the directional uniformity.}\label{fig:Relation_kSNR_Seed3}
\end{figure}

\begin{figure}[t]
	\begin{subfigure}{0.5\linewidth}
		\centering
		\includegraphics[width=\linewidth]{./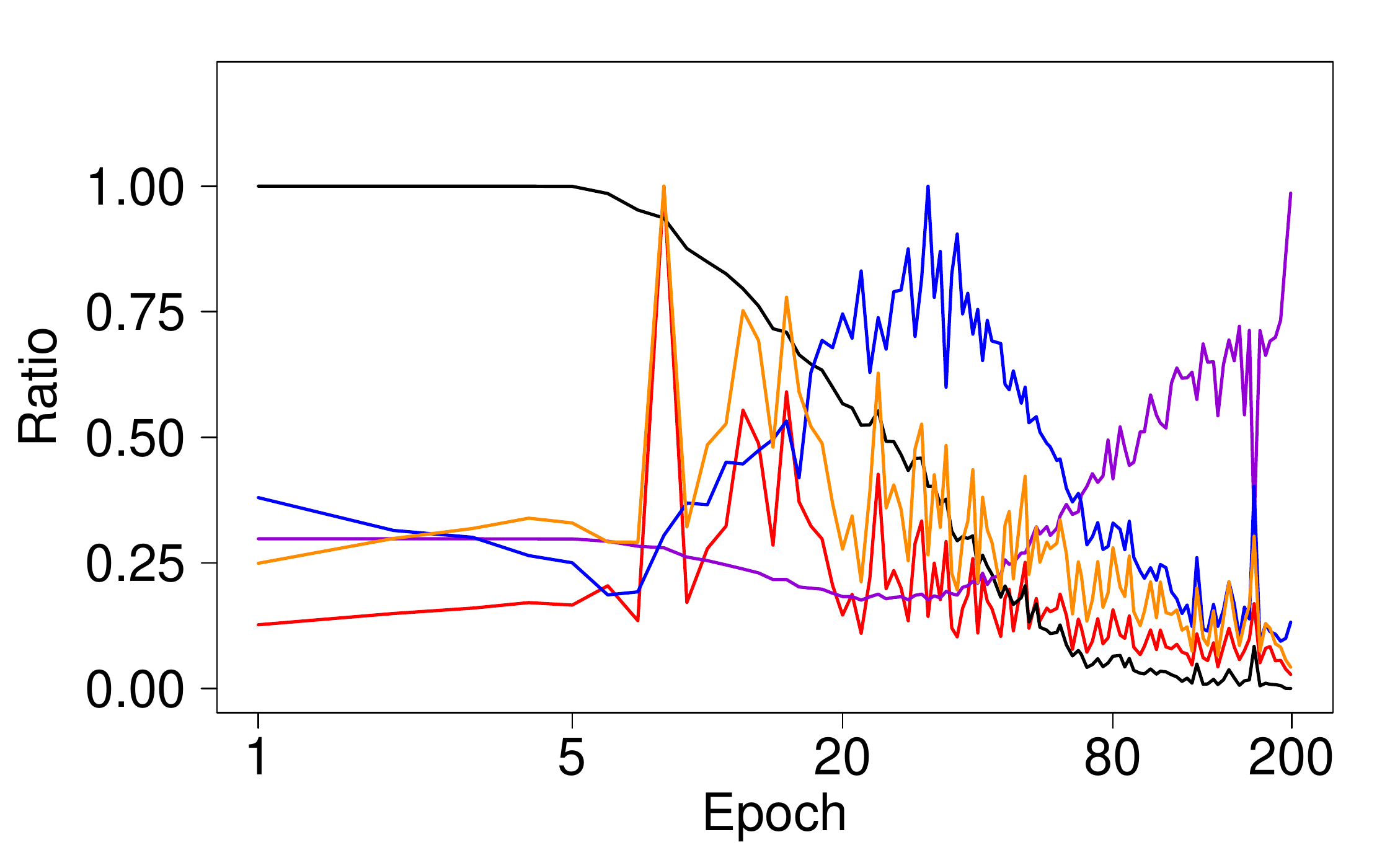}
		\caption{\textbf{CNN}, initialization 4}\label{subfig:logxFP_C10_CNN_Seed4}
	\end{subfigure}
	\begin{subfigure}{0.5\linewidth}
		\centering
		\includegraphics[width=\linewidth]{./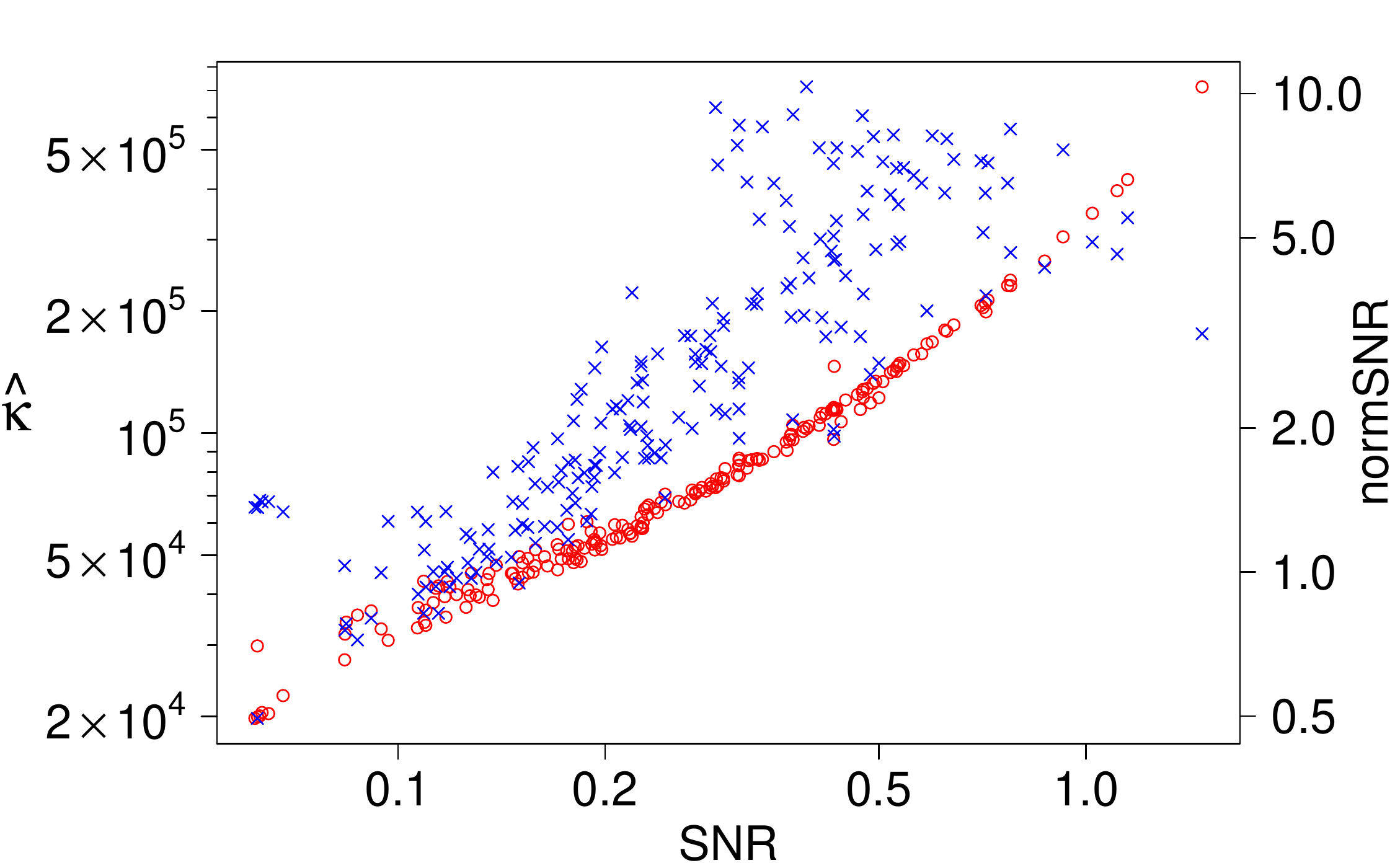}
		\caption{\textbf{CNN}, initialization 4}\label{subfig:scatter_C10_CNN_Seed4}
	\end{subfigure}	
	\begin{subfigure}{0.5\linewidth}
		\centering
		\includegraphics[width=\linewidth]{./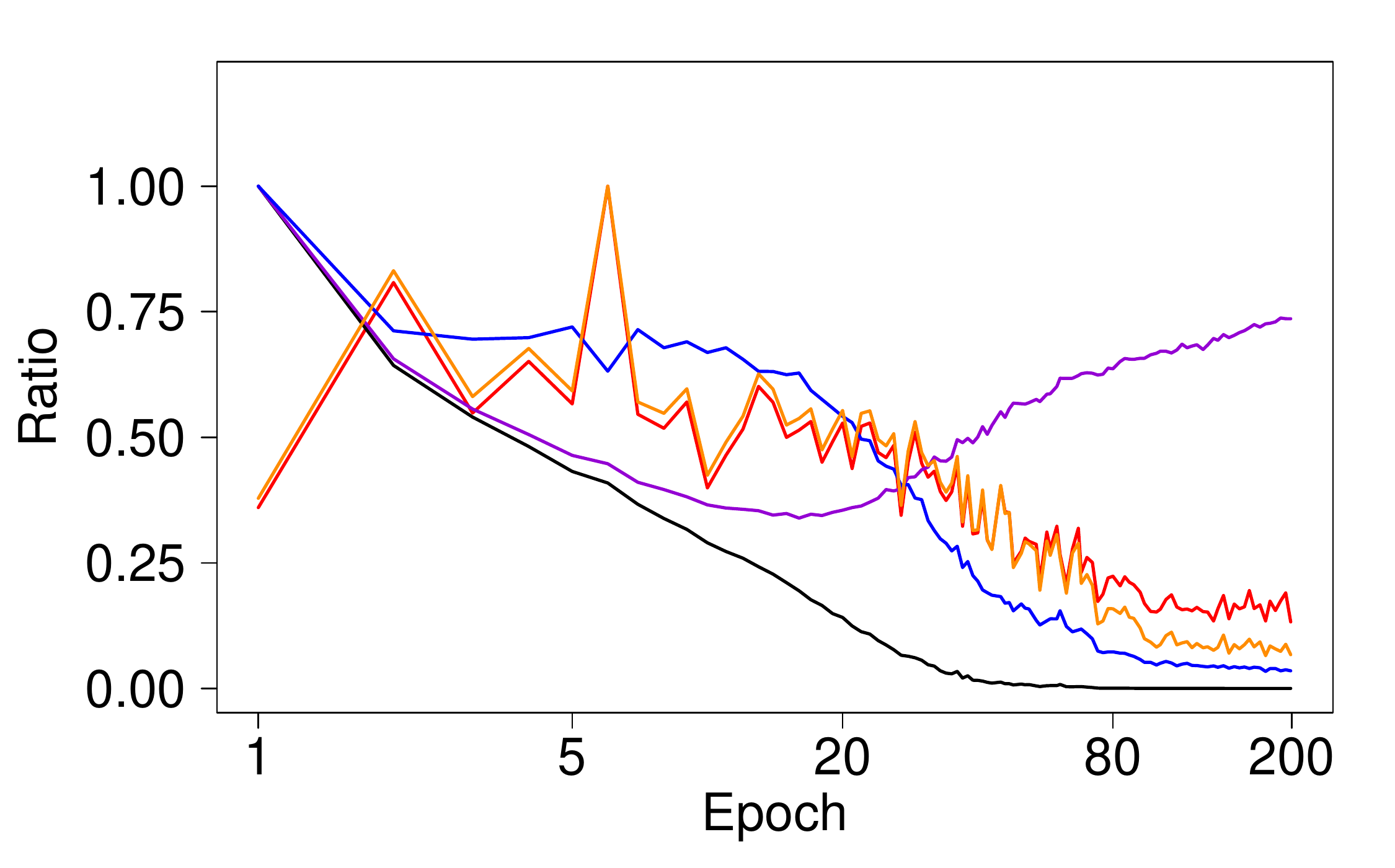}
		\caption{\textbf{CNN+BN}, initialization 4}\label{subfig:logxFP_C10_CNN_BN_Seed4}
	\end{subfigure}		
	\begin{subfigure}{0.5\linewidth}
		\centering
		\includegraphics[width=\linewidth]{./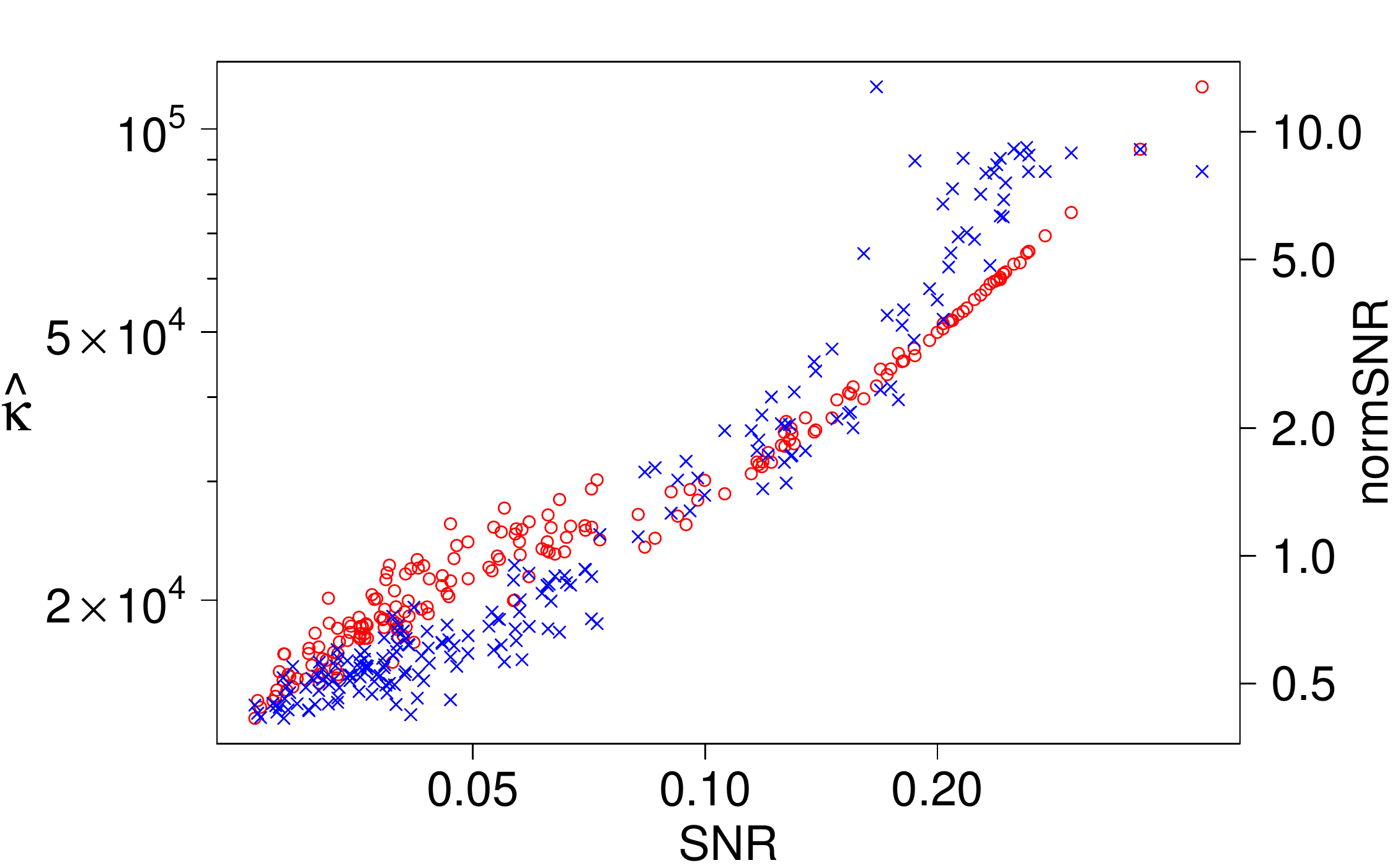}
		\caption{\textbf{CNN+BN}, initialization 4}\label{subfig:scatter_C10_CNN_BN_Seed4}
	\end{subfigure}
	\begin{subfigure}{0.5\linewidth}
		\centering
		\includegraphics[width=\linewidth]{./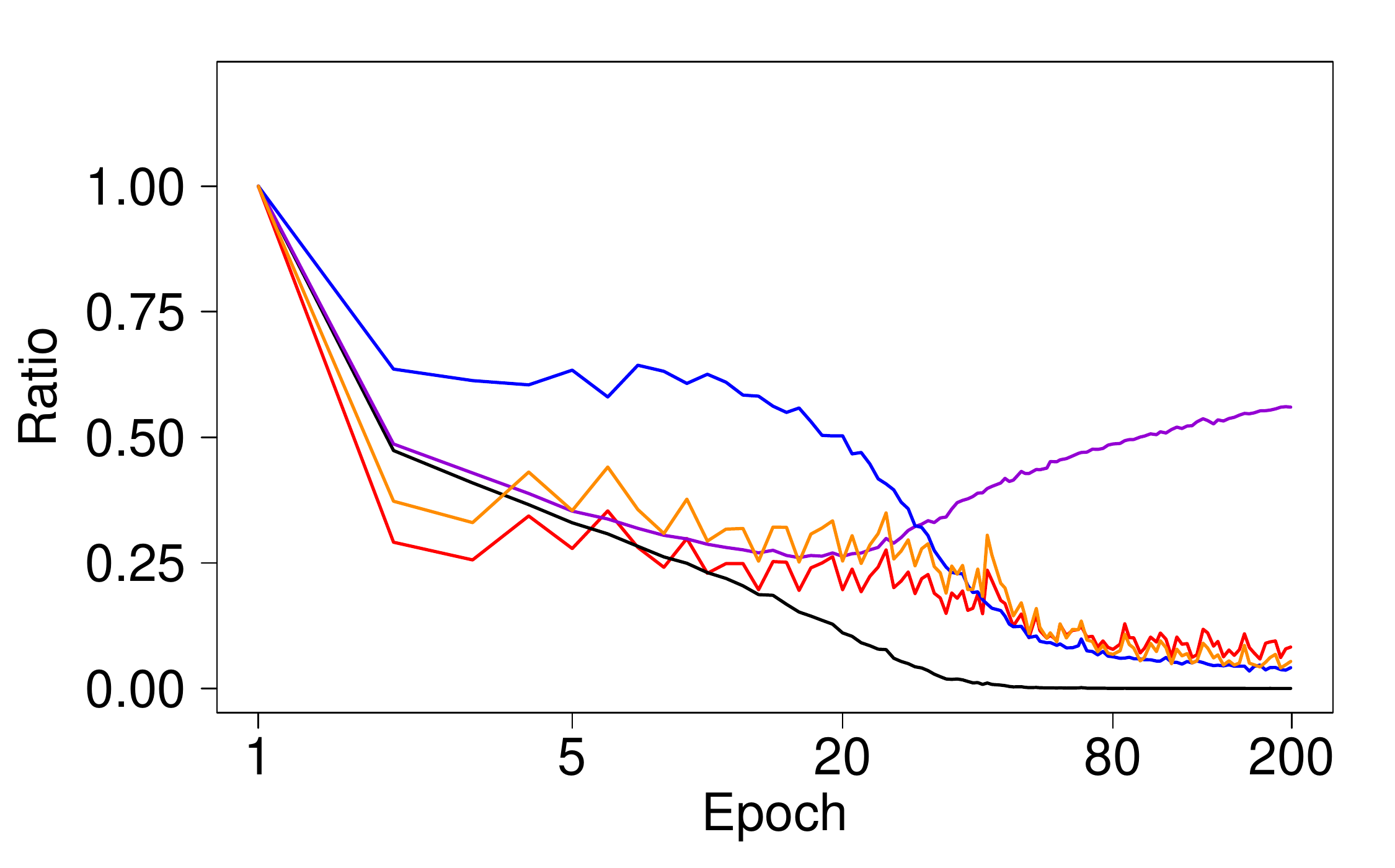}
		\caption{\textbf{CNN+Res+BN}, initialization 4}\label{subfig:logxFP_C10_ResNet_BN_Seed4}
	\end{subfigure}
	\begin{subfigure}{0.5\linewidth}
		\centering
		\includegraphics[width=\linewidth]{./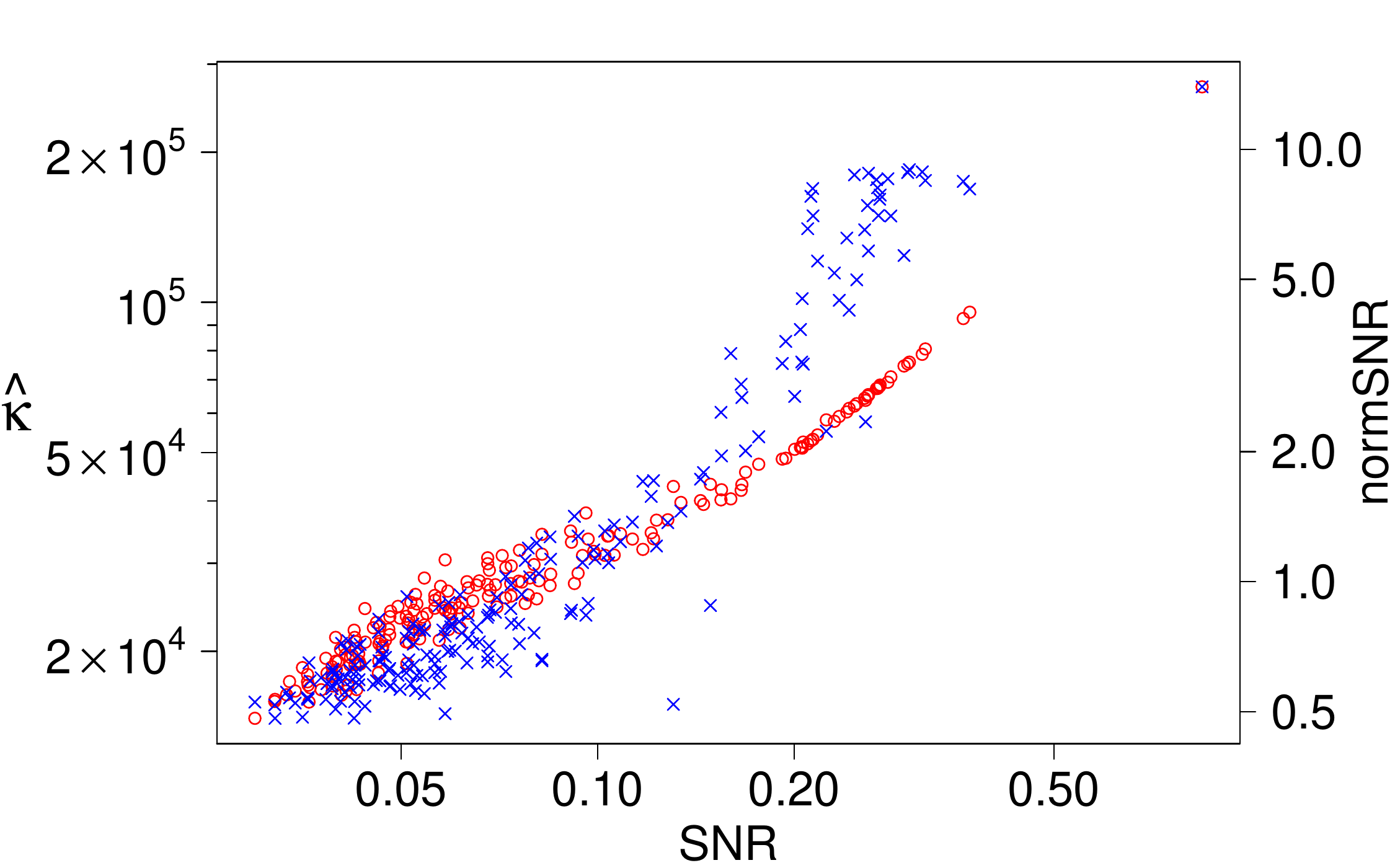}
		\caption{\textbf{CNN+Res+BN}, initialization 4}\label{subfig:scatter_C10_ResNet_BN_Seed4}
	\end{subfigure}
	\begin{subfigure}{0.5\linewidth}
		\centering
		\includegraphics[trim=0cm 3cm 2cm 2cm, clip,width=\linewidth]{./figures/fig6_left_legend.pdf}
	\end{subfigure}
	\begin{subfigure}{0.5\linewidth}
		\centering
		\includegraphics[trim=0cm 3cm 2cm 2cm, clip,width=\linewidth]{./figures/fig6_right_legend.pdf}
	\end{subfigure}
	\caption{(a,c,e) We plot the evolution of the training loss (Train loss), validation loss (Valid loss), inverse of gradient stochasticity (SNR), inverse of gradient norm stochasticity (normSNR) and directional uniformity $\kk$. We normalized each quantity by its maximum value over training for easier comparison on a single plot. In all the cases, SNR (orange) and $\hat\kk$ (red) are almost entirely correlated with each other, while normSNR is less correlated. (b,d,f) We further verify this by illustrating SNR-$\hat\kk$ scatter plots (red) and SNR-normSNR scatter plots (blue) in log-log scales. These plots suggest that the SNR is largely driven by the directional uniformity.}\label{fig:Relation_kSNR_Seed4}
\end{figure}

\end{document}